\newtheorem{thm}{Theorem}
\newtheorem*{prop*}{Proposition}
\newtheorem{prop}{Proposition}
\newtheorem{rmk}{Remark}
\newtheorem*{sketch}{Heuristic argument}
\newtheorem*{just}{Justification}
\newtheorem{statement}{Condition}
\newtheorem*{theorem*}{Theorem}
\newtheorem*{lemma*}{Lemma}
\newtheorem*{property*}{Property}
\newtheorem*{assumption*}{Assumption}
\newcommand{\rnarr}{\textnormal{narr}}
\newcommand{\rwide}{\textnormal{wide}}
\newcommand{\rinter}{\textnormal{inter}}
\title[Embedding Principle]{Embedding Principle: a hierarchical structure of loss landscape of deep neural networks}
 \let\Ginclude@graphics\@org@Ginclude@graphics
\begin{document}
% \footnotetext[1]{Corresponding author.}
%\footnotetext[2]{Corresponding author.}
\maketitle

\begin{abstract}%
   We prove a general Embedding Principle of loss landscape of deep neural networks (NNs) that unravels a hierarchical structure of the loss landscape of NNs, i.e., loss landscape of an NN \emph{contains} all critical points of all the narrower NNs. This result is obtained by constructing a class of critical embeddings which map any critical point of a narrower NN to a critical point of the target NN with the same output function. By discovering a wide class of general compatible critical embeddings, we provide a gross estimate of the dimension of critical submanifolds embedded from critical points of narrower NNs.  We further prove an irreversiblility property of any critical embedding that the number of negative/zero/positive eigenvalues of the Hessian matrix of a critical point may increase but never decrease as an NN becomes wider through the embedding. Using a special realization of general compatible critical embedding, we prove a stringent necessary condition for being a ``truly-bad'' critical point that never becomes a strict-saddle point through any critical embedding. This result implies the commonplace of strict-saddle points in wide NNs, which may be an important reason underlying the easy optimization of wide NNs widely observed in practice.
\end{abstract}

\begin{keywords}%
  neural network, loss landscape, critical point, embedding principle%
\end{keywords}

\section{Introduction}
The loss landscape of a deep neural network (NN) is important to both its optimization dynamics and generalization performance, hence is a key issue in deep learning theory. 
It has been realized for a long time that it is important to quantify exactly how the loss landscape looks like \citep{weinan2020towards,sun2020global}.
This problem is difficult since various visualization methods show that the NN loss landscape is very complicated \citep{li2018visualizing,skorokhodov2019loss}. Moreover, its non-convexity, high dimensionality and the dependence on data, model and the specific form of loss function  make it very difficult to obtain a general understanding through empirical study. 
Therefore, though it has been extensively studied over the years, it remains an open problem to provide a clear picture about the general structure of a DNN loss landscape, e.g., critical points/submanifolds, their output functions and other properties.

Our work is inspired by the following empirical observations. From the aspect of optimization, it is often observed that wider NNs are easier for training. This phenomenon not only holds in a neural tangent kernel (NTK) regime \citep{jacot2018neural}, where the gradient descent training can find the global minimum with a linear convergence rate \citep{du2019gradient,chizat_global_2018,arora2019exact,zou2018stochastic}, but also happens in highly nonlinear regimes far beyond NTK \citep{luo2021phase}.  From the aspect of generalization, the puzzle that over-parameterized NNs often generalize well seems to contradict the conventional learning theory \citep{breiman1995reflections,zhang2016understanding}. The frequency principle \citep{xu_training_2018,xu2019frequency,rahaman2018spectral,zhang2021linear,luo2019theory} shows that NNs, over-parameterized or not, tend to fit the  training data by a low-frequency function, which suggests that the learned function by an NN is often of much lower complexity than the NN's capacity. Specifically, with small initialization, e.g., in a condensed regime, weights of an NN are empirically found to condense on isolated directions resulting in an output function mimicking that of a narrower NN \citep{luo2021phase,maennel2018gradient}. These observations raise a question that in which sense learning of a wide NN is not drastically different from a narrower NN despite potentially huge difference in their numbers of parameters. From the aspect of pruning, empirical works propose a ``lottery ticket hypothesis'' that a substantially smaller sub-network can achieve the same accuracy as the original large network \citep{frankle2018lottery}. However, it is not yet clear about the mechanism of redundancy in a learned wide NN, which makes a drastic pruning possible in practice.

% motivating problems:\\
% (i) Optimization: Why wider NNs are easier to train.\\
% (ii) Training: in which sense a wider NN should not be regarded as a drastically different model comparing to a narrower NN?\\
% (iii) Pruning: what is the mechanism of redundancy that allows drastic pruning potential of a wide NN?\\

All above empirical observations, though relevant to different aspects of NN, are in essence pointing towards an intrinsic similarity between narrow and wide NNs. 
% However,
% other than their different approximation/representation capabilities, where the function space of a wider NN contains the function space of a narrower NN, we do not know much about other relations.
In this work, focusing on the  loss landscape, we address the following problem: What are the relations of critical points and the corresponding output functions of loss landscape among NNs with different widths. The significance of studying the critical points and their output functions of the NN loss landscape is as follows. From the optimization perspective, NN parameters trained by gradient descent provably converge to a critical point, which in general is not necessarily a global minimum or local minimum. Moreover, even for these saddle points which can be escaped, e.g., strict-saddle points \citep{lee2019first}, they may still attract the training trajectory, contributing to the implicit regularization of NNs. We are specifically interested in output functions corresponding to critical points, named as \emph{critical functions} for convenience. Studying these critical functions that potentially attract the learning of NN is clearly important for a deeper understanding of the learning process of an NN.

Our key finding in this work is the following general principle about critical points/functions of NN loss landscape intuitively stated as follows:
\begin{center}
\emph{\textbf{Embedding principle}: the loss landscape of an NN \emph{contains} all critical points/functions of all the narrower NNs.}
\end{center}
The Embedding Principle shows that any NN loss landscape contains a hierarchical structure of critical points/functions with different complexities from NNs of different widths. Specifically, it ensures existence of ``simple'' critical functions that can be represented by narrow NNs. Therefore, combining with the phenomenon of Frequency Principle and condensation, we conjecture that nonlinear training of NNs may be implicitly biased towards these ``simple'' critical functions. We will carefully look into this conjecture in our future works. 

To prove the Embedding Principle, we first construct one-step critical embeddings which map any parameters of a narrow network to that of an one-neuron wider NN preserving the output function and criticality. With these embeddings,  critical points of a narrow network loss landscape is mapped to $1$-d critical affine subspaces of an one-neuron wider network loss landscape with the same output function. These one-step critical embeddings are constructed by adding a null neuron or splitting an existing neuron. 
By composition of one-step embeddings, any critical point of a narrow network loss landscape can be mapped to a critical point of any wider network loss landscape preserving the output function. Importantly, we further propose a wide class of general compatible critical embeddings, where one-step embeddings or their composition are its special cases. Note that, all critical points of a wide NN embedded from a critical point of a narrower NN by all possible general compatible critical embeddings, form high-dimensional critical submanifolds which in general are not affine subspaces for three-layer or deeper NNs.

% Establishing a thorough theoretical understanding of loss landscape of deep neural networks (NNs) is essential for developing the deep learning theory. From the aspect of optimization, loss landscape, especially whether bad local-min exists, is the key to convergence as well as its rate to a global-min. Moreover, for an over-parameterized NN widely used in application, training with different hyperparameters may converge to different global-min, yielding different generalization performance. For example, in an NTK regime of large initialization, NN converges linearly at the vicinity of its initialization.

The critical embeddings naturally link critical points of NNs of different widths, thus providing a means to track how properties of these critical points may change as the width of the NN increases. Using these critical embeddings as a tool, we obtain rich information about the general structure of an NN loss landscape. 

We show that the degeneracy of a critical point substantially increases when it is embedded to a wider network, due to the fact that a critical point can be mapped to a high-dimensional critical submanifold through a class of critical embeddings. This degeneracy of critical points arises from the neuron redundancy of the wide NN in representing certain simple critical functions from narrower NNs, which is different from over-parameterization induced degeneracy studied in \cite{cooper2018loss}. We also study the property of Hessian of critical points through critical embedding, e.g., the number of its negative eigenvalues, which determines whether the corresponding critical point is a strict-saddle that enables easy optimization \citep{lee2019first}. We prove an irreversiblility  of critical embedding that the number of negative eigenvalues of Hessian matrix may increase but never decrease as an NN becomes wider through critical embedding. Moreover, we introduce a notion of ``truly-bad'' critical point which never becomes a strict-saddle point through any critical embedding. We prove a stringent necessary condition for being a ``truly-bad'' critical point that requires an important ingredient of its Hessian matrix being a zero matrix.  This result implies the commonplace of strict-saddle points in the high-dimensional critical submanifolds of wide NNs, which may be an important reason underlying the easy optimization of wide NNs widely observed in practice.

In summary, the following general understanding of an NN loss landscape is obtained by the embedding principle in this work:\\
(i) It contains a hierarchical structure of critical points/functions with different complexities from that of all narrower NNs; \\
(ii) Critical functions from narrower NNs in general forms a high-dimensional critical submanifold with a gross estimate of the dimension: $K+\sum_{k\in[L]} K_l K_{l-1}$, where $K_l$ is the difference in neuron number in layer $l$ between the target NN and the narrower NN.\\
(iii) If it has critical points other than the global minima that are not strict-saddle points, they mostly can become strict-saddle points in wider NNs through critical embedding, which means the optimization difficulty results from these critical points can be alleviated by simply using a wider NN.

\section{Related works}
This work is an comprehensive extension of our previous conference paper \cite{zhang2021embedding}, in which we study  one-step embeddings and their multi-step composition. Similar results on composition embedding are studied in other works \citep{fukumizu2000local,fukumizu2019semi,csimcsek2021geometry}.

Simple gradient-descent-based optimization on the complex loss landscape of NN \citep{skorokhodov2019loss,weinan2020towards} often finds solutions that generalize well. Many works study the geometry of the NN loss landscape at critical points in relation to its generalization ability. For example, empirical works show that SGD \citep{keskar2017large} and dropout \citep{zhang2021variance} training can find a flat minimizer, which may explain why such stochastic training can find solution that generalize better. 
\cite{wu2017towards} further suggest that the volume of basin of attraction of good (flat) minima may dominate over that of poor (sharp) minima in practical problems. \cite{he2019asymmetric} show that at a local minimum there exist many asymmetric directions such that the loss increases abruptly along one side, and slowly along the opposite side. \cite{ding2019sub} prove that for any multi-layer network with generic input data and non-linear activation functions, sub-optimal local minima can exist, no matter how wide the network is.  When the network width increases towards infinity, the loss landscape may become simpler and the training can avoid spurious valleys with high probability in an over-parameterized regime \citep{venturi2019spurious}.  In an extremely over-parameterized regime with a large initialization, i.e., the linear regime identified in \cite{luo2021phase} with the NTK regime as its special case \citep{jacot2018neural}, the gradient descent training can find the global minimum with a linear convergence rate \citep{luo2021phase,du2019gradient,chizat_global_2018,arora2019exact,zou2018stochastic}. 
 
The starting point of this work originates from our work in \cite{luo2021phase}, where we identify a highly nonlinear condensed regime far beyond the NTK regime that weights condense in isolated directions during the training. Moreover, neural networks of different width often exhibit similar condensed behavior, e.g., stagnating at similar loss with almost the same output function, which is illustrated in experiments in our conference paper \cite{zhang2021embedding}. The condensation is a highly nonlinear feature learning process important to implicit regularization and generalization of NNs. The condensation transforms a large network to a network of only a few effective neurons, leading to an output function with low complexity. Such learning process is consistent with another line of research, that is, the complexity of NN output gradually increases during the training \citep{arpit2017closer,xu_training_2018,xu2019frequency,rahaman2018spectral,zhang2021linear,luo2019theory,kalimeris2019sgd,goldt2020modeling,he2020assessing,mingard2019neural,jin2020quantifying}. For example, the Frequency Principle \citep{xu_training_2018,xu2019frequency}  states that NNs often fit target functions from low to high frequencies during the training. A series of works study the mechanism of condensation at an initial training stage, such as for ReLU network \citep{maennel2018gradient,pellegrini2020analytic} and network with continuously differentiable activation functions \citep{xu2021towards}.

This work in some sense serves as our attempt to uncover the theoretical structure underlying the condensation phenomenon from the perspective of loss function by proving a general Embedding Principle. In another aspect, the condensation phenomenon also confirms the value of Embedding Principle in understanding the highly nonlinear training behavior in practice.  

The Embedding Principle provides a structural mechanism underlying the degeneracy as a very common property for critical points  \citep{choromanska2015loss,sagun2016singularity}. Thus it complements the understanding that global minima of NNs typically form a high dimensional manifold due to over-parameterization \citep{cooper2018loss}.

\section{Preliminary}
\subsection{Deep neural networks}
Consider $L$-layer ($L\geq 2$) fully-connected NNs with a general differentiable activation function. We regard the input as the $0$-th layer and the output as the $L$-th layer. Let $m_l$ be the number of neurons in the $l$-th layer. In particular, we also set $m_0=d$ and $m_L=d'$. For any $i,k\in \sN$ and $i<k$, we denote $[i:k]=\{i,i+1,\ldots,k\}$. In particular, we denote $[k]:=\{1,2,\ldots,k\}$. For a matrix $\mA$, we use $(\mA)_{i,j}$  to denote its $(i, j)$-th entry. We will also  define $(\mA)_{i,[j:k]} :=
\left((\mA)_{i,j} , (\mA)_{i,j+1}, \cdots , (\mA)_{i,k}\right)$ as part of the $i$-th row vector. Similarly, $(\mA)_{[j:k],i}$ is a part of the $i$-th
column vector. For a vector $\va$, we use $(\va)_{i}$  to denote its $i$-th entry, we   also  define $(\va)_{[j:k]} :=
\left((\va)_{j} , (\va)_{j+1}, \cdots , (\va)_{k}\right)$ as part of the  vector.
Given weights $\mW^{[l]}\in \sR^{m_l\times m_{l-1}}$ and bias $\vb^{[l]}\in\sR^{m_{l}}$ for $l\in[L]$, we define the collection of parameters $\vtheta$ as a $2L$-tuple (an ordered list of $2L$ elements) whose elements are matrices or vectors
\begin{equation}
	\vtheta:=\Big(\vtheta|_1,\cdots,\vtheta|_L\Big):=\Big(\mW^{[1]},\vb^{[1]},\ldots,\mW^{[L]},\vb^{[L]}\Big).
\end{equation}
where the $l$-th layer parameters of $\vtheta$ is the ordered pair $\vtheta|_{l}:=\Big(\mW^{[l]},\vb^{[l]}\Big),\quad l\in[L]$. 
We may misuse our notations and do not distinguish $\vtheta$ from its vectorization $\mathrm{vec}(\vtheta)\in \sR^M$ with $M:=\sum_{l=0}^{L-1}(m_l+1) m_{l+1}$. Moreover, we call the collection of tuples of length $2L$ {\textbf{the tuple class}}, whose elements are matrices $\{\mW^{[l]}\}_{l=1}^L$ with $\mW^{[l]}\in \sR^{m_l\times m_{l-1}}$,  or vectors $\{\vb^{[l]}\}_{l=1}^L$ with $\vb^{[l]}\in\sR^{m_{l}}$, and denoted by $\mathrm{Tuple}_{\{m_0,\cdots,m_L\}}$, i.e.,
\[\mathrm{Tuple}_{\{m_0,\cdots,m_L\}}:=\left\{ \vtheta | \vtheta=\Big(\mW^{[1]},\vb^{[1]},\ldots,\mW^{[L]},\vb^{[L]}\Big),~~\mW^{[l]}\in \sR^{m_l\times m_{l-1}},~\vb^{[l]}\in\sR^{m_{l}},~l\in[L]\right\}.\]
Since the tuple class inherits the structure of Euclidean spaces, obviously it is  a linear space.
We set $\vzero$ as the zero element in the tuple class, i.e.
\[ \vzero=\Big(\vzero_{m_1\times m_{0}},\vzero_{m_1\times 1},\ldots,\vzero_{m_L\times m_{L-1}},\vzero_{m_L\times 1}\Big)\in \mathrm{Tuple}_{\{m_0,\cdots,m_L\}},\]
and we abuse the notation $\vzero$ from time to time to denote zero elements belonging to different tuple classes.  

We further define  the upper bracket $[L-1]$ by limiting ourselves to the first $2L-2$ element of the tuple, i.e., 
\begin{equation}
	\vtheta^{[L-1]}:=(\mW^{[1]},\vb^{[1]},\cdots,\mW^{[L-2]},\vb^{[L-2]},\mW^{[L-1]},\vb^{[L-1]})\in\mathrm{Tuple}_{\{m_0,\cdots,m_{L-2},m_{L-1}\}}.
\end{equation}

Given parameters $\vtheta$, the neural network function $\vf_{\vtheta}(\cdot)$ can be defined in a recursive way. First, we write $\vf^{[0]}_{\vtheta}(\vx):=\vx$ for the input $\vx\in\sR^d$, then for $l\in[L-1]$, $\vf^{[l]}_{\vtheta}$ is defined recursively as
\[\vf^{[l]}_{\vtheta}(\vx):=\sigma (\mW^{[l]} \vf^{[l-1]}_{\vtheta}(\vx)+\vb^{[l]}),\]
where $\sigma(\cdot):\sR\to\sR$ is the activation function applied coordinate-wisely, with slight abuse of notation.
Finally, we denote
\begin{equation}
	\vf_{\vtheta}(\vx):=\vf(\vx,\vtheta):=\vf^{[L]}_{\vtheta}(\vx):=\mW^{[L]} \vf^{[L-1]}_{\vtheta}(\vx)+\vb^{[L]},
\end{equation}
and for simplicity, sometimes we may drop the subscript $\vtheta$ in $\vf^{[l]}_{\vtheta}$ for $l\in[0:L]$.

\subsection{Loss function}
The set of training data  is denoted by  $S:=\left\{(\vx_i,\vy_i)\right\}_{i=1}^n$, where $\vx_i\in\sR^d$, $\vy_i\in \sR^{d'}$. Here we assume that there exists an unknown function $\vf^*(\cdot):\sR^d\to\sR^{d'}$ satisfying  $\vf^*(\vx_i)=\vy_i$ for $i\in[n]$. 
The empirical risk reads as
\begin{equation}\label{eq...text...empricalloss}
	\RS(\vtheta):=\frac{1}{n}\sum_{i=1}^n\ell(\vf(\vx_i,\vtheta),\vy_i):=\Exp_S\ell(\vf(\vx,\vtheta),\vf^*(\vx)).
\end{equation}
where the expectation $\Exp_S$ is defined  for any function $h(\cdot):\sR^d\to \sR$ as \[\Exp_S h(\vx):=\frac{1}{n}\sum_{i=1}^n h(\vx_i),\]  and the loss function    $\ell(\cdot,\cdot):\sR^{d'}\times \sR^{d'}\to \sR$ in \eqref{eq...text...empricalloss} is differentiable  in both variables, and the derivative of $\ell(\cdot,\cdot)$ with respect to its first argument is denoted by $\nabla\ell(\vy,\vy^*)$.  In this paper, we always take derivatives/gradients of $\ell(\cdot,\cdot)$ in its first argument with respect to any possible parameter.
% We consider the gradient flow of $\RS(\vtheta)$ as the training dynamics, i.e., \[\frac{\D \vtheta}{\D t} = -\nabla_{\vtheta} \RS(\vtheta),~~\vtheta(0) = \vtheta_0,\] where $\vtheta_0$ is given. 

For each $l\in[L]$, we define the error vectors as \[\vz_{\vtheta}^{[l]}:=\nabla_{\vf^{[l]}}\ell,\] and the feature gradients as
\[
\vg_{\vtheta}^{[L]}:=\mathbf{1},~\text{and}~~\vg^{[l]}_{\vtheta} :=\sigma^{(1)}\left(\mW^{[l]} \vf^{[l-1]}_{\vtheta}+\vb^{[l]}\right)~\text{for}~l\in[L-1],
\] 
where we use $\sigma^{(1)}(\cdot)$ for the first derivative of $\sigma(\cdot)$. Moreover, we call $\vf^{[l]}_{\vtheta}$ the feature vectors, and we denote the collections of feature vectors, feature gradients, and error vectors $\{\vz^{[l]}_{\vtheta}\}_{l=1}^L$ respectively by 
\[
\vF_{\vtheta}:= \{\vf^{[l]}_{\vtheta}\}_{l=1}^L,
~\vG_{\vtheta}
:= \{\vg^{[l]}_{\vtheta}\}_{l=1}^L,
~\vZ_{\vtheta}
:= \{\vz^{[l]}_{\vtheta}\}_{l=1}^L.
\] 
Moreover, using backpropagation, we can derive the following relations concerning the above quantities  
\begin{equation}\label{eq..text...BackwardProp}
	\begin{aligned}
		\vz_{\vtheta}^{[L]}
		&=\nabla\ell,\\
		\vz_{\vtheta}^{[l]}
		&= (\mW^{[l+1]})^\T\left(\vz_{\vtheta}^{[l+1]}\circ\vg_{\vtheta}^{[l+1]}\right),\quad l\in[L-1],\\
		\nabla_{\mW^{[l]}}\ell
		&= \left(\vz_{\vtheta}^{[l]}\circ\vg_{\vtheta}^{[l]}\right)(\vf_{\vtheta}^{[l-1]})^\T,\quad l\in[L],\\
		\nabla_{\vb^{[l]}}\ell
		&= \vz_{\vtheta}^{[l]}\circ\vg_{\vtheta}^{[l]},\quad l\in[L],
	\end{aligned}
\end{equation}

where we use $\circ$ for the Hadamard product~\citep{horn2012matrix} of two matrices of the same dimension.  

Specifically, for simplicity of gradient computation, we define another group of error vectors for $l\in[L]$   
\[
\ve^{[l]}_{\vtheta}:=\vz_{\vtheta}^{[l]}\circ\vg_{\vtheta}^{[l]},
\]
and we denote $\{\ve^{[l]}_{\vtheta}\}_{l=1}^L$   by   $\vE_{\vtheta}
:= \{\ve^{[l]}_{\vtheta}\}_{l=1}^L$. 

Directly from relation \eqref{eq..text...BackwardProp}, we obtain that 
\begin{equation}\label{eq..text...BackwardPropOnlyforel}
	\ve^{[l]}_{\vtheta}=\vz_{\vtheta}^{[l]}\circ\vg_{\vtheta}^{[l]}=\left((\mW^{[l+1]})^\T\left(\vz_{\vtheta}^{[l+1]}\circ\vg_{\vtheta}^{[l+1]}\right)\right)\circ\vg_{\vtheta}^{[l]}=\left((\mW^{[l+1]})^\T\ve^{[l+1]}_{\vtheta}\right)\circ\vg_{\vtheta}^{[l]}.
\end{equation}
%%%%%%%%%%%%%%%%%%%%%%%%%%%%%%%%%%%%%
\subsection{Hessian}
%\textcolor{blue}{For the results regarding the Hessian of loss, we consider NNs with $1$-d scalar output for simplicity in computation.}
Given a  scalar loss function $\ell(\cdot,\cdot):\sR^{d'}\times \sR^{d'}\to \sR$, twice differentiable in both variables,  and an activation function $\sigma(\cdot):\sR \to \sR$, also  twice differentiable, we denote that
\begin{align}
	\RS(\vtheta)&=\Exp_S \ell(f(\vx,\vtheta),f^*(\vx)). \\
	% \RS(\vtheta)&\approx\RS(\vtheta_0)+\vv_S(\vtheta_0)^{\T}\delta\vtheta+
	% \frac{1}{2}\delta\vtheta^{\T}\mH_S(\vtheta_0)\delta\vtheta. \\
	\vv_S(\vtheta)&:=\nabla_{\vtheta}\RS(\vtheta) = \Exp_{S}\nabla\ell (\vf(\vx,\vtheta),\vf^*(\vx))^\T\nabla_{\vtheta}\vf_{\vtheta}(\vx)=\sum_{i=1}^{m_L} \Exp_{S}\partial_i\ell (\vf_{\vtheta},\vf^*)\nabla_{\vtheta}(\vf_{\vtheta})_i, 
\end{align}
where $\partial_i\ell (\vf_{\vtheta},\vf^*)$ is the $i$-th element of  $\nabla\ell (\vf(\vx,\vtheta),\vf^*(\vx))$, and $(\vf_{\vtheta})_i$ is the $i$-th element of vector $\vf_{\vtheta}$.
Then for the Hessian matrix $\mH_S(\vtheta)$, we have
\begin{align*}
	\mH_S(\vtheta)&:=\nabla_{\vtheta}\nabla_{\vtheta}\RS(\vtheta)=\sum_{i=1}^{m_L}\Exp_{S}\nabla_{\vtheta}\left(\partial_i\ell (\vf_{\vtheta},\vf^*)\right)\nabla_{\vtheta}(\vf_{\vtheta})_i+\sum_{i=1}^{m_L}\Exp_{S}\partial_i\ell (\vf_{\vtheta},\vf^*)\nabla_{\vtheta}\nabla_{\vtheta}\left((\vf_{\vtheta})_i\right)\\
	&=\sum_{i,j=1}^{m_L}\Exp_{S}\partial_{ij}\ell (\vf_{\vtheta},\vf^*)\nabla_{\vtheta}(\vf_{\vtheta})_i\left(\nabla_{\vtheta}(\vf_{\vtheta})_j\right)^\T
	+
	\sum_{i=1}^{m_L}\Exp_{S}\partial_i\ell (\vf_{\vtheta},\vf^*)\nabla_{\vtheta}\nabla_{\vtheta}\left((\vf_{\vtheta})_i\right),
\end{align*}
where $\partial_{ij}\ell (\vf_{\vtheta},\vf^*)$ is the $(i,j)$-th element of  $\nabla\nabla\ell (\vf(\vx,\vtheta),\vf^*(\vx))$.

We define matrices $\mH^{(1)}_S(\vtheta)$ and $\mH^{(2)}_S(\vtheta)$  as follows:
\begin{align}
	\mH^{(1)}_S(\vtheta)&:=\sum_{i,j=1}^{m_L}\Exp_{S}\partial_{ij}\ell (\vf_{\vtheta},\vf^*)\nabla_{\vtheta}(\vf_{\vtheta})_i\left(\nabla_{\vtheta}(\vf_{\vtheta})_j\right)^\T,\\
	\mH^{(2)}_S(\vtheta)&:=\sum_{i=1}^{m_L}\Exp_{S}\partial_i\ell (\vf_{\vtheta},\vf^*)\nabla_{\vtheta}\nabla_{\vtheta}\left((\vf_{\vtheta})_i\right),
\end{align}
and
\[
\mH_S(\vtheta)=\mH^{(1)}_S(\vtheta)+ \mH^{(2)}_S(\vtheta).
\]
% \textcolor{purple}{Note that, for a common convex loss function $\ell(\cdot,\cdot)$, $\mH^{(1)}_S(\vtheta)$ is always positve semidefinite}
%%%%%%%%%%%%%%%%%%%%%%%%%%%%%%%%%%%%%%%%%%%%%%%%%%%%%%%
\subsection{Assumptions and conventions of notations}
We begin this part by introducing several assumptions  that will be used throughout this paper: 
\begin{assumption*}
	\textcolor{black}{\\
		(i) We choose the $L$-layer~($L\geq 2$) fully-connected deep neural networks~(NNs) as our model.\\
		(ii) Our training data  is $S=\{(\vx_i,\vy_i)\}_{i=1}^n$, $n\in\sZ^+\cup\{+\infty\}$.\\
		(iii) We use the empirical loss $\RS(\vtheta)=\Exp_S\ell(\vf_{\vtheta}(\vx),\vy)$.\\
		(iv) Loss function  $\ell(\cdot,\cdot)$ and activation function $\sigma(\cdot)$ are (weakly) differentiable. (Remark: twice differentiable is required for the computation of Hessian)}
\end{assumption*}
After stating out the assumptions, we would also like to introduce some conventions of notations that are frequently  used in this paper in the following.

% We denote the outputs of different NNs by $\vf_{\vtheta}(\vx)$, distinguished by $\vtheta$ of different tuple classes. Similarly, given data $S$, loss and activation,  $\RS({\vtheta})$ may correspond to
% loss landscape of different NNs distinguished by $\vtheta$.

We write $\mathrm{NN}(\{m_l\}_{l=0}^{L})$ for a fully-connected $L$-layer network with width $(m_0,\ldots,m_L)$, by which the tuple class of its parameters $\vtheta \in \mathrm{Tuple}_{\{m_0,\cdots,m_L\}}$ is determined whereas its activation $\sigma(\cdot)$ is not provided. When $\sigma(\cdot)$ is given, the output of $\mathrm{NN}(\{m_l\}_{l=0}^{L})$ is denoted by $\vf_{\vtheta}(\vx)$ with $\vtheta \in \mathrm{Tuple}_{\{m_0,\cdots,m_L\}}$. 

Note that, if given two NNs, $\mathrm{NN}(\{m_l\}_{l=0}^{L})$ and $\mathrm{NN}(\{m'_l\}_{l=0}^{L})$, their corresponding parameters belong to different tuple classes except when $m'_l= m_l$ for all $l\in[0:L]$. Therefore, in this work, $\vf_{\vtheta}(\vx)$ and $\RS({\vtheta})$  may correspond to output and
loss landscape of different NNs distinguished by $\vtheta$ of different tuple classes.

Given two NNs, $\mathrm{NN}(\{m_l\}_{l=0}^{L})$ and $\mathrm{NN}(\{m'_l\}_{l=0}^{L})$ with $m'_0=m_0$, $m'_L=m_L$, and  $m'_l\geq m_l$ for any $l\in[L-1]$, then for $K=\sum_{l=1}^{L-1}(m'_l-m_l)\in\sZ^+$,  we say that $\mathrm{NN}(\{m'_l\}_{l=0}^{L})$ is $K$-neuron \textbf{wider} than  $\mathrm{NN}(\{m_l\}_{l=0}^{L})$, and conversely,  $\mathrm{NN}(\{m_l\}_{l=0}^{L})$ is $K$-neuron \textbf{narrower} than $\mathrm{NN}(\{m'_l\}_{l=0}^{L})$.

As long as we have two NNs, $\mathrm{NN}(\{m_l\}_{l=0}^{L})$ and $\mathrm{NN}(\{m'_l\}_{l=0}^{L})$, given in the context of Definitions, Theorems, Propositions, Lemmas etc, we always assume that  $\mathrm{NN}(\{m'_l\}_{l=0}^{L})$ is wider than  $\mathrm{NN}(\{m_l\}_{l=0}^{L})$, i.e., $m'_0=m_0$, $m'_L=m_L$, and  $m'_l\geq m_l$ for any $l\in[L-1]$. We also denote $M=\sum_{l=0}^{L-1}(m_l+1) m_{l+1}$ and $M'=\sum_{l=0}^{L-1}(m'_l+1) m'_{l+1}$, and consequently, $M'\geq M$.  We   denote the parameters of a  narrower network by $\vtheta_{\rnarr}$, and the counterpart of a  wider network by $\vtheta_{\rwide}$. Then, given the data $S$, loss $\ell(\cdot,\cdot)$ and activation $\sigma(\cdot)$,  the collection of critical points of narrower NN and wider NN can be found respectively and denoted by     $\vTheta_{\rnarr}^{\rc}:=\{\vtheta|\nabla_{\vtheta}\RS(\vtheta_{\rnarr})=\vzero\}$ and $\vTheta_{\rwide}^{\rc}:=\{\vtheta|\nabla_{\vtheta}\RS(\vtheta_{\rwide})=\vzero\}$. Furthermore,  $\fF_{\rnarr}^{\rc}:=\{\vf_{\vtheta}|\vtheta\in\vTheta_{\rnarr}^{\rc}\}$ and $\fF_{\rwide}^{\rc}:=\{\vf_{\vtheta}|\vtheta\in\vTheta_{\rwide}^{\rc}\}$ are denoted for  the function spaces induced by critical points accordingly.

% \textcolor{blue}{Suppose we have an NN, then if   the data $S$, loss $\ell(\cdot,\cdot)$ and activation $\sigma(\cdot)$ are given,   its   collection of critical points can be determined and  denoted    by   $\vTheta^{\rc}:=\{\vtheta|\nabla_{\vtheta}\RS(\vtheta_{})=\vzero\}$, which induces a function space $\fF^{\rc}:=\{\vf_{\vtheta}|\vtheta\in\vTheta^{\rc}\}$. We claim that throughout this paper,  $\vTheta^{\rc}$ refers to the  collection of critical points of NN once the data $S$, loss $\ell(\cdot,\cdot)$ and activation $\sigma(\cdot)$ are given, as are the cases for $\vTheta_{\rnarr}^{\rc}$ and $\vTheta_{\rwide}^{\rc}$. 
	% Moreover, we define an inverse  critical map such that, given any NN, data, loss and activation, for any $\vf$, $\vartheta(\vf):=\{\vtheta|\vtheta\in\vTheta^{\rc},\vf_{\vtheta}=\vf\}$ maps any function to the set of critical points with output function $\vf$. $\vartheta(\vf)\neq \emptyset$ means $\vf\in\fF^{\rc}_{}$. We note that, for any $\vf^{c}_{\rnarr}\in\fF^{\rc}_{\rnarr}$, $\vartheta(\vf^{\rc}_{\rnarr})$ is a high dimensional submanifold in general by the critical embeddings we proposed. Its dimension is estimated in Section \ref{section...PropertyofCP}. }

Finally, a neuron, say the $i$-th neuron in layer $l$, is termed a {\textbf{null neuron}} if its output is a constant independent of input $\vx$ for any activation, i.e., $(\vf^{[l]})_i(\cdot)\equiv\mathrm{Const}$ for any $\sigma(\cdot)$. Otherwise, we call this neuron an {\textbf{effective neuron}}. 

%%%%%%%%%%%%%%%%%%%%%%%%%%%%%%%%%%%%%%%%%%%%%%%%%%%%%%%
\section{Embedding Principle}
In this section, we prove the Embedding Principle by constructing critical embeddings. First, we define a critical embedding operation. Then, by constructing one-step critical embeddings and their composition, we prove the Embedding Principle that critical points of the loss landscape of a narrow network can be embedded to critical affine subspaces of the loss landscape of any wider network while preserving the output function. Finally, we emphasize the importance of Embedding Principle in understanding the implicit regularization and generalization of NNs.

% Finally, we discover a wide class of general compatible critical embeddings with one-step embeddings and their composition as its special cases. Remark that, discovering as many critical embeddings as possible is important for obtaining a more precise understanding of the geometry of the critical submanifolds in a NN loss landscape as illustrated in the next section. 

We begin with the concepts of embedding, affine embedding and critical embedding.
\begin{definition}[\bf{Embedding and affine embedding}]% and critical embedding}]
Given  an   $\mathrm{NN}(\{m_l\}_{l=0}^{L})$ and  $\mathrm{NN}(\{m'_l\}_{l=0}^{L})$, an  
{\textbf {embedding}} is an injective operator $\fT:\mathrm{Tuple}_{\{m_0,\cdots,m_L\}}\to\mathrm{Tuple}_{\{m'_0,\cdots,m'_L\}}$, i.e., $\fT(\vtheta_1)\neq \fT(\vtheta_2)$ for $\vtheta_1, \vtheta_2\in\mathrm{Tuple}_{\{m_0,\cdots,m_L\}}$ and $\vtheta_1\neq \vtheta_2$. In addition, $\fT$ is an \textbf{affine embedding} if $\tilde{\fT}(\vtheta):=\fT(\vtheta)-\fT(\vzero)$ is a linear operator, i.e., $\tilde{\fT}(\vtheta_1)+\tilde{\fT}(\vtheta_2)=\tilde{\fT}(\vtheta_1+\vtheta_2)$ and $\tilde{\fT}(\beta\vtheta)=\beta\tilde{\fT}(\vtheta)$ for any $\vtheta,\vtheta_1,\vtheta_2\in\mathrm{Tuple}_{\{m_0,\cdots,m_L\}}$ and  $\beta\in\sR$.
\end{definition}
% {\yl{We remark that since the  parameter $\vtheta$ of  $\mathrm{NN}(\{m_l\}_{l=0}^{L})$ is a tuple, however in Definition \ref{def..afine}, we misuse our notations and identify $\vtheta$ with its vectorization $\mathrm{vec}(\vtheta)\in \sR^M$ with $M=\sum_{l=0}^{L-1}(m_l+1) m_{l+1}$.}}
\begin{rmk}
For any  given affine embedding $\fT$, it is associated with a matrix $\mA\in\sR^{M'\times M}$ and a vector $\vc\in  \sR^{M'}$ such that $\mathrm{vec}(\fT(\vtheta))=\mA\mathrm{vec}(\vtheta)+\vc$, where $M=\sum_{l=0}^{L-1}(m_l+1) m_{l+1}$ and $M'=\sum_{l=0}^{L-1}(m'_l+1) m'_{l+1}$. As noted before, we do not distinguish tuple $\vtheta$ from its vectorization $\mathrm{vec}(\vtheta)$ in the following. Hence, $\fT(\vtheta)=\mA\vtheta+\vc$.
\end{rmk}
% \begin{definition}[\bf{Affine embedding}]\label{def..afine}
%  Given  a   $\mathrm{NN}(\{m_l\}_{l=0}^{L})$ and  $\mathrm{NN}(\{m'_l\}_{l=0}^{L})$, an  
% {\textbf {affine embedding}} is an injective operator $\fT:\sR^M\to\sR^{M'}$, with $M=\sum_{l=0}^{L-1}(m_l+1) m_{l+1}$ and $M'=\sum_{l=0}^{L-1}(m'_l+1) m'_{l+1}$, which maps  $\vtheta_{\rnarr}\in\sR^M$ to   $\vtheta_{\rwide}=\fT\vtheta_{\rnarr}\in\sR^{M'}$ satisfying that:
% There exists $\mA\in\sR^{M'\times M}$, $\va\in  \sR^{M'}$, such that
% \[
% \fT(\vtheta):=\mA\vtheta+\va.
% \]
% \end{definition}
\begin{definition}[\bf{Critical embedding}]
Given  an   $\mathrm{NN}(\{m_l\}_{l=0}^{L})$ and  $\mathrm{NN}(\{m'_l\}_{l=0}^{L})$, a {\textbf{critical embedding}} is an affine embedding $\fT:\mathrm{Tuple}_{\{m_0,\cdots,m_L\}}\to\mathrm{Tuple}_{\{m'_0,\cdots,m'_L\}}$, which maps any set of its network parameters $\vtheta_{\rnarr}\in\mathrm{Tuple}_{\{m_0,\cdots,m_L\}}$ to that of a wider NN $\vtheta_{\rwide}=\fT(\vtheta_{\rnarr})\in\mathrm{Tuple}_{\{m'_0,\cdots,m'_L\}}$ satisfying that: For any given data $S$, loss function $\ell(\cdot,\cdot)$, activation function $\sigma(\cdot)$, \\
(i) {\bf output preserving:} $\vf_{\vtheta_{\rnarr}}(\vx) = \vf_{\vtheta_{\rwide}}(\vx)$ for any $\vx\in\sR^d$; \\
(ii) {\bf representation preserving:} \[\mathrm{span}\left\{\left\{\left(\vf^{[l]}_{\vtheta_{\rnarr}}(\cdot)\right)_j\right\}_{j\in[m_l]}\cup\{1\}\right\}=\mathrm{span}\left\{\left\{\left(\vf^{[l]}_{\vtheta_{\rwide}}(\cdot)\right)_{j'}\right\}_{j'\in[m'_l]}\cup\{1\}\right\},~~\text{for any}~l\in[L],\]
where $\{\vf^{[l]}_{\vtheta_{\rnarr}}\}_{l=1}^L$ and $\{\vf^{[l]}_{\vtheta_{\rwide}}\}_{l=1}^L$ are feature vectors of  $\mathrm{NN}(\{m_l\}_{l=0}^{L})$ and  $\mathrm{NN}(\{m'_l\}_{l=0}^{L})$, and $1:\sR^d\to\sR$ is the constant function, i.e., $1(\cdot)\equiv 1$; \\
(iii) {\bf criticality preserving:} If $\vtheta_{\rnarr}$ is a critical point of $\RS({\vtheta})$, i.e., $\nabla_{\vtheta}\RS(\vtheta_{\rnarr})=\mzero$, then $\vtheta_{\rwide}$ is also a critical point of $\RS({\vtheta})$, i.e., $\nabla_{\vtheta}\RS(\vtheta_{\rwide})=\mzero$.\\
Specifically, if an embedding is a critical embedding with $\mathrm{NN}(\{m'_l\}_{l=0}^{L})$ one-neuron wider than $\mathrm{NN}(\{m_l\}_{l=0}^{L})$,
% with $m'_{l_0}=m_{l_0}+1$ for some $l_0 \in [L-1]$, and $m'_l=m_l$ for $l\in [L-1]\backslash\{l_0\}$, 
we call it {\textbf{one-step critical embedding}}.
\end{definition}

% \begin{rmk}
% The requirement of critical embedding being an affine operator ensures the consistency of $\fT$ over the the whole parameter space, i.e., $\fT$ over a neighbourhood of any point in the parameter space can be uniquely extended to the $\fT$  over the whole parameter space.
% \end{rmk}

% \begin{definition}[\textbf{linear embedding}]
% Embedding operator $\fT:\mathrm{Tuple}_{\{m_0,\cdots,m_L\}}\to\mathrm{Tuple}_{\{m'_0,\cdots,m'_L\}}$ is linear if $\fT(\vtheta_1)+\fT(\vtheta_2)=\fT(\vtheta_1+\vtheta_2)$ and $\fT(\beta\vtheta)=\beta\fT(\vtheta)$ for any $\vtheta,\vtheta_1,\vtheta_2\in\mathrm{Tuple}_{\{m_0,\cdots,m_L\}},\ \beta\in\sR$.
% \end{definition} 

\subsection{One-step critical embedding}

% \begin{definition}[\textbf{one-step embedding}]
% Embedding realized by an embedding operator $\fT:\mathrm{Tuple}_{\{m_0,\cdots,m_L\}}\to\mathrm{Tuple}_{\{m'_0,\cdots,m'_L\}}$ from width-$\{m'_0,\cdots,m'_L\}$ NN to a one-neuron wider NN of width-$\{m'_0,\cdots,m'_L\}$, i.e., $K=\sum_{l\in[L-1]}=1$, is a one-step embedding.
% % $\exists j\in[L-1]$, s.t., $m'_j=m_j+1$ and $m'_k=m_k$ for $k\neq j$.
% \end{definition} 
In this subsection, we introduce two types of one-step critical embeddings. we  start with the definition of one-step null embedding. Intuitively, a one-step null embedding adds  a null neuron to the NN, whose output is a constant independent of input $\vx$ for any activation.

\begin{definition}[\textbf{One-step null embedding}]
Given  an   $\mathrm{NN}(\{m_l\}_{l=0}^{L})$ and its parameter\\
$\vtheta=(\mW^{[1]},\vb^{[1]},\cdots,\mW^{[L]},\vb^{[L]})\in\mathrm{Tuple}_{\{m_0,\cdots,m_L\}}$,
then    for any $l\in[L-1]$, we define the operators $\fT_{l,0}$ and $\fV_{l,0}$ applying on $\vtheta$ as follows
\begin{align*}
	\fT_{l,0}(\vtheta)|_k
	&=\vtheta|_k,\quad k\neq l,~l+1,\\
	\fT_{l,0}(\vtheta)|_l
	&= \left(\left[ {\begin{array}{cc}
			\mW^{[l]} \\
			\vzero_{1\times m_{l-1}} \\
	\end{array} } \right],
	\left[ {\begin{array}{cc}
			\vb^{[l]} \\
			0 \\
	\end{array} } \right]\right),\\
	\fT_{l,0}(\vtheta)|_{l+1} &= \left(
	\left[\mW^{[l+1]},\mzero_{m_{l+1}\times 1}\right],
	\vb^{[l+1]}\right),\\
	\fV_{l,0}(\vtheta)|_k
	&=\left(\mzero_{m_{k}\times m_{k-1}},\mzero_{m_{k}\times 1}\right),\quad k\neq l,~l+1\\
	\fV_{l,0}(\vtheta)|_l
	&=\left(\mzero_{(m_{l}+1)\times m_{l-1}},\left[ {\begin{array}{cc}
			\vzero_{m_{l}\times 1} \\
			1 \\
	\end{array} } \right]\right),\\
	\fV_{l,0}(\vtheta)|_{l+1}
	&= \left(
	\mzero_{m_{l+1}\times {(m_l+1)}},
	\mzero_{m_{l+1}\times 1}\right).
\end{align*}
We define {\textbf{one-step null embedding}}   $\fT_{l,0}^{\alpha}$  as: For any $\vtheta\in\mathrm{Tuple}_{\{m_0,\cdots,m_L\}}$, 
\begin{equation*}
	\fT_{l,0}^{\alpha}(\vtheta)=(\fT_{l,0}+\alpha\fV_{l,0})(\vtheta).
\end{equation*}
\end{definition}
Note that the neuron added by the above one-step null embedding has zero output weights, zero input weights and an arbitrary bias. An illustration is shown in Fig. \ref{fig:onestepnull}. 

\begin{figure}[h]
\centering
\includegraphics[width=0.9\textwidth]{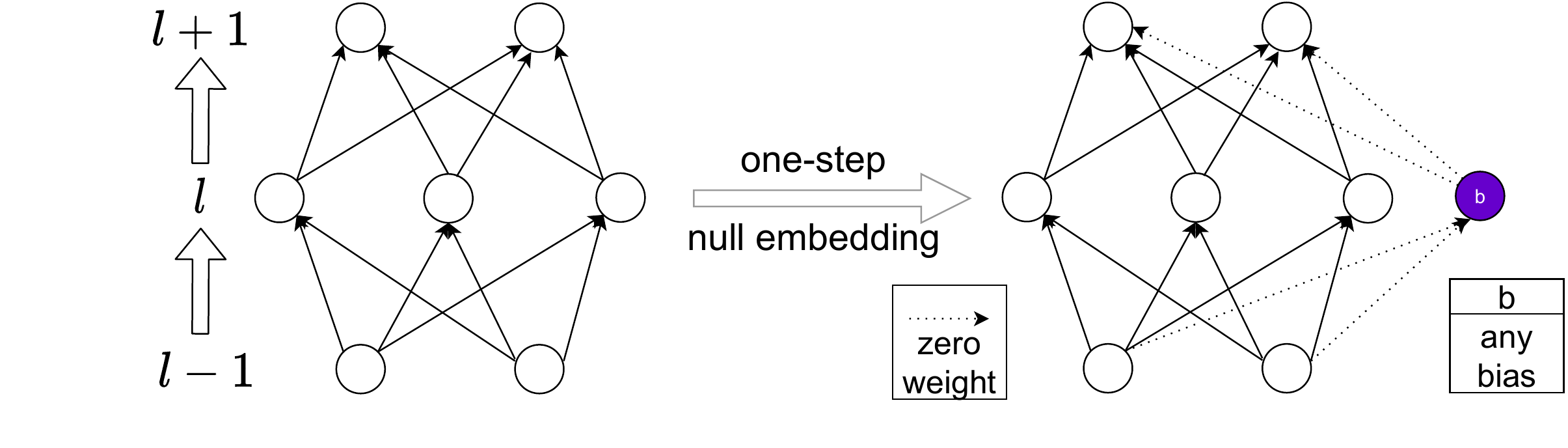}
\caption{Illustration of one-step null embedding. The purple neuron is added with both input and output weights as zero and an arbitrary bias. \label{fig:onestepnull}}
\end{figure} 

We proceed to the definition of one-step splitting embedding.
\begin{definition}[\textbf{One-step splitting embedding}]\label{def4}
Given  an   $\mathrm{NN}(\{m_l\}_{l=0}^{L})$ and its parameter
$\vtheta=(\mW^{[1]},\vb^{[1]},\cdots,\mW^{[L]},\vb^{[L]})\in\mathrm{Tuple}_{\{m_0,\cdots,m_L\}}$,  then for any $l\in[L-1]$ and  $s\in[m_l]$, we define the  operators $\fT_{l,s}$ and $\fV_{l,s}$ applying on $\vtheta$ as follows
\begin{align*}
	\fT_{l,s}(\vtheta)|_k
	&=\vtheta|_k,\quad k\neq l,l+1,\\
	\fT_{l,s}(\vtheta)|_l
	&= \left(\left[ {\begin{array}{cc}
			\mW^{[l]} \\
			\mW^{[l]}_{s,[1:m_{l-1}]} \\
	\end{array} } \right],
	\left[ {\begin{array}{cc}
			\vb^{[l]} \\
			\vb^{[l]}_s \\
	\end{array} } \right]\right),\\
	\fT_{l,s}(\vtheta)|_{l+1}
	&= \left(
	\left[\mW^{[l+1]},\mzero_{m_{l+1}\times 1}\right],
	\vb^{[l+1]}\right),\\
	\fV_{l,s}(\vtheta)|_k
	&=\left(\mzero_{m_{k}\times m_{k-1}},\mzero_{m_{k}\times 1}\right), k\neq l,l+1,\\
	\fV_{l,s}(\vtheta)|_l
	&=\left(\mzero_{(m_{l}+1)\times m_{l-1}},\mzero_{(m_{l}+1)\times 1}\right),\\
	\fV_{l,s}(\vtheta)|_{l+1}
	&= \left(
	\left[\mzero_{m_{l+1}\times (s-1)},-\mW^{[l+1]}_{[1:m_{l+1}],s},\mzero_{m_{l+1}\times (m_{l}-s)},\mW^{[l+1]}_{[1:m_{l+1}],s}\right],\mzero_{m_{l+1}\times 1}\right).
\end{align*}
We define {\textbf{one-step splitting embedding}}   $\fT_{l,s}^{\alpha}$  as: For any $\vtheta\in\mathrm{Tuple}_{\{m_0,\cdots,m_L\}}$, 
\begin{equation*}
	\fT_{l,s}^{\alpha}(\vtheta)=(\fT_{l,s}+\alpha\fV_{l,s})(\vtheta).
\end{equation*}
\end{definition}
An illustration of one-step splitting method is shown in Fig. \ref{fig:onestepsplitting}.
\begin{figure}[h]
\centering
\includegraphics[width=0.9\textwidth]{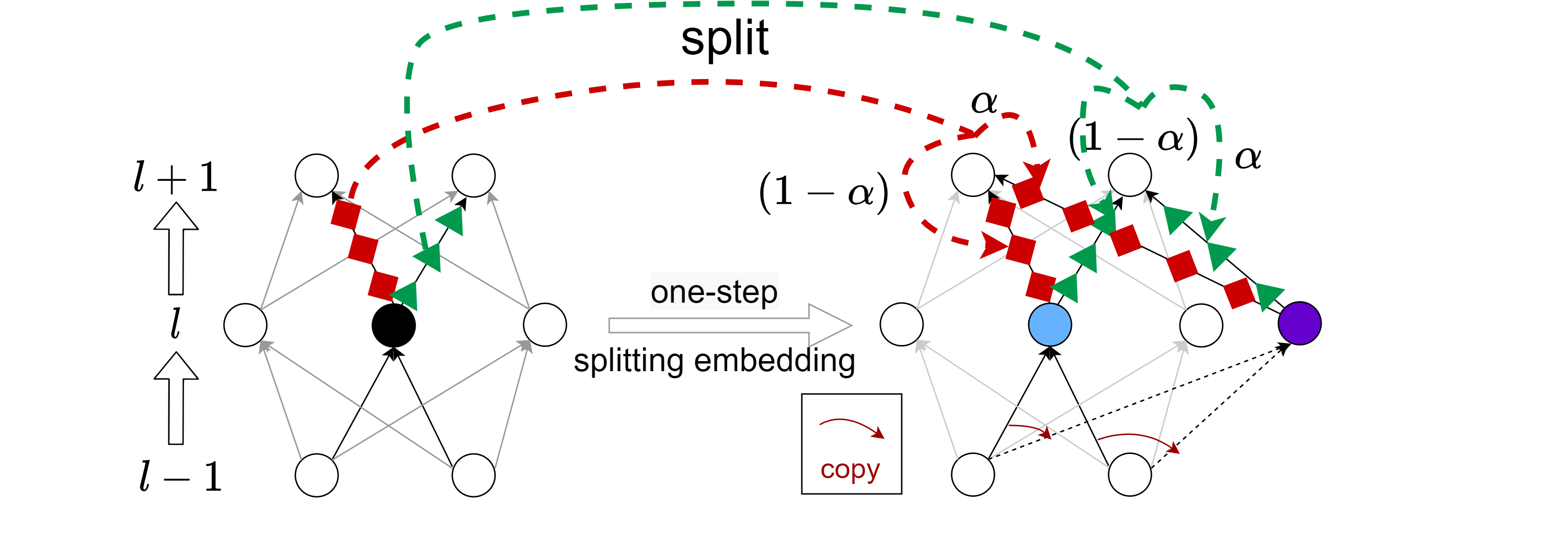}
\caption{Illustration of one-step splitting embedding. The black neuron in the left network is split into the blue and purple neurons in the right network. The red (green) output weight of the black neuron in the left net is split into two red (green) weights in the right net with ratio $(1-\alpha)$ and $\alpha$, respectively. This is also illustrated in \cite{zhang2021embedding}. \label{fig:onestepsplitting}}
\end{figure} 
\begin{rmk}
The parameters $\fT_{l,0}^{\alpha}(\vtheta)$ and $\fT_{l,s}^{\alpha}(\vtheta)$ correspond to a $L$-layer NN with width $(m_0,\ldots,m_{l-1},m_l+1,m_{l+1},\ldots,m_{L})$ since   $\fT_{l,0}^{\alpha}$ and $\fT_{l,s}^{\alpha}$ are one-step embeddings.
\end{rmk}

\begin{rmk}
We observe that   $\fT_{l,0}^{\alpha}$ and $\fT_{l,s}^{\alpha}$ can be applied on the neural network parameter $\vtheta$ of any given  $\mathrm{NN}(\{m_l\}_{l=0}^{L})$ of proper depth and width, hence the domain of $\fT_{l,0}^{\alpha}$ and $\fT_{l,s}^{\alpha}$ is not limited to a specific $\mathrm{Tuple}_{\{m_0,\cdots,m_L\}}$. Instead, the extended domain  of $\fT_{l,0}^{\alpha}$ and $\fT_{l,s}^{\alpha}$    assembles all  possible tuple class, i.e., if we denote the extended  domain of $\fT_{l,0}^{\alpha}$ by $\fD_{l,0}$, then
\begin{equation}
	\fD_{l,0}:=\bigsqcup_{ l< L}\mathrm{Tuple}_{\{n_0,\cdots,n_L\}},
\end{equation}
and if we denote the extended  domain of  $\fT_{l,s}^{\alpha}$ by $\fD_{l,s}$, then
\begin{equation}
	\fD_{l,s}:=\bigsqcup_{ l< L, s\leq n_l}\mathrm{Tuple}_{\{n_0,\cdots,n_L\}},
\end{equation}
where $\bigsqcup$ refers to the disjoint union of different tuple classes. 
\end{rmk}
By the above remark, we may extend $\fT_{l,0}^{\alpha}$ and $\fT_{l,s}^{\alpha}$ to their extended domains, and we identify
\[
\fT_{l,0}^{\alpha}:\fD_{l,0}\to\fD_{l,0},~~\fT_{l,s}^{\alpha}:\fD_{l,s}\to\fD_{l,s},
\]
with their restrictions on $\mathrm{Tuple}_{\{m_0,\cdots,m_L\}}$  for some given  $\mathrm{NN}(\{m_l\}_{l=0}^{L})$.

\begin{thm}\label{thm...Null+SplitisCritical}
One-step null embedding and one-step splitting embedding are critical embeddings.
\end{thm}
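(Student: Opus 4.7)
My plan is to verify the four ingredients of \textbf{critical embedding} in order of increasing difficulty: affine-ness, output preservation, representation preservation, and finally criticality preservation. Injectivity and the affine structure should come essentially for free. The operator $\fT_{l,0}$ is linear in $\vtheta$ and $\fV_{l,0}$ is constant in $\vtheta$ (its only non-zero entry is the scalar $1$ in the bias), so $\fT_{l,0}^{\alpha}(\vtheta)-\fT_{l,0}^{\alpha}(\vzero)=\fT_{l,0}(\vtheta)$ is linear; injectivity follows because $\fT_{l,0}$ copies $\vtheta$ verbatim into a larger tuple. For splitting, both $\fT_{l,s}$ and $\fV_{l,s}$ are linear in $\vtheta$, each vanishes at $\vzero$, and $\fT_{l,s}$ again embeds all original coordinates faithfully, so $\fT_{l,s}^{\alpha}$ is also an injective affine map.

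For \emph{output preservation} I would compute $\vf^{[k]}_{\vtheta_{\rwide}}$ layer-by-layer. For the null case the added neuron at position $m_l+1$ in layer $l$ has input weights $\vzero$ and bias $\alpha$, so its feature is the constant $\sigma(\alpha)$; its output weights are zero, so it contributes nothing to layer $l+1$, and every other feature coincides with that of the narrower network. For the splitting case, the new neuron has the same input weights and bias as neuron $s$, hence the same feature $(\vf^{[l]}_{\vtheta_{\rnarr}})_s$, while the $s$-th and $(m_l+1)$-th columns of $\mW^{[l+1]}_{\rwide}$ are $(1-\alpha)\mW^{[l+1]}_{[:,s]}$ and $\alpha\mW^{[l+1]}_{[:,s]}$ respectively, whose combined contribution to layer $l+1$ is exactly $\mW^{[l+1]}_{[:,s]}(\vf^{[l]}_{\vtheta_{\rnarr}})_s$. \emph{Representation preservation} follows immediately: in the null case the extra feature $\sigma(\alpha)$ lies in $\mathrm{span}\{1\}$; in the splitting case the extra feature duplicates $(\vf^{[l]}_{\vtheta_{\rnarr}})_s$; and in all other layers the feature sets coincide.

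The substantive step is \emph{criticality preservation}, which I would handle through the backpropagation identities \eqref{eq..text...BackwardProp}. Output preservation implies $\vz_{\vtheta_{\rwide}}^{[L]}=\vz_{\vtheta_{\rnarr}}^{[L]}$, and I would propagate errors backwards to show that for every layer $k$, the restriction of $\ve^{[k]}_{\vtheta_{\rwide}}$ to the original coordinates equals $\ve^{[k]}_{\vtheta_{\rnarr}}$ (with a $(1-\alpha),\alpha$ split at coordinate $s$ in the splitting case), while the error at the new neuron vanishes identically in the null case because the corresponding column of $\mW^{[l+1]}_{\rwide}$ is zero. Combined with matching features, this yields that every gradient entry corresponding to a coordinate inherited from $\vtheta_{\rnarr}$ equals the corresponding entry of $\nabla_{\vtheta_{\rnarr}}\RS(\vtheta_{\rnarr})=\mzero$, and hence vanishes.

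What remains, and what I expect to be the only genuinely delicate point, is showing that gradients at the \emph{new} coordinates also vanish. For the splitting embedding, the gradients w.r.t.~the duplicated row of $\mW^{[l]}_{\rwide}$ and bias at $m_l+1$ are $\alpha$ times $\Exp_S \ve_{\vtheta_{\rnarr}}^{[l]}{}_s\,(\vf^{[l-1]}_{\vtheta_{\rnarr}})^\T$ and $\alpha\,\Exp_S \ve_{\vtheta_{\rnarr}}^{[l]}{}_s$, which are entries of $\nabla_{\mW^{[l]}}\RS$ and $\nabla_{\vb^{[l]}}\RS$ at $\vtheta_{\rnarr}$, hence zero; the gradients w.r.t.~the new column of $\mW^{[l+1]}_{\rwide}$ coincide with the $s$-th column of $\nabla_{\mW^{[l+1]}}\RS(\vtheta_{\rnarr})$. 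For the null embedding, the gradients w.r.t.~the new input weights and bias vanish because the error at the new neuron is identically zero, and the subtler gradient w.r.t.~the new output column is $\sigma(\alpha)\,\Exp_S \ve^{[l+1]}_{\vtheta_{\rnarr}}=\sigma(\alpha)\,\nabla_{\vb^{[l+1]}}\RS(\vtheta_{\rnarr})=\vzero$; the need to invoke the bias-gradient condition here is the one place where the critical-point hypothesis is used in a non-obvious way.
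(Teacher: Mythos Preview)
Your proposal is correct and follows essentially the same route as the paper: establish the layer-by-layer identities for features, feature gradients, and error vectors (the paper packages these as Lemmas~\ref{lem...appen...section1...null} and~\ref{lem...appen...section1...split}), then read off each gradient entry from the backpropagation formula \eqref{eq..text...BackwardProp} (Propositions~\ref{prop...null} and~\ref{prop...split}). Your handling of the ``new output column'' gradient in the null case---pulling out the constant $\sigma(\alpha)$ and invoking $\nabla_{\vb^{[l+1]}}\RS(\vtheta_{\rnarr})=\vzero$---is in fact cleaner than the paper's $\alpha$-independence argument for the companion bias entry, but the overall architecture of the two proofs is the same.
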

%%%%%%%%%%%%%%%%%%%%%%%%%%%%%%%%%%%%%%%%%%%%%%%%%%%%%%%%%%%%%
In order to prove Theorem \ref{thm...Null+SplitisCritical}, we need several lemmas.
%%%%%%%%%%%%%%%%%%%%%%%%%%%%%%%%%%%%%%%%
\begin{lemma}\label{lem...appen...section1...null} 
For any one-step null embedding $\fT_{l,0}^{\alpha}$, given any  $\mathrm{NN}(\{m_l\}_{l=0}^{L})$ and its parameters
$\vtheta_{\rnarr}\in\mathrm{Tuple}_{\{m_0,\cdots,m_L\}}$ with $\mathrm{Tuple}_{\{m_0,\cdots,m_L\}}\in\fD_{l,0}$, we have $\vtheta_{\rwide}:=\fT_{l,0}^{\alpha}(\vtheta_{\rnarr})$ satisfies the following conditions: given any data $S$, loss $\ell(\cdot,\cdot)$ and activation $\sigma(\cdot)$, for any $l\in[L-1]$,\\
(i) feature vectors in $\vF_{\vtheta_{\rwide}}$: $\vf^{[l']}_{\vtheta_{\rwide}}=\vf^{[l']}_{\vtheta_{\rnarr}}$, for $l'\in[L]$ and $l'\neq l$, $\vf^{[l]}_{\vtheta_{\rwide}}=\left[(\vf_{\vtheta_{\rnarr}}^{[l]})^\T,\sigma(\alpha)\right]^\T$;\\
(ii) feature gradients in $\vG_{\vtheta_{\rwide}}$: $\vg^{[l']}_{\vtheta_{\rwide}}=\vg^{[l']}_{\vtheta_{\rnarr}}$, for $l'\in[L]$ and $l'\neq l$, $\vg^{[l]}_{\vtheta_{\rwide}}=
\left[(\vg^{[l]}_{\vtheta_{\rnarr}})^\T,\sigma^{(1)}(\alpha)\right]^\T$;\\
(iii) error vectors in $\vZ_{\vtheta_{\rwide}}$: 
$\vz^{[l']}_{\vtheta_{\rwide}}=\vz^{[l']}_{\vtheta_{\rnarr}}$, for $l'\in[L]$ and $l'\neq l$, $\vz^{[l]}_{\vtheta_{\rwide}}=
\left[ \vz_{\vtheta_{\rnarr}}^{[l]},0\right]^\T$;\\
(iv) $\fT_{l,0}^{\alpha}$ is injective for all $\alpha$;\\
(v) $\fT_{l,0}^{\alpha}$ is an affine embedding for all $\alpha$.
\end{lemma}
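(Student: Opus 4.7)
My plan is to prove the five items more or less independently, since each reduces to a direct computation from the definitions of $\fT_{l,0}$, $\fV_{l,0}$ and the forward/backward propagation equations (the recursion defining $\vf^{[l]}$ and the identities in \eqref{eq..text...BackwardProp}). No deep idea is needed; the work is bookkeeping with block matrices across the two interface layers $l$ and $l+1$ whose widths differ.

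I would start with (i) by a forward induction on $l'$. For $l'<l$ the parameters through layer $l'$ agree in $\vtheta_{\rnarr}$ and $\vtheta_{\rwide}$, so equality of feature vectors is immediate. At $l'=l$, the explicit block form of $\mW^{[l]}_{\rwide}$ and $\vb^{[l]}_{\rwide}$ (namely, $\mW^{[l]}$ with a zero row appended, and $\vb^{[l]}$ with $\alpha$ appended), combined with the inductive hypothesis $\vf^{[l-1]}_{\vtheta_{\rwide}}=\vf^{[l-1]}_{\vtheta_{\rnarr}}$, yields exactly the appended entry $\sigma(\alpha)$. At $l'=l+1$, the appended zero column in $\mW^{[l+1]}_{\rwide}$ annihilates the new coordinate and restores $\vf^{[l+1]}_{\vtheta_{\rwide}}=\vf^{[l+1]}_{\vtheta_{\rnarr}}$; the unchanged parameters then propagate equality for all $l'>l+1$.

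Item (ii) would follow from (i) by applying $\sigma^{(1)}$ to the very same preactivations that were computed in (i). For (iii), I would run a downward induction using the backpropagation recursion in \eqref{eq..text...BackwardProp}. Starting point: (i) at layer $L$ gives $\vz^{[L]}_{\vtheta_{\rwide}}=\nabla\ell=\vz^{[L]}_{\vtheta_{\rnarr}}$; unchanged weights propagate equality downward through layer $l+1$; and at layer $l$ the transposed zero column of $\mW^{[l+1]}_{\rwide}$ becomes a zero row, producing precisely the appended $0$ in $\vz^{[l]}_{\vtheta_{\rwide}}$. For $l'<l$ the unchanged weights then give the remaining equalities.

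I would read (iv) and (v) directly off the formulas. The operator $\fT_{l,0}$ merely copies entries of $\vtheta$ into specified positions and pads with zeros, so it is linear in $\vtheta$. Meanwhile $\fV_{l,0}(\vtheta)$ does not actually depend on $\vtheta$: it is the fixed tuple with a single $1$ in the bias slot of the new neuron and zeros elsewhere. Therefore $\fT_{l,0}^{\alpha}(\vtheta)-\fT_{l,0}^{\alpha}(\vzero)=\fT_{l,0}(\vtheta)$ is linear, proving the affine property. Injectivity is equally direct: the image $\fT_{l,0}^{\alpha}(\vtheta)$ contains a verbatim copy of every entry of $\vtheta$ inside its designated sub-block, so $\vtheta$ is uniquely recoverable from its image. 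The only real obstacle I anticipate is notational: keeping the block-indexing aligned simultaneously in the forward pass, the backward pass, and the embedding formulas at the two layers whose widths differ by one. Once (i) has been verified cleanly, the rest is essentially forced.
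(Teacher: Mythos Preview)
Your proposal is correct and follows essentially the same route as the paper: forward induction for (i), the same preactivation argument for (ii), downward backpropagation for (iii), and reading (iv)--(v) directly from the block structure of $\fT_{l,0}$ and the constancy of $\fV_{l,0}$. The only spot to be slightly careful is the step from $\vz^{[l]}_{\vtheta_{\rwide}}$ down to $\vz^{[l-1]}_{\vtheta_{\rwide}}$: the weight $\mW^{[l]}_{\rwide}$ there is not literally ``unchanged'' (it carries the appended zero row), but since the corresponding entry of $\vz^{[l]}_{\vtheta_{\rwide}}\circ\vg^{[l]}_{\vtheta_{\rwide}}$ is $0\cdot\sigma^{(1)}(\alpha)=0$, the extra row contributes nothing and your conclusion still holds.
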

%%%%%%%%%%%%%%%%%%%%%%%%%%%%%%%%%%%%%%%%%%%%%%%%%%%%%%%%%%%%%%%%%%%%%%%%%%%%%%%%
%%%%%%%%%%%%%%%%%%%%%%%%%%%%%%%%%%%%%%%%

\begin{lemma} \label{lem...appen...section1...split} 
For any one-step splitting embedding $\fT_{l,s}^{\alpha}$, given any  $\mathrm{NN}(\{m_l\}_{l=0}^{L})$ and its parameters
$\vtheta_{\rnarr}\in\mathrm{Tuple}_{\{m_0,\cdots,m_L\}}$ with $\mathrm{Tuple}_{\{m_0,\cdots,m_L\}}\in\fD_{l,s}$, we have $\vtheta_{\rwide}:=\fT_{l,s}^{\alpha}(\vtheta_{\rnarr})$ satisfies the following conditions: given any data $S$, loss $\ell(\cdot,\cdot)$ and activation $\sigma(\cdot)$, for any $l\in[L-1]$,\\
(i) feature vectors in $\vF_{\vtheta_{\rwide}}$: $\vf^{[l']}_{\vtheta_{\rwide}}=\vf^{[l']}_{\vtheta_{\rnarr}}$, for $l'\in[L]$ and $l'\neq l$, $\vf^{[l]}_{\vtheta_{\rwide}}=\left[(\vf_{\vtheta_{\rnarr}}^{[l]})^\T,(\vf_{\vtheta_{\rnarr}}^{[l]})_s\right]^\T$;\\
(ii) feature gradients in $\vG_{\vtheta_{\rwide}}$: $\vg^{[l']}_{\vtheta_{\rwide}}=\vg^{[l']}_{\vtheta_{\rnarr}}$, for $l'\in[L]$ and $l'\neq l$, $\vg^{[l]}_{\vtheta_{\rwide}}=
\left[(\vg^{[l]}_{\vtheta_{\rnarr}})^\T,(\vg_{\vtheta_{\rnarr}}^{[l]})_s\right]^\T$;\\
(iii) error vectors in $\vZ_{\vtheta_{\rwide}}$: 
$\vz^{[l']}_{\vtheta_{\rwide}}=\vz^{[l']}_{\vtheta_{\rnarr}}$, \\
for $l'\in[L]$ and $l'\neq l$, $\vz^{[l]}_{\vtheta_{\rwide}}=
\left[ \left(\vz_{\vtheta_{\rnarr}}^{[l]}\right)^\T_{[1:s-1]},(1-\alpha)(\vz_{\vtheta_{\rnarr}}^{[l]})_s,\left(\vz_{\vtheta_{\rnarr}}^{[l]}\right)^\T_{[s+1:m_l]}, \alpha(\vz_{\vtheta_{\rnarr}}^{[l]})_s\right]^\T$;\\
(iv) $\fT_{l,s}^{\alpha}$ is injective for all $\alpha$.\\
(v) $\fT_{l,s}^{\alpha}$ is an affine embedding for all $\alpha$.
\end{lemma}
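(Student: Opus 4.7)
The plan is to verify each of the five properties by direct computation on the parameters produced by $\fT_{l,s}^{\alpha}$, exploiting the recursive definitions of features, feature gradients, and error vectors. From the definition of $\fT_{l,s}$ and $\fV_{l,s}$, the wide parameters $\vtheta_{\rwide}:=\fT_{l,s}^{\alpha}(\vtheta_{\rnarr})$ agree with $\vtheta_{\rnarr}$ outside layers $l$ and $l+1$: at layer $l$, the matrix $\mW^{[l]}$ gains an extra bottom row equal to its $s$-th row and $\vb^{[l]}$ gains an extra entry equal to $\vb^{[l]}_s$; at layer $l+1$, the $s$-th column of $\mW^{[l+1]}$ is multiplied by $(1-\alpha)$ and a new final column equal to $\alpha\,\mW^{[l+1]}_{[1:m_{l+1}],s}$ is appended. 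These explicit formulas drive everything that follows.

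First I would prove (i) by forward propagation. Since no parameters change in layers $1,\dots,l-1$, a straightforward induction gives $\vf^{[l']}_{\vtheta_{\rwide}}=\vf^{[l']}_{\vtheta_{\rnarr}}$ for $l'<l$. At layer $l$, the first $m_l$ entries of $\vf^{[l]}_{\vtheta_{\rwide}}$ coincide with $\vf^{[l]}_{\vtheta_{\rnarr}}$ because the top $m_l$ rows of the enlarged $(\mW^{[l]},\vb^{[l]})$ are unchanged, and the new last entry equals $\sigma(\mW^{[l]}_{s,[1:m_{l-1}]}\vf^{[l-1]}_{\vtheta_{\rnarr}}+\vb^{[l]}_s)=(\vf^{[l]}_{\vtheta_{\rnarr}})_s$. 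At layer $l+1$, the preactivation equals
\[
\sum_{j\neq s}\mW^{[l+1]}_{[1:m_{l+1}],j}(\vf^{[l]}_{\vtheta_{\rnarr}})_j+(1-\alpha)\mW^{[l+1]}_{[1:m_{l+1}],s}(\vf^{[l]}_{\vtheta_{\rnarr}})_s+\alpha\mW^{[l+1]}_{[1:m_{l+1}],s}(\vf^{[l]}_{\vtheta_{\rnarr}})_s+\vb^{[l+1]},
\]
which collapses to $\mW^{[l+1]}\vf^{[l]}_{\vtheta_{\rnarr}}+\vb^{[l+1]}$, so $\vf^{[l+1]}_{\vtheta_{\rwide}}=\vf^{[l+1]}_{\vtheta_{\rnarr}}$; a standard induction on $l'>l+1$ then completes (i). Property (ii) follows immediately by applying $\sigma^{(1)}$ entry-wise to the same preactivations, duplicating the $s$-th entry at layer $l$.

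Next I would establish (iii) via backpropagation. At the output layer, $\vz^{[L]}_{\vtheta_{\rwide}}=\nabla\ell(\vf_{\vtheta_{\rwide}},\vf^*)=\nabla\ell(\vf_{\vtheta_{\rnarr}},\vf^*)=\vz^{[L]}_{\vtheta_{\rnarr}}$ by (i). For $l+1\leq l'\leq L-1$, the recursion $\vz^{[l']}=(\mW^{[l'+1]})^\T(\vz^{[l'+1]}\circ\vg^{[l'+1]})$ together with the unchanged parameters at those layers and (ii) gives equality of error vectors by descending induction. For layer $l$, using the modified $\mW^{[l+1]}_{\rwide}$ above,
\[
\vz^{[l]}_{\vtheta_{\rwide}}=(\mW^{[l+1]}_{\rwide})^\T\!\left(\vz^{[l+1]}_{\vtheta_{\rwide}}\circ\vg^{[l+1]}_{\vtheta_{\rwide}}\right)
\]
picks up $(1-\alpha)(\vz^{[l]}_{\vtheta_{\rnarr}})_s$ at index $s$, $\alpha(\vz^{[l]}_{\vtheta_{\rnarr}})_s$ at the new last index, and is unchanged elsewhere. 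For $l'<l$, one more step gives $\vz^{[l-1]}_{\vtheta_{\rwide}}=(\mW^{[l]}_{\rwide})^\T(\vz^{[l]}_{\vtheta_{\rwide}}\circ\vg^{[l]}_{\vtheta_{\rwide}})$, and the duplicated $s$-th and $(m_l+1)$-th contributions recombine via the identity $(1-\alpha)+\alpha=1$ through the identical row $\mW^{[l]}_{s,[1:m_{l-1}]}$, restoring $\vz^{[l-1]}_{\vtheta_{\rnarr}}$; a further downward induction handles $l'<l-1$.

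Finally, for (iv), the top $m_l$ rows and the bias block at layer $l$ recover $(\mW^{[l]},\vb^{[l]})$ outright, while at layer $l+1$ the column $\mW^{[l+1]}_{[1:m_{l+1}],s}$ is always recoverable (from the $s$-th column when $\alpha\neq 1$, and from the appended last column when $\alpha=1$); injectivity follows. For (v), a direct inspection of the block definitions shows that each block of $\fT_{l,s}(\vtheta)$ and $\fV_{l,s}(\vtheta)$ is either zero or a linear readout of entries of $\vtheta$, so $\fT_{l,s}^{\alpha}$ itself is linear, in particular $\fT_{l,s}^{\alpha}(\vzero)=\vzero$, and hence affine. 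The main delicate step I expect is the backpropagation recombination in (iii) for $l'\leq l-1$: one must carefully use (ii) to match the $s$-th and $(m_l+1)$-th feature gradients and then exploit the duplicated row of $\mW^{[l]}_{\rwide}$ so that the $(1-\alpha)$ and $\alpha$ shares sum back to the original contribution; everything else is bookkeeping.
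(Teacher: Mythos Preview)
Your proposal is correct and follows essentially the same route as the paper's proof: forward recursion for (i), the $\sigma^{(1)}$ analogue for (ii), backward recursion with the $(1-\alpha)+\alpha$ recombination for (iii), and the observation that each block of $\fT_{l,s}^{\alpha}$ is a linear readout of $\vtheta$ for (v). The only minor deviation is in (iv): the paper argues by contrapositive (if $\vtheta_1\neq\vtheta_2$ then some unchanged or directly comparable block differs), whereas you argue by explicit left-invertibility, noting that $\mW^{[l+1]}_{[1:m_{l+1}],s}$ is recoverable from the $s$-th column when $\alpha\neq1$ and from the appended column when $\alpha=1$ (in fact their sum always equals $\mW^{[l+1]}_{[1:m_{l+1}],s}$, which makes the case split unnecessary); both arguments are equally short and valid.
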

%%%%%%%%%%%%%%%%%%%%%%%%%%%%%%%%%%%%%%%%%%%%%%%%%%%%%

%%%%%%%%%%%%%%%%%%%%%%%%%%%%%%%%%%%%%%%%
Directly from Lemma \ref{lem...appen...section1...null} and Lemma \ref{lem...appen...section1...split}, we obtain that both one-step null embedding and one-step splitting embedding satisfy the property of output preserving and representation preserving, and all we need is to check the property of criticality preserving.
%%%%%%%%%%%%%%%%%%%%%%%%%%%%%%%%%%%%%%%%
%%%%%%%%%%%%%%%%%%%%%%%%%%%%%%%%%%%%%%%%
\begin{prop}\label{prop...null}
For any one-step null embedding $\fT_{l,0}^{\alpha}$, given any  $\mathrm{NN}(\{m_l\}_{l=0}^{L})$ and its parameters
$\vtheta_{\rnarr}\in\mathrm{Tuple}_{\{m_0,\cdots,m_L\}}$ with $\mathrm{Tuple}_{\{m_0,\cdots,m_L\}}\in\fD_{l,0}$, we have $\vtheta_{\rwide}:=\fT_{l,0}^{\alpha}(\vtheta_{\rnarr})$ satisfies the following conditions: given any data $S$, loss $\ell(\cdot,\cdot)$ and activation $\sigma(\cdot)$,
if $\nabla_{\vtheta}\RS(\vtheta_{\rnarr})=\mzero$, then $\nabla_{\vtheta}\RS(\vtheta_{\rwide})=\mzero$.
\end{prop}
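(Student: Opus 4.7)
The plan is to reduce the claim to a direct computation using the backpropagation identities \eqref{eq..text...BackwardProp} together with the explicit description of feature vectors, feature gradients, and error vectors at $\vtheta_{\rwide}$ provided by Lemma \ref{lem...appen...section1...null}. Since $\fT_{l,0}^{\alpha}$ only perturbs the parameters in layers $l$ and $l+1$, I would partition the parameter list of the wider network into (a) parameters in layers $k \notin \{l, l+1\}$, (b) parameters in layer $l$ split further into the original $m_l$ rows of $\mW^{[l]}$ (and their biases) versus the added $(m_l+1)$-th row (and its bias $\alpha$), and (c) parameters in layer $l+1$ split into the original $m_l$ columns of $\mW^{[l+1]}$ versus the added $(m_l+1)$-th column and the bias $\vb^{[l+1]}$. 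For each block I would verify that $\nabla_{\vtheta}\RS(\vtheta_{\rwide}) = 0$ on that block assuming $\nabla_{\vtheta}\RS(\vtheta_{\rnarr}) = 0$.

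For block (a), the derivatives $\nabla_{\mW^{[k]}} \RS$ and $\nabla_{\vb^{[k]}} \RS$ are, by \eqref{eq..text...BackwardProp}, expressible in terms of $\ve^{[k]}$ and $\vf^{[k-1]}$; Lemma \ref{lem...appen...section1...null} gives $\vf^{[k-1]}_{\vtheta_{\rwide}} = \vf^{[k-1]}_{\vtheta_{\rnarr}}$ and $\vz^{[k]}_{\vtheta_{\rwide}} = \vz^{[k]}_{\vtheta_{\rnarr}}$, $\vg^{[k]}_{\vtheta_{\rwide}} = \vg^{[k]}_{\vtheta_{\rnarr}}$ for $k \notin \{l,l+1\}$, so these derivatives literally coincide with the narrower ones and vanish by hypothesis. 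For the original rows in block (b) and the original columns in block (c), the key identities from Lemma \ref{lem...appen...section1...null} are that the first $m_l$ entries of $\vf^{[l]}_{\vtheta_{\rwide}}$, $\vg^{[l]}_{\vtheta_{\rwide}}$, $\vz^{[l]}_{\vtheta_{\rwide}}$ agree with $\vf^{[l]}_{\vtheta_{\rnarr}}$, $\vg^{[l]}_{\vtheta_{\rnarr}}$, $\vz^{[l]}_{\vtheta_{\rnarr}}$, and that $\ve^{[l+1]}_{\vtheta_{\rwide}} = \ve^{[l+1]}_{\vtheta_{\rnarr}}$; plugging these into \eqref{eq..text...BackwardProp} again reproduces exactly the corresponding gradient components of the narrower network, all of which are zero.

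The genuinely new parameters are handled using the structural zeros in Lemma \ref{lem...appen...section1...null}. For the new row of $\mW^{[l]}$ and the new bias entry at position $m_l+1$, every gradient component factors through $(\ve^{[l]}_{\vtheta_{\rwide}})_{m_l+1} = (\vz^{[l]}_{\vtheta_{\rwide}})_{m_l+1}(\vg^{[l]}_{\vtheta_{\rwide}})_{m_l+1}$; since $(\vz^{[l]}_{\vtheta_{\rwide}})_{m_l+1} = 0$ by Lemma \ref{lem...appen...section1...null}(iii), both vanish. For the added $(m_l+1)$-th column of $\mW^{[l+1]}$, the $i$-th entry contributes
\[
\nabla_{(\mW^{[l+1]})_{i,m_l+1}} \RS(\vtheta_{\rwide}) = \Exp_S \bigl(\ve^{[l+1]}_{\vtheta_{\rwide}}\bigr)_i \bigl(\vf^{[l]}_{\vtheta_{\rwide}}\bigr)_{m_l+1} = \sigma(\alpha)\,\Exp_S \bigl(\ve^{[l+1]}_{\vtheta_{\rnarr}}\bigr)_i,
\]
and the remaining factor is exactly $\nabla_{(\vb^{[l+1]})_i}\RS(\vtheta_{\rnarr})$, which is zero by assumption.

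The only mildly delicate step is the third: one must recognise that the single new parameter $\alpha$ is \emph{not} itself a coordinate of $\vtheta_{\rwide}$ that needs its gradient checked — it is prescribed by the embedding — whereas the surrounding new weights do have gradients, and these gradients vanish only because either (i) $\vz^{[l]}$ already carries a zero in the new slot, or (ii) $\Exp_S \ve^{[l+1]}_{\vtheta_{\rnarr}} = 0$ at a critical point. Once this bookkeeping is clear, the proposition follows by checking each block, and the argument is independent of the choice of $\alpha$, which is why $\fT_{l,0}^{\alpha}(\vtheta_{\rnarr})$ produces a one-parameter family of critical points.
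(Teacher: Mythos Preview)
Your argument is correct and follows the same block-by-block strategy as the paper, invoking Lemma~\ref{lem...appen...section1...null} together with the backpropagation formulas~\eqref{eq..text...BackwardProp}. One point in your closing commentary needs correction: the entry $(\vb^{[l]})_{m_l+1}$ of the wider network, whose value the embedding sets equal to $\alpha$, \emph{is} a bona fide coordinate of $\vtheta_{\rwide}$ and its partial derivative must be checked. You in fact already checked it correctly a paragraph earlier via $(\vz^{[l]}_{\vtheta_{\rwide}})_{m_l+1}=0$, so the slip is purely expository and leaves no gap in the proof. For comparison, the paper handles this same coordinate by the alternative observation that $\RS(\vtheta_{\rwide})=\RS(\vtheta_{\rnarr})$ is independent of $\alpha$, whence $\nabla_{(\vb^{[l]})_{m_l+1}}\RS(\vtheta_{\rwide})=\partial_\alpha \RS(\vtheta_{\rnarr})=0$; your direct route through Lemma~\ref{lem...appen...section1...null}(iii) is equally valid and arguably cleaner. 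Your treatment of the new column of $\mW^{[l+1]}$, reducing $\Exp_S(\ve^{[l+1]}_{\vtheta_{\rnarr}})_i$ to $\nabla_{(\vb^{[l+1]})_i}\RS(\vtheta_{\rnarr})=0$, is also more explicit than the paper's.
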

%%%%%%%%%%%%%%%%%%%%%%%%%%%%%%%%%%%%%%%%%%%%%%%%%%%%%%%%

%%%%%%%%%%%%%%%%%%%%%%%%%%%%%%%%%%%%%%%%%%%%%%%%%%%%%%%%
%%%%%%%%%%%%%%%%%%%%%%%%%%%%%%%%%%%%%%%%
\begin{prop}\label{prop...split}
For any one-step splitting embedding $\fT_{l,s}^{\alpha}$, given any  $\mathrm{NN}(\{m_l\}_{l=0}^{L})$ and its parameters
$\vtheta_{\rnarr}\in\mathrm{Tuple}_{\{m_0,\cdots,m_L\}}$ with $\mathrm{Tuple}_{\{m_0,\cdots,m_L\}}\in\fD_{l,s}$, we have $\vtheta_{\rwide}:=\fT_{l,s}^{\alpha}(\vtheta_{\rnarr})$ satisfies the following conditions: given any data $S$, loss $\ell(\cdot,\cdot)$ and activation $\sigma(\cdot)$, if $\nabla_{\vtheta}\RS(\vtheta_{\rnarr})=\mzero$, then $\nabla_{\vtheta}\RS(\vtheta_{\rwide})=\mzero$.
\end{prop}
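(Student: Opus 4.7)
The plan is to leverage Lemma \ref{lem...appen...section1...split}, which already provides explicit formulas for the feature vectors, feature gradients, and error vectors at $\vtheta_{\rwide}$ in terms of those at $\vtheta_{\rnarr}$. Given these, the backpropagation identities in \eqref{eq..text...BackwardProp} reduce the criticality-preserving property to a block-by-block comparison of the gradient matrices at each layer. So the strategy is: for each layer $k\in[L]$, write $\nabla_{\mW^{[k]}}\RS(\vtheta_{\rwide})$ and $\nabla_{\vb^{[k]}}\RS(\vtheta_{\rwide})$ via \eqref{eq..text...BackwardProp} in terms of the feature/error quantities of $\vtheta_{\rwide}$, then substitute using Lemma \ref{lem...appen...section1...split} and match entry-wise against $\nabla_{\mW^{[k]}}\RS(\vtheta_{\rnarr})=\mzero$ and $\nabla_{\vb^{[k]}}\RS(\vtheta_{\rnarr})=\mzero$.

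The proof splits into three cases. First, for $k\notin\{l,l+1\}$, Lemma \ref{lem...appen...section1...split} gives $\vf^{[k-1]}_{\vtheta_{\rwide}}=\vf^{[k-1]}_{\vtheta_{\rnarr}}$, $\vg^{[k]}_{\vtheta_{\rwide}}=\vg^{[k]}_{\vtheta_{\rnarr}}$, and $\vz^{[k]}_{\vtheta_{\rwide}}=\vz^{[k]}_{\vtheta_{\rnarr}}$, so the gradient expressions are identical to those at $\vtheta_{\rnarr}$ and vanish by hypothesis. Second, for $k=l+1$, the error/gradient factor $\vz^{[l+1]}\circ\vg^{[l+1]}$ is unchanged, while $\vf^{[l]}_{\vtheta_{\rwide}}=\bigl[(\vf^{[l]}_{\vtheta_{\rnarr}})^\T,(\vf^{[l]}_{\vtheta_{\rnarr}})_s\bigr]^\T$ simply appends the $s$-th entry as the $(m_l{+}1)$-th entry. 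Therefore the first $m_l$ columns of $\nabla_{\mW^{[l+1]}}\RS(\vtheta_{\rwide})$ reproduce $\nabla_{\mW^{[l+1]}}\RS(\vtheta_{\rnarr})=\mzero$, and the extra $(m_l{+}1)$-th column equals the $s$-th column of $\nabla_{\mW^{[l+1]}}\RS(\vtheta_{\rnarr})$, which is also zero; the bias gradient is unchanged.

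The genuinely delicate case is $k=l$, where both $\vz^{[l]}$ and $\vg^{[l]}$ change. Using Lemma \ref{lem...appen...section1...split}, the entries of $\vz^{[l]}_{\vtheta_{\rwide}}\circ\vg^{[l]}_{\vtheta_{\rwide}}$ are $(\vz^{[l]}_{\vtheta_{\rnarr}}\circ\vg^{[l]}_{\vtheta_{\rnarr}})_i$ for $i\neq s,m_l{+}1$, the value $(1-\alpha)(\vz^{[l]}_{\vtheta_{\rnarr}}\circ\vg^{[l]}_{\vtheta_{\rnarr}})_s$ at position $s$, and $\alpha(\vz^{[l]}_{\vtheta_{\rnarr}}\circ\vg^{[l]}_{\vtheta_{\rnarr}})_s$ at position $m_l{+}1$; notice the $(1{-}\alpha)$ and $\alpha$ factors arise only from $\vz^{[l]}$ while $\vg^{[l]}$ is merely augmented with $(\vg^{[l]}_{\vtheta_{\rnarr}})_s$. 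Combined with the unchanged $\vf^{[l-1]}$, the resulting $(m_l{+}1)\times m_{l-1}$ matrix $\nabla_{\mW^{[l]}}\RS(\vtheta_{\rwide})$ has rows $i\neq s,m_l{+}1$ equal to the corresponding rows of $\nabla_{\mW^{[l]}}\RS(\vtheta_{\rnarr})$, row $s$ equal to $(1-\alpha)$ times its $s$-th row, and the new last row equal to $\alpha$ times its $s$-th row; all three blocks vanish by criticality of $\vtheta_{\rnarr}$. The analogous argument handles $\nabla_{\vb^{[l]}}\RS(\vtheta_{\rwide})$.

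The main obstacle is simply the careful bookkeeping in the $k=l$ case, particularly tracking how the $(1{-}\alpha)$ and $\alpha$ weighting of the split $s$-th row of $\vz^{[l]}$ propagates through the Hadamard product with the augmented $\vg^{[l]}$ and the outer product with $\vf^{[l-1]}$. No new algebraic identity is required beyond the formulas of Lemma \ref{lem...appen...section1...split} and \eqref{eq..text...BackwardProp}; once the correct block decomposition is written down, each block reduces to a multiple of an entry of $\nabla\RS(\vtheta_{\rnarr})=\mzero$.
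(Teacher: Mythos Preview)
Your proposal is correct and follows essentially the same approach as the paper's own proof: both invoke Lemma~\ref{lem...appen...section1...split} to express the feature vectors, feature gradients, and error vectors of $\vtheta_{\rwide}$ in terms of those of $\vtheta_{\rnarr}$, then split the gradient computation into the three cases $k\notin\{l,l+1\}$, $k=l+1$, and $k=l$, using the backpropagation identities \eqref{eq..text...BackwardProp} to reduce each block to a scalar multiple of an entry of $\nabla_{\vtheta}\RS(\vtheta_{\rnarr})=\mzero$. The treatment of the delicate $k=l$ case, with the $(1-\alpha)$ and $\alpha$ weighting of the split $s$-th row, matches the paper's argument exactly.
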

%%%%%%%%%%%%%%%%%%%%%%%%%%%%%%%%%%%%%%%%%%%%%%%%%%%%%%%%
%%%%%%%%%%%%%%%%%%%%%%%%%%%%%%%%%%%%%%%%%%%%%%%%%%%%%%%%
Combining altogether Lemma \ref{lem...appen...section1...null}, Lemma \ref{lem...appen...section1...split}, Proposition \ref{prop...null} and Proposition \ref{prop...split}, we finish our proof for Theorem \ref{thm...Null+SplitisCritical}.
%%%%%%%%%%%%%%%%%%%%%%%%%%%%%%%%%%%%%%%%%%%%%%%%%%%%%%%%
\subsection{Composition of one-step embeddings and the Embedding Principle}
We first define the composition of two embeddings, which would naturally leads to the composition of arbitrary embeddings.

\begin{definition}[Composition of  two embeddings]
Suppose we have  an $\mathrm{NN}(\{m_l\}_{l=0}^{L})$ and its  parameters $\vtheta\in \mathrm{Tuple}_{\{m_0,\cdots,m_L\}}$, and we have two embeddings $\fT$ and $\fT'$ satisfying \\ $\fT:\mathrm{Tuple}_{\{m_0,\cdots,m_L\}}\to\mathrm{Tuple}_{\{m'_0,\cdots,m'_L\}}$, $\fT':\mathrm{Tuple}_{\{m'_0,\cdots,m'_L\}}\to\mathrm{Tuple}_{\{m''_0,\cdots,m''_L\}}$, with $\fT'$ maps the range of $\fT$ into  $\mathrm{Tuple}_{\{m''_0,\cdots,m''_L\}}$, where    $m''_0=m'_0=m_0$, $m''_L=m'_L=m_L$, and for any $l\in[L-1]$, $m''_l\geq m'_l\geq m_l$. Since $\fT'\fT$ is obviously an injective operator, then $\fT'\fT$ is an embedding $\fT'\fT:\mathrm{Tuple}_{\{m_0,\cdots,m_L\}}\to\mathrm{Tuple}_{\{m''_0,\cdots,m''_L\}}$, and we term $\fT'\fT$ the composition of $\fT'$ and $\fT$,i.e., for any  $\vtheta\in \mathrm{Tuple}_{\{m_0,\cdots,m_L\}}$,
\[
\fT'\fT(\vtheta):=\fT'(\fT(\vtheta)).
\]
\end{definition}

For simplicity, for any $K\in\sZ^+, K\geq 2$, we denote hereafter    $\prod_{l=1}^{K}\fT_l:=\fT_K\cdots\fT_1$ as the composition of $K$ individual embeddings $\left\{\fT_l\right\}_{l=1}^K$. For $K=1$, $\prod_{l=1}^{1}\fT_l:=\fT_1$.
\begin{definition}[\textbf{$K$-step~(Multi-step) composition embedding}]\label{def...Kstep}
Suppose we have two vectors $\vl=(l_k)_{k=1}^K,~l_k\in[L-1]$,   $\valpha=(\alpha_k)_{k=1}^K\subset\sR^K$, and a sequence $\{m^{(0)}_l\}_{l=1}^L$ with $m^{(0)}_l:=m_l$ for $l\in[L]$.
Then given  an   $\mathrm{NN}(\{m_l\}_{l=0}^{L})$ and its parameters
$\vtheta$, a {\textbf{$K$-step composition embedding}},  $\fT:\mathrm{Tuple}_{\{m_0,\cdots,m_L\}}\to\mathrm{Tuple}_{\{m'_0,\cdots,m'_L\}}$  with $K=\sum_{l=1}^{L-1}m'_j-\sum_{l=1}^{L-1}m_l$,  is defined recursively by the composition of $K$ one-step null embeddings or one-step splitting embeddings.\\
Formally speaking, a $K$-step composition embedding $\fT^{\valpha}_{\vl,\vs}$ is defined recursively as follows:\\
For $n=1$, choose $s_1\in[m^{(0)}_{l_1}]\cup\{0\}$, then
\[    
\fT^{\valpha,(1)}_{\vl,\vs} :=\fT_{l_1,s_1}^{\alpha_1}=\fT_{l_1,s_1}+\alpha_1 \fV_{l_1,s_1}.
\]
Update the sequence from $\{m^{(0)}_l\}_{l=1}^L$ to $\{m^{(1)}_l\}_{l=1}^L$ following: $m^{(1)}_{l}=m^{(0)}_{l}$ for $l\in[L-1]\backslash \{l_1\}$, and  $m^{(1)}_{l_1}=m^{(0)}_{l_1}+1$;\\
Then inductively, for $n=k$,     choose $s_k\in[m^{(k-1)}_{l_k}]\cup\{0\}$, then
\[    
\fT^{\valpha,(k)}_{\vl,\vs} :=\fT_{l_k,s_k}^{\alpha_k}\fT^{\valpha,(k-1)}_{\vl,\vs}.
\]
Update the sequence from $\{m^{(k-1)}_l\}_{l=1}^L$ to $\{m^{(k)}_l\}_{l=1}^L$ following: $m^{(k)}_{l}=m^{(k-1)}_{l}$ for $l\in[L-1]\backslash \{l_k\}$, and  $m^{(k)}_{l_k}=m^{(k-1)}_{l_k}+1$.\\
Finally, $\fT:=\fT^{\valpha}_{\vl,\vs}:=\fT^{\valpha,(K)}_{\vl,\vs}$. 
\end{definition}

\begin{rmk}
For each $i\in[K]$, $\fT_{l_i,s_i}^{\alpha_i}$ is regarded as its restriction on the tuple class $\mathrm{Tuple}_{\{m_0^{(i-1)},\cdots,m_L^{(i-1)}\}}\in\fD_{l_i,s_i}$, hence 
\[
\fT_{l_i,s_i}^{\alpha_i}:\mathrm{Tuple}_{\{m_0^{(i-1)},\cdots,m_L^{(i-1)}\}}\to\mathrm{Tuple}_{\{m_0^{(i)},\cdots,m_L^{(i)}\}}.
\]
\end{rmk}

%%%%%%%%%%%%%%%%%%%%%%%%%%%%%%%%%%%%%%%%%%%%%%%%%
\begin{theorem}\label{prop...KStep}
A $K$-step composition embedding is a critical embedding. 
\end{theorem}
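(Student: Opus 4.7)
The plan is to prove Theorem \ref{prop...KStep} by induction on $K$, using Theorem \ref{thm...Null+SplitisCritical} as the base case and the fact that each of the defining properties of a critical embedding is preserved under composition.

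For the base case $K=1$, the statement $\fT^{\valpha}_{\vl,\vs} = \fT^{\alpha_1}_{l_1,s_1}$ is either a one-step null embedding (when $s_1=0$) or a one-step splitting embedding (when $s_1 \in [m^{(0)}_{l_1}]$), and both have been shown to be critical embeddings by Theorem \ref{thm...Null+SplitisCritical}. For the inductive step, I would assume that $\fT^{\valpha,(K-1)}_{\vl,\vs}: \mathrm{Tuple}_{\{m_0^{(0)},\cdots,m_L^{(0)}\}} \to \mathrm{Tuple}_{\{m_0^{(K-1)},\cdots,m_L^{(K-1)}\}}$ is a critical embedding, and show that $\fT^{\valpha,(K)}_{\vl,\vs} = \fT^{\alpha_K}_{l_K,s_K} \circ \fT^{\valpha,(K-1)}_{\vl,\vs}$ is a critical embedding. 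The update rule in Definition \ref{def...Kstep} ensures that $\mathrm{Tuple}_{\{m_0^{(K-1)},\cdots,m_L^{(K-1)}\}}$ lies in the extended domain $\fD_{l_K,s_K}$ of the next one-step embedding, so the composition is well defined.

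I would then verify the four requirements of a critical embedding one by one. First, the composition of two affine embeddings is affine: if $\fT_1(\vtheta) = \mA_1 \vtheta + \vc_1$ and $\fT_2(\vtheta) = \mA_2 \vtheta + \vc_2$, then $\fT_2 \fT_1(\vtheta) = \mA_2 \mA_1 \vtheta + (\mA_2 \vc_1 + \vc_2)$, which is affine, and the composition of two injective maps is injective, so $\fT^{\valpha,(K)}_{\vl,\vs}$ is an affine embedding. Second, output preservation is transitive: setting $\vtheta' := \fT^{\valpha,(K-1)}_{\vl,\vs}(\vtheta_{\rnarr})$ and $\vtheta_{\rwide} := \fT^{\alpha_K}_{l_K,s_K}(\vtheta')$, the inductive hypothesis gives $\vf_{\vtheta_{\rnarr}} \equiv \vf_{\vtheta'}$ and Theorem \ref{thm...Null+SplitisCritical} gives $\vf_{\vtheta'} \equiv \vf_{\vtheta_{\rwide}}$. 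Third, representation preservation is likewise transitive: the span equalities in condition (ii) of the definition of critical embedding, applied at each layer $l \in [L]$, chain together across the two embeddings by transitivity of set equality.

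Finally, for criticality preservation, suppose $\nabla_{\vtheta} \RS(\vtheta_{\rnarr}) = \mzero$. By the inductive hypothesis applied to $\fT^{\valpha,(K-1)}_{\vl,\vs}$, we obtain $\nabla_{\vtheta} \RS(\vtheta') = \mzero$ for $\vtheta' = \fT^{\valpha,(K-1)}_{\vl,\vs}(\vtheta_{\rnarr})$, treating $\vtheta'$ as a parameter of the intermediate network $\mathrm{NN}(\{m_l^{(K-1)}\}_{l=0}^{L})$. Then applying Theorem \ref{thm...Null+SplitisCritical} to the one-step embedding $\fT^{\alpha_K}_{l_K,s_K}$, which maps $\vtheta'$ to $\vtheta_{\rwide} = \fT^{\valpha,(K)}_{\vl,\vs}(\vtheta_{\rnarr})$, yields $\nabla_{\vtheta} \RS(\vtheta_{\rwide}) = \mzero$. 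Combining these three properties with the affine embedding property completes the induction.

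The main subtlety rather than an obstacle is the careful bookkeeping of the intermediate tuple classes $\mathrm{Tuple}_{\{m_0^{(k)},\cdots,m_L^{(k)}\}}$; one must be explicit that each one-step embedding $\fT^{\alpha_k}_{l_k,s_k}$ in the chain is interpreted as its restriction to the correct domain, which is built into Definition \ref{def...Kstep} via the width update rule. Since criticality, output and representation of the intermediate parameter $\vtheta'$ are all defined relative to the intermediate network and each one-step embedding is critical regardless of which underlying $\mathrm{NN}$ it acts on, the induction passes through cleanly without any further analytic difficulty.
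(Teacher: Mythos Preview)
Your proposal is correct and follows essentially the same approach as the paper: induction on $K$, with Theorem \ref{thm...Null+SplitisCritical} supplying the base case, and the inductive step reducing to showing that the composition of two critical embeddings is again a critical embedding by verifying injectivity, affineness, output preservation, representation preservation, and criticality preservation one at a time. The only cosmetic difference is that the paper checks affineness by directly verifying that $\widetilde{\fT_2\fT_1}(\vtheta):=\fT_2\fT_1(\vtheta)-\fT_2\fT_1(\vzero)$ is linear, whereas you invoke the matrix form $\mA_2\mA_1\vtheta+(\mA_2\vc_1+\vc_2)$; both are equivalent.
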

%%%%%%%%%%%%%%%%%%%%%%%%%%%%%%%%%%%%%%%%%%

The composition of one-step embeddings renders a feasible method to embedding any critical point of the loss landscape of a narrow NN  to a critical point with the same output function of any wider NN, therefore, we have the following Embedding Principle.

\begin{theorem}[Embedding Principle]\label{thm..embeddingPrinciple}
Given any NN and any  $K$-neuron wider NN, there exists a $K$-step composition embedding $\fT$ satisfying  that:
For any given data $S$, loss function $\ell(\cdot,\cdot)$, activation function $\sigma(\cdot)$,  given any   critical point $\vtheta^{\rc}_{\rnarr}$ of the narrower NN,   $\vtheta^{\rc}_{\rwide}:=\fT(\vtheta^{\rc}_{\rnarr})$ is still a critical point of the  $K$-neuron wider NN with the same output function, i.e., $\vf_{\vtheta^{\rc}_{\rnarr}}=\vf_{\vtheta^{\rc}_{\rwide}}$.
\end{theorem}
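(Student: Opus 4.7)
The plan is to realize the Embedding Principle as a direct corollary of Theorem \ref{prop...KStep}: once we know every $K$-step composition embedding is a critical embedding, it suffices to exhibit a single such embedding that lifts the parameter space of the narrower NN into that of the $K$-neuron wider NN. So the work reduces to a bookkeeping construction matching layer widths.

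Concretely, set $K_l := m'_l - m_l \geq 0$ for $l \in [L-1]$, so that $\sum_{l=1}^{L-1} K_l = K$. I would construct the index vector $\vl=(l_1,\ldots,l_K)\in[L-1]^K$ by listing each layer index $l$ exactly $K_l$ times (say, in increasing order of $l$). I would then choose $\vs=(0,\ldots,0)$ so that every one-step move is a null embedding, and any $\valpha\in\sR^K$ (for concreteness $\valpha=\vzero$). Feeding $(\vl,\vs,\valpha)$ into the recursion of Definition \ref{def...Kstep} produces the $K$-step composition embedding $\fT := \fT^{\valpha}_{\vl,\vs}$.

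I then verify by induction on $k$ that the intermediate width sequence $\{m_l^{(k)}\}_{l=0}^L$ produced by the recursion satisfies $m_l^{(k)} = m_l + \#\{i\le k: l_i = l\}$, using the update rule $m^{(k)}_{l_k}=m^{(k-1)}_{l_k}+1$ built into the definition. At $k=K$, the multiplicity of $l$ in $\vl$ is $K_l$, so $m_l^{(K)} = m_l + K_l = m'_l$ for $l\in[L-1]$, while $m_0^{(K)}=m_0=m'_0$ and $m_L^{(K)}=m_L=m'_L$. Hence $\fT:\mathrm{Tuple}_{\{m_0,\cdots,m_L\}}\to\mathrm{Tuple}_{\{m'_0,\cdots,m'_L\}}$ has the desired codomain.

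Finally I invoke Theorem \ref{prop...KStep} to conclude that $\fT$ is a critical embedding. Unpacking the definition of critical embedding, for any data $S$, loss $\ell$ and activation $\sigma$, property (i) gives the output-preserving identity $\vf_{\vtheta^{\rc}_{\rnarr}}=\vf_{\vtheta^{\rc}_{\rwide}}$ and property (iii) gives criticality preservation $\nabla_{\vtheta}\RS(\vtheta^{\rc}_{\rwide})=\vzero$, which together yield the statement. I do not expect a genuine obstacle here; the only care needed is to confirm that each intermediate tuple class $\mathrm{Tuple}_{\{m_0^{(k-1)},\cdots,m_L^{(k-1)}\}}$ lies in the extended domain $\fD_{l_k,0}$ so the next one-step null embedding is well-defined, which is immediate since $l_k\in[L-1]$ and null embeddings have no constraint on $s$ beyond $s=0$. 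If one prefers to allow splittings as well, the same argument goes through verbatim with arbitrary $s_k\in[m^{(k-1)}_{l_k}]\cup\{0\}$, yielding a whole family of critical embeddings realizing the Embedding Principle.
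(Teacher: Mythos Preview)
Your proposal is correct and follows the same approach as the paper, which simply points to the $K$-step construction of Definition~\ref{def...Kstep} and invokes Theorem~\ref{prop...KStep}. You have spelled out explicitly the choice of $\vl$, $\vs$, $\valpha$ and the width-tracking induction that the paper leaves implicit, but the underlying argument is identical.
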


\begin{proof}
Existence of a $K$-step composition embedding $\fT$ can be seen from the $K$-step construction given in Definition \ref{def...Kstep}, and we finish the proof.
\end{proof}
Moreover, we obtain a corollary from Theorem \ref{thm..embeddingPrinciple} stating the Embedding Principle for the critical functions.
\begin{corollary}[Embedding Principle of critical functions] 
Given any NN and any wider NN, for any given data $S$, loss function $\ell(\cdot,\cdot)$ and  activation function $\sigma(\cdot)$,  
\begin{equation}
	\fF^{\rc}_{\rnarr}\subset\fF^{\rc}_{\rwide} 
\end{equation} 
where $\fF_{\rnarr}^{\rc}=\{\vf_{\vtheta}|\vtheta\in\vTheta_{\rnarr}^{\rc}\}$ and $\fF_{\rwide}^{\rc}=\{\vf_{\vtheta}|\vtheta\in\vTheta_{\rwide}^{\rc}\}$ are the sets of critical functions.   
\end{corollary}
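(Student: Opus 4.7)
The plan is to derive the inclusion directly from Theorem~\ref{thm..embeddingPrinciple} by pushing the parameter-level embedding forward to the level of output functions. The argument is pure element-chasing: take an arbitrary element of $\fF^{\rc}_{\rnarr}$, pick any preimage in $\vTheta^{\rc}_{\rnarr}$, embed it via the critical embedding furnished by the theorem, and observe that its image lies in $\vTheta^{\rc}_{\rwide}$ while realising the same output function.

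Concretely, I would first fix an arbitrary $\vf \in \fF^{\rc}_{\rnarr}$. By the definition of $\fF^{\rc}_{\rnarr}$, there exists $\vtheta^{\rc}_{\rnarr} \in \vTheta^{\rc}_{\rnarr}$ with $\vf = \vf_{\vtheta^{\rc}_{\rnarr}}$. Next, setting $K = \sum_{l=1}^{L-1}(m'_l - m_l)$ so that the wider NN is $K$-neuron wider than the narrower one, I would invoke Theorem~\ref{thm..embeddingPrinciple} to produce a $K$-step composition embedding $\fT$ such that $\vtheta^{\rc}_{\rwide} := \fT(\vtheta^{\rc}_{\rnarr}) \in \vTheta^{\rc}_{\rwide}$ and $\vf_{\vtheta^{\rc}_{\rwide}} = \vf_{\vtheta^{\rc}_{\rnarr}}$. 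Chaining these equalities yields $\vf = \vf_{\vtheta^{\rc}_{\rwide}} \in \fF^{\rc}_{\rwide}$, and since $\vf$ was arbitrary, the inclusion $\fF^{\rc}_{\rnarr} \subset \fF^{\rc}_{\rwide}$ follows.

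The only nuance worth highlighting is the uniformity of the embedding in the narrow parameter. Theorem~\ref{thm..embeddingPrinciple} actually delivers a single $K$-step composition embedding $\fT$ that simultaneously maps every element of $\vTheta^{\rc}_{\rnarr}$ into $\vTheta^{\rc}_{\rwide}$ while preserving the output, since the construction of Definition~\ref{def...Kstep} depends only on the width schedule and the indices and scalars $(\vl,\vs,\valpha)$ chosen at each step, not on the particular narrow-NN parameter being embedded. For the present corollary it would even suffice to pick a possibly different $\fT$ for each $\vf$; both routes give the claim. No genuine obstacle arises — the corollary is a direct consequence of the embedding-level statement, requiring nothing more than passing from parameters to the output functions they induce.
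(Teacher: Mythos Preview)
Your proposal is correct and matches the paper's approach: the paper presents this corollary as an immediate consequence of Theorem~\ref{thm..embeddingPrinciple} without spelling out a proof, and your element-chasing argument is precisely the intended (and only natural) way to pass from the parameter-level statement to the function-level inclusion.
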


\subsection{Importance of Embedding Principle}
Mathematically speaking, Embedding Principle is a natural result of an embedding operation that preserves output function and criticality. However, we must emphasize the importance of stating and proving it explicitly in this work. For a long time, researchers study the loss landscape of NN from an optimization perspective focusing specifically on its property in the parameter space. Lots of works make effort in tackling problems like whether bad local minima exist, whether local minima are also global minima and whether all saddle points are strict-saddle points, etc. However, because the loss landscape of NN also has profound impact on its implicit regularization and generalization performance,  it is important to look into the loss landscape from the perspective of function spaces. 

Motivated by the phenomena of Frequency Principle \citep{xu_training_2018,xu2019frequency,rahaman2018spectral,zhang2021linear,luo2019theory} and condensation \citep{luo2021phase}, we are very interested in the question of what are the critical functions of an NN loss landscape that may attract the training trajectory in the function space. Specifically, we care about whether there are ``simple'' critical functions in wide NNs that may implicitly regularize the training to help avoid overfitting.  For example, in our experiments shown in Fig. \ref{fig:EPrinciple_example}, we clearly observe that the training of a width-$500$ two-layer tanh-NN in fitting $50$ data points experiences two stages. At the first stage, it learns an output function close to the best fitting of the width-$1$ tanh-NN and stays for a while, seemingly that it encounters a saddle point.  At the second stage, it converges to an output function close to the best fitting of the width-$3$ tanh-NN, which interpolates all the data points. Clearly, the complexity of the output function of the width-$500$ NN gradually increases during the training, leading to a non-overfitting interpolation of data despite of possessing overfitting capability. However, before studying the universality of such training behavior clearly relevant to generalization, it is important to have a theoretical answer to whether such ``simple'' critical functions always exist even in very wide NNs.  

\begin{figure}[h]
\centering
\subfigure[]{\includegraphics[width=0.33\textwidth]{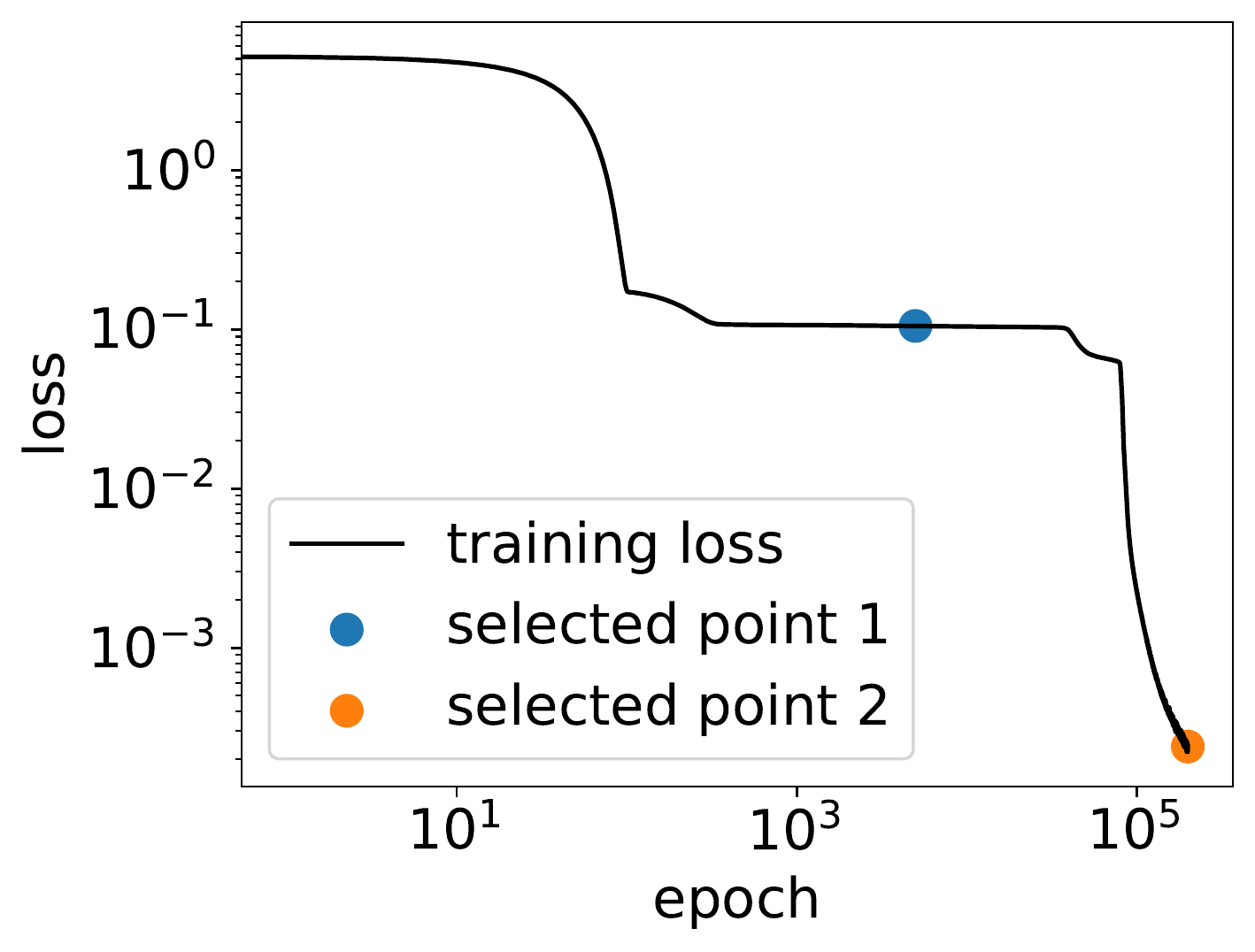}}
\subfigure[]{\includegraphics[width=0.31\textwidth]{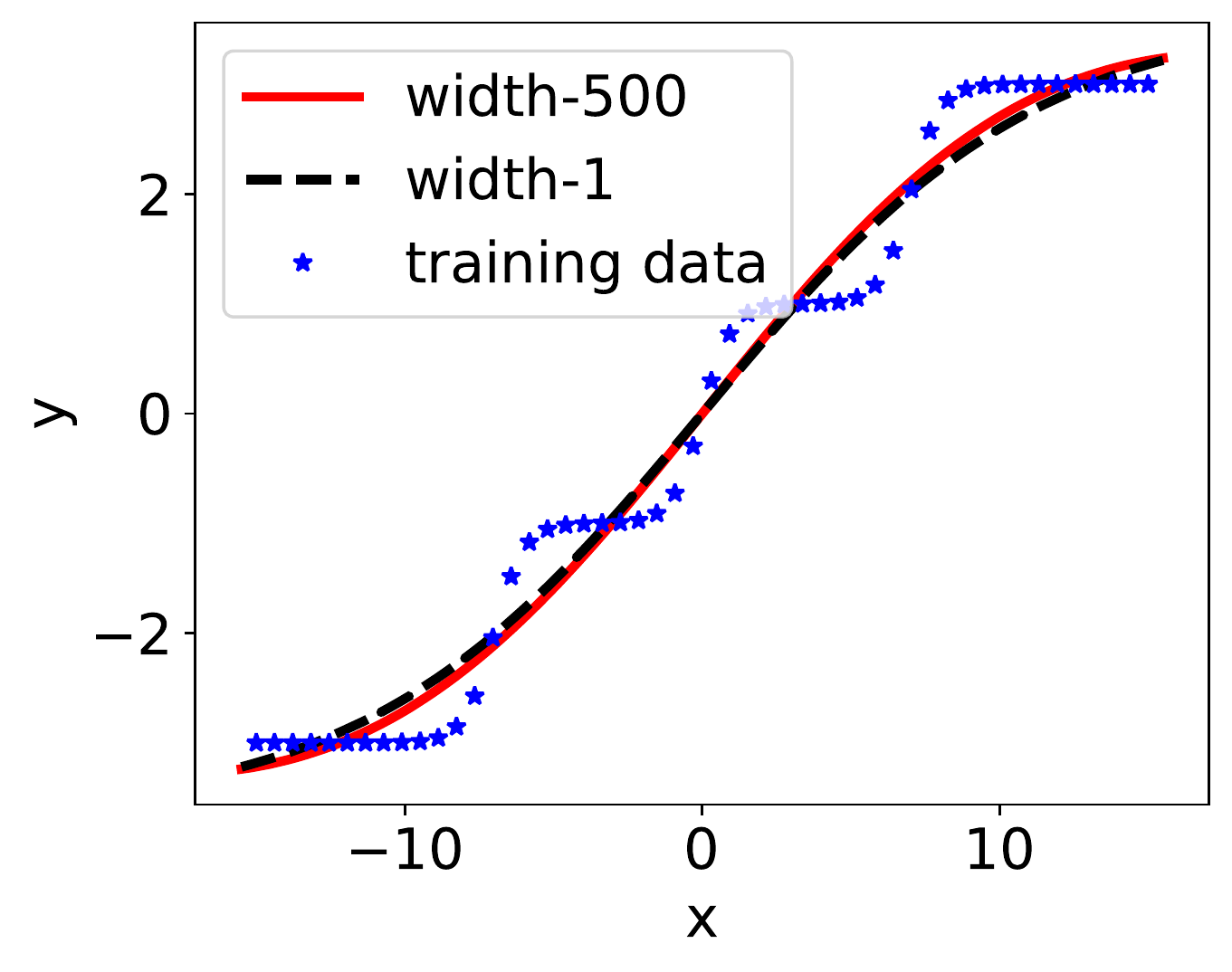}}
\subfigure[]{\includegraphics[width=0.31\textwidth]{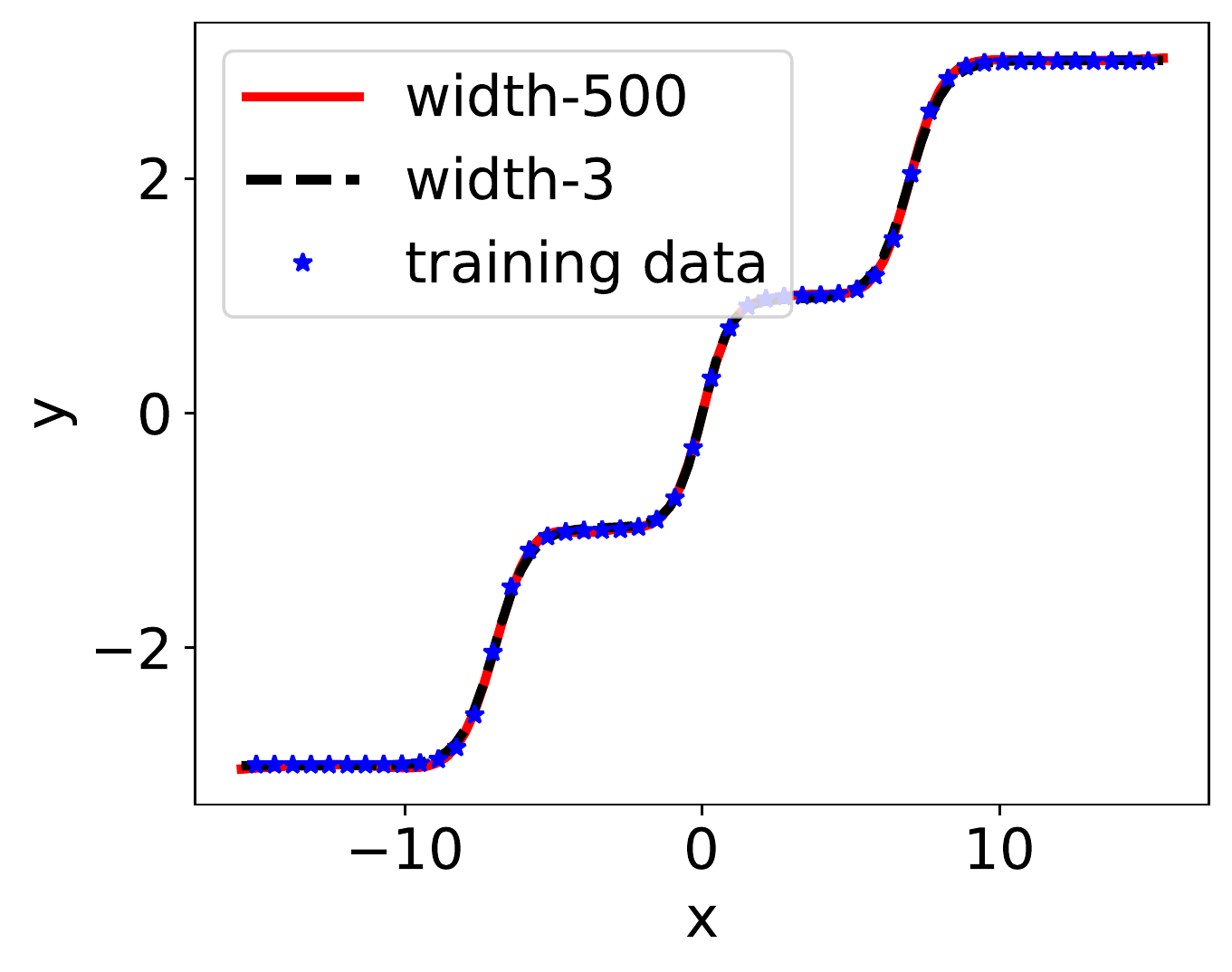}}
% 	\subfigure[]{\includegraphics[width=0.75\textwidth]{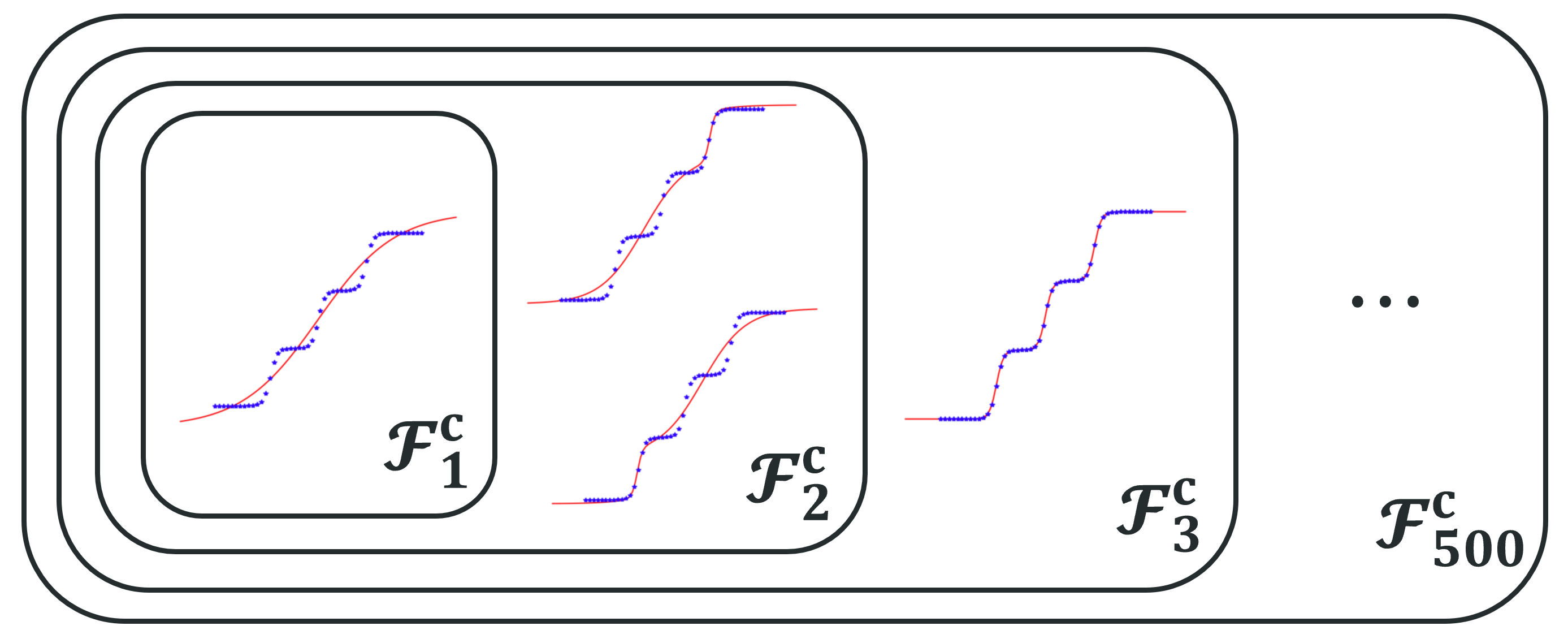}}
\caption{(a) The training loss of a two-layer tanh neural network with $500$ hidden neurons. (b, c) Red solid: the DNN output at a training step, where the blue dot and the orange dot in (a) corresponds to (b) and (c), respectively; Black dashed: the output of the global minimum of the width-$1$ NN in (b) and the width-$3$ NN in (c), respectively; Blue dots: training data. This is also illustrated in \cite{zhang2021embedding}
	% 	(d) Some critical functions of width-$500$ tanh-NN obtained by the Embedding Principle.
	\label{fig:EPrinciple_example}}
\end{figure} 

By stating and proving the Embedding Principle explicitly, we provide a clear answer of ``YES'' to above question. Moreover, we unravel the exact meaning of ``simple'' critical functions---critical functions of narrower networks. An illustration of critical functions of a two-layer width-$m$ NN and a three-layer width-$\{m_1,m_2\}$ NN predicted by the Embedding Principle are shown in Fig. \ref{fig:EPrinciple_illus}(a), (b), respectively. Critical functions of the width-$500$ tanh-NN for experiments in Fig. \ref{fig:EPrinciple_example} are illustrated in Fig. \ref{fig:EPrinciple_illus}(c). Note that, by Frequency Principle \citep{xu_training_2018,xu2019frequency,rahaman2018spectral,zhang2021linear,luo2019theory}, we consider functions dominated by low frequency components as ``simple'' functions that training of an NN is implicitly biased to. Here, by Embedding Principle,  ``simple'' functions are those critical functions of narrow NNs, which signify a hierarchical structure of fittings of training data with different complexities indicated by the narrowerest NN a critical function belongs to.
Combining with empirical observations of Frequency Principle and condensation, we conjecture that nonlinear training of an NN is implicitly biased towards these ``simple'' critical functions. It is clearly important to look further into this conjecture in the future works.

\begin{figure}[h]
\centering
\subfigure[]{\includegraphics[width=0.5\textwidth]{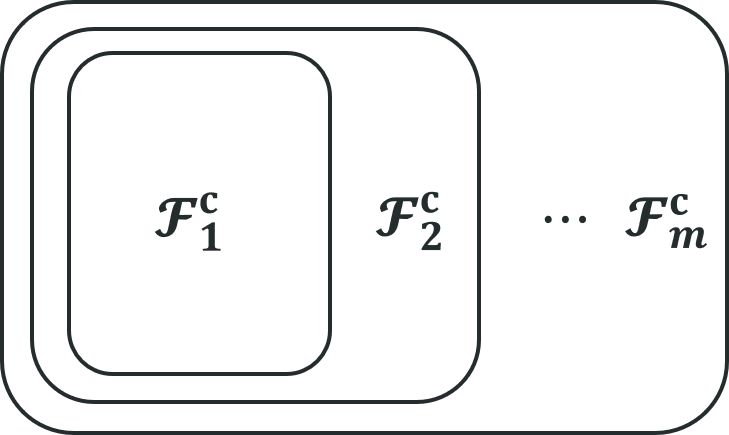}}\hspace{10pt}
\subfigure[]{\includegraphics[width=0.35\textwidth]{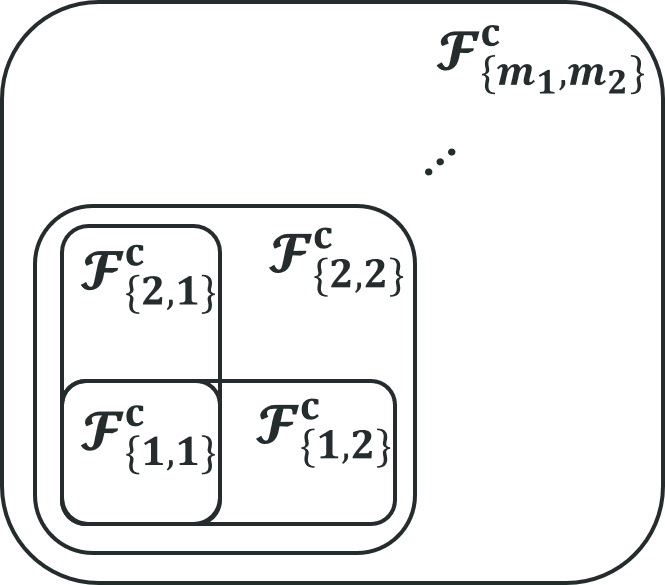}}\\
\subfigure[]{\includegraphics[width=0.75\textwidth]{pic/tanh_example.png}}
\caption{Illustration of the Embedding Principle for the critical functions of (a) two-layer width-$m$ NN and (b) three-layer width-$\{m_1,m_2\}$ NN. (c) Some critical functions of width-$500$ tanh-NN predicted by the Embedding Principle for experiments in Fig. \ref{fig:EPrinciple_example}. \label{fig:EPrinciple_illus}}
\end{figure}

\section{General compatible critical embedding}

Although the Embedding Principle has been proven above, discovering as many critical embeddings as possible remains an important task because the more critical embeddings we discover, the more we understand about the critical submanifolds of an NN, e.g., their dimension and geometry, corresponding to these critical functions of narrower NNs. In the following, we propose a much wider class of general compatible embeddings with above class of one-step and multi-step composition embeddings as its special cases.

To define a general compatible embedding from a narrow NN to a wide NN, we first define mappings that establish the relation between neuron indices of the narrow NN and neuron indices of the wide NN.

\begin{definition}[\textbf{Pull-back index mapping and total index mapping }]
Given  an   $\mathrm{NN}(\{m_l\}_{l=0}^{L})$ and a wider $\mathrm{NN}(\{m'_l\}_{l=0}^{L})$, a   \textbf{pull-back index mapping } $\fI:=\{\fI_l\}_{l=0}^{L}$  from the  wider NN to  the narrower NN is defined as follows:
For any fixed $l\in[L-1]$, $\fI_l$ maps a  neuron index $s'\in[m'_l]$ of the wider $\mathrm{NN}(\{m'_l\}_{l=0}^{L})$ in layer $l$ to a neuron index $s\in[m_l]\sqcup\{0\}$ in the same layer of the narrower $\mathrm{NN}(\{m_l\}_{l=0}^{L})$. As for the case of $l=0$ and $l=L$, $\fI_0$ and $\fI_L$ are always the identity maps since their indices are fixed once data is given with $m'_0=m_0=d$ and $m'_L=m_L=d'$. To sum up
\begin{equation}
	\fI_0: [m'_0]\to [m_0],~\fI_l: [m'_l]\to [m_l]\sqcup\{0\}~\text{for}~l\in[L-1],~\fI_L: [m'_L]\to [m_L].
\end{equation}
Moreover, for  any nonzero index $s\neq 0$, if the push-forward $\fI^{-1}_l(s)\neq \emptyset$ for all $l\in[0:L]$, we say that $\fI=\{\fI_l\}_{l=0}^{L}$ is a {\textbf{total (pull-back) index  mapping}}.
\end{definition}

\begin{lemma}\label{lem...Output.to.critPreser}
For any affine embedding $\fT:\mathrm{Tuple}_{\{m_0,\cdots,m_L\}}\to\mathrm{Tuple}_{\{m'_0,\cdots,m'_L\}}$ satisfying the output preserving property, 
% and we have two NNs, $\mathrm{NN}(\{m_l\}_{l=0}^{L})$ and  $\mathrm{NN}(\{m'_l\}_{l=0}^{L})$,   %then for any given    $\fT$    maps  its network parameters $\vtheta_{\rnarr}\in\mathrm{Tuple}_{\{m_0,\cdots,m_L\}}$ to that of a wider NN $\vtheta_{\rwide}=\fT(\vtheta_{\rnarr})\in\mathrm{Tuple}_{\{m'_0,\cdots,m'_L\}}$. \\
if there exist a total index mapping $\fI=\{\fI_l\}_{l=0}^{L}$ from $\mathrm{NN}(\{m'_l\}_{l=0}^{L})$ to $\mathrm{NN}(\{m_l\}_{l=0}^{L})$
and  auxiliary  variables $\vbeta=\left\{\vbeta^{[l]}_j\in\sR|~l\in[L],j\in[m'_l]\backslash\fI_l^{-1}(0)\right\}$, 
such that for any given neuron   belonging to $\mathrm{NN}(\{m'_l\}_{l=0}^{L})$, located in layer $l$ with index $j$, the following two statements hold:  \\
(i) If $\fI_l(j)\neq 0$, $(\vf_{\vtheta_{\rwide}}^{[l]})_j=(\vf_{\vtheta_{\rnarr}}^{[l]})_{\fI_l(j)}$ and $(\ve_{\vtheta_{\rwide}}^{[l]})_j=\vbeta_j^{[l]}(\ve_{\vtheta_{\rnarr}}^{[l]})_{\fI_l(j)}$,\\
(ii) If $\fI_l(j)= 0$,  $(\vf_{\vtheta_{\rwide}}^{[l]})_j=\mathrm{Const}$ and $(\ve_{\vtheta_{\rwide}}^{[l]})_j=0$,\\
then $\fT$ is a critical embedding.
\end{lemma}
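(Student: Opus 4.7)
The plan is to verify directly the two properties of a critical embedding that are not given by hypothesis, namely representation preserving and criticality preserving, by applying the backpropagation identities in \eqref{eq..text...BackwardProp} together with the case split provided by $\fI$.

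First I would handle representation preserving at each layer $l$. For $l=0$ and $l=L$ the two feature collections coincide because $\fI_0$ and $\fI_L$ are identities (the latter combined with the output-preserving assumption). For $l\in[L-1]$, hypothesis (i) shows that every wide feature with $\fI_l(j)\neq 0$ equals some narrow feature, while hypothesis (ii) shows that the remaining wide features are constants, which lie in $\mathrm{span}\{1\}$; hence the span of the wide features together with $1$ is contained in that of the narrow features together with $1$. Totality of $\fI$ then supplies, for each $s\in[m_l]$, some $j\in\fI_l^{-1}(s)$ witnessing the reverse inclusion.

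Next I would establish criticality preserving by computing each entry of $\nabla_{\vtheta_{\rwide}}\RS(\vtheta_{\rwide})$ via the identities $\nabla_{\mW^{[l]}}\RS=\Exp_S[\ve^{[l]}_{\vtheta}(\vf^{[l-1]}_{\vtheta})^\T]$ and $\nabla_{\vb^{[l]}}\RS=\Exp_S[\ve^{[l]}_{\vtheta}]$ from \eqref{eq..text...BackwardProp}, and running a case analysis on the indices. If $\fI_l(i)=0$ (possible only for $l\in[L-1]$), hypothesis (ii) gives $(\ve^{[l]}_{\vtheta_{\rwide}})_i=0$, so the entire $i$-th row of the wide weight gradient at layer $l$ and the $i$-th entry of the wide bias gradient at layer $l$ vanish. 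If instead $\fI_l(i)\neq 0$, hypothesis (i) gives $(\ve^{[l]}_{\vtheta_{\rwide}})_i=\vbeta_i^{[l]}(\ve^{[l]}_{\vtheta_{\rnarr}})_{\fI_l(i)}$, and I split further on $\fI_{l-1}(j)$: when $\fI_{l-1}(j)\neq 0$ (always true for $l=1$), hypothesis (i) at layer $l-1$ reduces the wide weight gradient entry to $\vbeta_i^{[l]}\Exp_S[(\ve^{[l]}_{\vtheta_{\rnarr}})_{\fI_l(i)}(\vf^{[l-1]}_{\vtheta_{\rnarr}})_{\fI_{l-1}(j)}]$, which is $\vbeta_i^{[l]}$ times the narrow weight gradient entry at position $(\fI_l(i),\fI_{l-1}(j))$ and therefore vanishes by criticality of $\vtheta_{\rnarr}$.

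The subtle case I expect to be the main obstacle is the remaining one: $\fI_l(i)\neq 0$ but $\fI_{l-1}(j)=0$ (possible only for $l\geq 2$), where the input feature to the wide weight is merely a constant $c$. Then the wide weight gradient becomes $c\,\vbeta_i^{[l]}\Exp_S[(\ve^{[l]}_{\vtheta_{\rnarr}})_{\fI_l(i)}]$, which is a scalar multiple of the narrow bias gradient entry $(\nabla_{\vb^{[l]}}\RS(\vtheta_{\rnarr}))_{\fI_l(i)}$ and therefore vanishes; thus the width-mismatch is rescued precisely by using the narrow \emph{bias} criticality, not the weight criticality. The same two-case argument applied to $(\nabla_{\vb^{[l]}}\RS(\vtheta_{\rwide}))_i$ reduces it either to $0$ (when $\fI_l(i)=0$) or to $\vbeta_i^{[l]}(\nabla_{\vb^{[l]}}\RS(\vtheta_{\rnarr}))_{\fI_l(i)}$, which again vanishes. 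Combining all cases yields $\nabla_{\vtheta_{\rwide}}\RS(\vtheta_{\rwide})=\mzero$, and together with output preserving (hypothesis) and representation preserving (established above), this shows $\fT$ is a critical embedding.
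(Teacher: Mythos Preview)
Your proposal is correct and follows essentially the same approach as the paper: verify representation preserving from conditions (i), (ii), and totality of $\fI$, then verify criticality preserving entrywise via the gradient formulas $\nabla_{\mW^{[l]}}\RS=\Exp_S[\ve^{[l]}(\vf^{[l-1]})^\T]$ and $\nabla_{\vb^{[l]}}\RS=\Exp_S[\ve^{[l]}]$ with a case split on $\fI_l(i)$ and $\fI_{l-1}(j)$. In fact your treatment of the case $\fI_l(i)\neq 0$, $\fI_{l-1}(j)=0$ is more explicit than the paper's, which simply asserts this case is ``obviously $0$'' without spelling out that one must invoke the narrow \emph{bias} criticality (exactly as you do).
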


Next, we propose a general compatible embedding method, where above embedding methods including one-step embedding and K-step composition embedding are its special cases. 
\begin{definition}[\textbf{General compatible   embedding}]\label{def...GeneralEmbedding}
Given  an   $\mathrm{NN}(\{m_l\}_{l=0}^{L})$ and a wider $\mathrm{NN}(\{m'_l\}_{l=0}^{L})$, then for any total index mapping $\fI=\{\fI_l\}_{l=0}^{L}$ from $\mathrm{NN}(\{m'_l\}_{l=0}^{L})$ to $\mathrm{NN}(\{m_l\}_{l=0}^{L})$, and for any tuple $\valpha:=\{\malpha^{[1]},\valpha^{[1]}_{\rb},\cdots, \malpha^{[L]},\valpha^{[L]}_{\rb}\}\in\mathrm{Tuple}_{\{m'_0,\cdots,m'_L\}}$ satisfying some  {\textbf{compatibility conditions}}~(see Condition \ref{state1} and Condition \ref{state2}),   we define a {\textbf{general   embedding}} $\fT^{\valpha}_{\fI}:\mathrm{Tuple}_{\{m_0,\cdots,m_L\}}\to\mathrm{Tuple}_{\{m'_0,\cdots,m'_L\}}$ as: For any parameters
$
{\vtheta_{\rnarr}}=(\mW_{{\rnarr}}^{[1]},\vb_{{\rnarr}}^{[1]},\cdots,\mW_{{\rnarr}}^{[L]},\vb_{{\rnarr}}^{[L]})\in \mathrm{Tuple}_{\{m_0,\cdots,m_L\}}
$,
\begin{align*}
	\fT^{\valpha}_{\fI}(\vtheta_{\rnarr}):=\Big(& \malpha^{[1]}\circ \mW^{[1]}_{\rinter}, \valpha^{[1]}_{\rb}+\vb^{[1]}_{\rinter},\cdots, \\
	&    \malpha^{[l]}\circ \mW^{[l]}_{\rinter}, \valpha^{[l]}_{\rb}+\vb^{[l]}_{\rinter},\cdots,     
	\\
	& \malpha^{[L]}\circ \mW^{[L]}_{\rinter}, \valpha^{[L]}_{\rb}+\vb^{[L]}_{\rinter}\Big),
\end{align*}
where $\mW^{[l]}_{\rinter}:=\left[\left(\mW^{[l]}_{\rnarr}\right)_{\fI_l(i),\fI_{l-1}(j)}\right]$, $\vb^{[l]}_{\rinter}:=\left(\left(\vb^{[l]}_{\rnarr}\right)_{\fI_l(k)}\right)$ for $l\in[L]$ with $i,~k\in[m_l']$, $j\in[m'_{l-1}]$, and $\circ$ is the  the Hadamard product.
\end{definition}
%%%%%%%%%%%%%%%%%%%%%%%%%%%%%%%%%%%%%%%%%%%%%
\begin{rmk}
Since $\fI_l: [m'_l]\to [m_l]\sqcup\{0\}$ for~$l\in[L-1]$, and  the components in   $\mW^{[l]}_{\rnarr}$ and $\vb^{[l]}_{\rnarr}$ are not defined for zero indices, i.e., no definitions can be found for $(\mW^{[l]}_{\rnarr})_{0j}$,  $(\mW^{[l]}_{\rnarr})_{i0}$, $(\vb^{[l]}_{\rnarr})_0$,  for any   $l\in[L-1]$ with $i\in[m_l]$ and $j\in[m_{l-1}]\sqcup\{0\}$, for convenience of expression,  we set  $(\mW^{[l]}_{\rnarr})_{0j}=1$, $(\mW^{[l]}_{\rnarr})_{i0}=1$, and $(\vb^{[1]}_{\rnarr})_0=0$,   with $i\in[m_l]$ and $j\in[m_{l-1}]\sqcup\{0\}$.
\end{rmk}
Now we proceed to state out the {\textbf{certain conditions}} for the tuple 
\begin{equation}\label{eq...text...tupleAlpha}
\valpha=\{\malpha^{[1]},\valpha^{[1]}_{\rb},\cdots, \malpha^{[L]},\valpha^{[L]}_{\rb}\}\in\mathrm{Tuple}_{\{m'_0,\cdots,m'_L\}}
\end{equation}  in Definition \ref{def...GeneralEmbedding}.
\begin{statement}[\textbf{Compatibility conditions I} (see Fig. \ref{fig:alpha_constraints} for illustration)]\label{state1}
The elements $\{\malpha^{[l]}\}_{l=1}^L$ in  \eqref{eq...text...tupleAlpha}   satisfy that: \\
There exist a collection  of  auxiliary  variables $\vbeta:=\left\{\vbeta^{[l]}_j\in\sR|~l\in[0:L],~j\in[m'_l]\backslash\fI_l^{-1}(0)\right\}$  such that
\begin{itemize}
	\item $\vbeta^{[L]}_k=1$ for $k\in[m'_L]$.~(Since $\fI_L$ is the identity map, $\fI_L^{-1}(0)=\emptyset$). 
	\item  \textbf{Forward conditions}:
	\begin{itemize}
		\item {\textbf{Effective neurons forward to an effective neurons}}: For $i\notin \fI^{-1}_l(0)$, $s\in[m_{l-1}]$, we have $\sum_{j\in \fI^{-1}_{l-1}(s)}\malpha^{[l]}_{ij}=1$; 
		\item {\textbf{Effective neurons forward to a null neuron}}: For $i\in \fI^{-1}_l(0)$, $s\in[m_{l-1}]$, we have $\sum_{j\in \fI^{-1}_{l-1}(s)}\malpha^{[l]}_{ij}=0$.
	\end{itemize}
	\item \textbf{Backward conditions}:
	\begin{itemize}
		\item {\textbf{Effective neurons backpropagate to an effective neuron}}: For $j\notin \fI^{-1}_{l-1}(0)$, $k\in[m_{l}]$, we have $\sum_{i\in \fI^{-1}_{l}(k)} \vbeta^{[l]}_i\malpha^{[l]}_{ij}=\vbeta^{[l-1]}_j$;
		\item  {\textbf{Effective neurons backpropagate to a null neuron}}:  For $j\in \fI^{-1}_{l-1}(0)$, $k\in[m_{l}]$, we have $\sum_{i\in \fI^{-1}_{l}(k)} \vbeta^{[l]}_i\malpha^{[l]}_{ij}=0$.
		%   \item {\textbf{Null neurons backpropagate to other neurons}}: For $k=0$, there is no constraint.
	\end{itemize}
\end{itemize}  
\end{statement}
\begin{figure}[h]
\centering
\includegraphics[width=\textwidth]{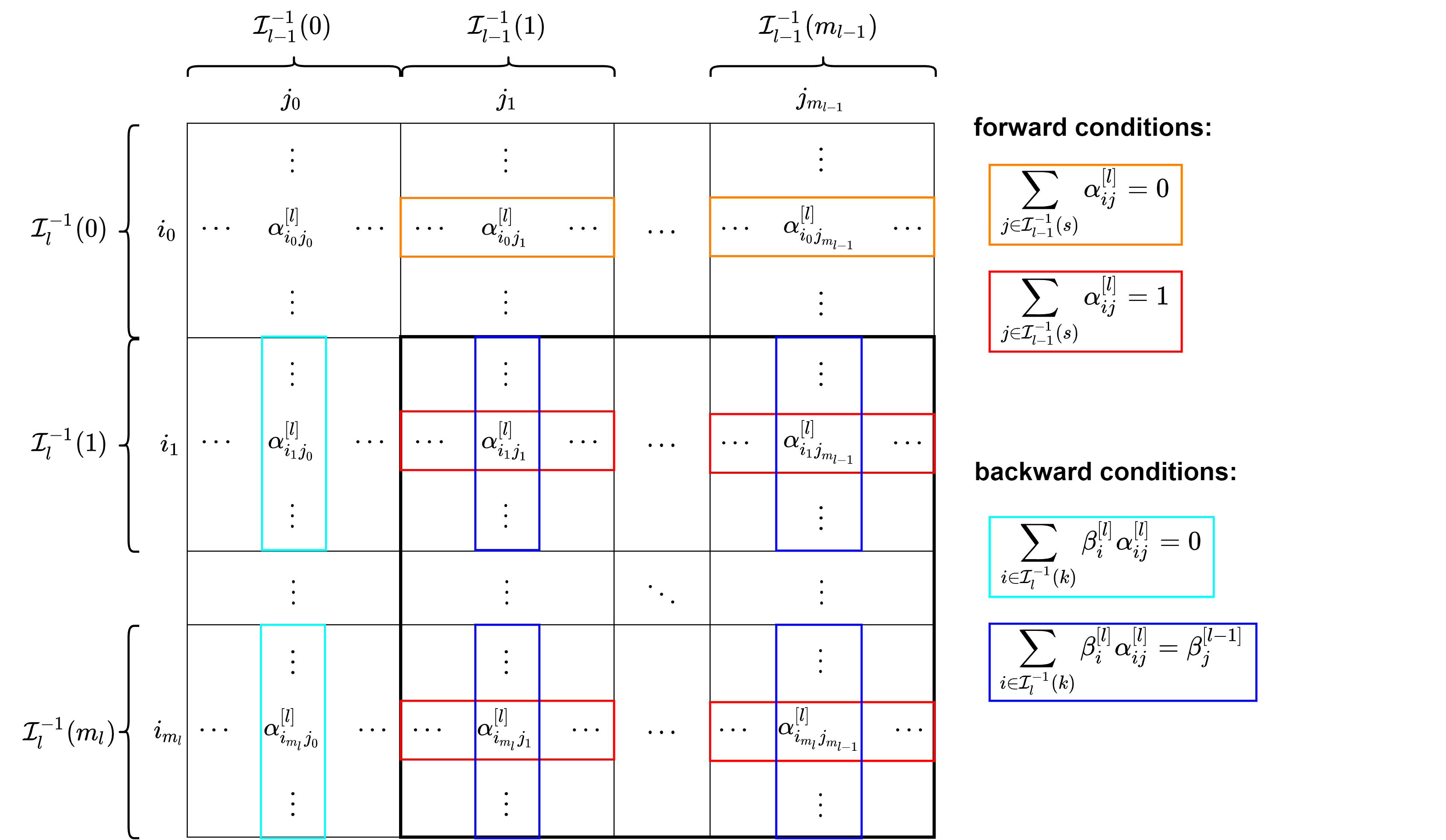}
\caption{Illustration of forward and backward compatibility conditions I for $\malpha^{[l]}$. \label{fig:alpha_constraints}}
\end{figure} 
%%%%%%%%%%%%%%%%%%%%%%%%%%%%%%%%%
The compatibility conditions for $\{\valpha^{[l]}_{\rb}\}_{l=1}^L$ is stated in the following. In order for that, we need another collection of  
% effective bias 
auxiliary variables  $\mB_*:=\left\{(\vb^{[l]}_*)_i\in\sR|~l\in[0:L],~i\in\fI_l^{-1}(0)\right\}$. We term $\mB_*$ {\emph{the effective biases for null neurons.}}
\begin{statement}[\textbf{Compatiblity conditions II} (see Fig. \ref{fig:alpha_b} for illustration)]\label{state2}
The rest of the  elements in $\valpha$, i.e., $\{\valpha^{[l]}_{\rb}\}_{l=1}^L$ satisfy that:\\
For any  effective biases of null neurons $\mB_*:=\left\{(\vb^{[l]}_*)_i\in\sR|~l\in[0:L],~i\in\fI_l^{-1}(0)\right\}$, we have
\begin{itemize}
	\item  \textbf{Effective neurons:} $(\valpha_{\rb}^{[l]})_i=0$ for any $l\in[L]$ with $i\notin\fI_l^{-1}(0)$;
	\item \textbf{Null neurons to a null neuron:} for $i\in \fI^{-1}_l(0)$, $s=0$, we have $\sum_{j\in \fI^{-1}_{l-1}(0)}\malpha^{[l]}_{ij}\sigma((\vb^{[l-1]}_*)_j)+(\valpha_{\rb}^{[l]})_i=(\vb^{[l]}_*)_i$; 
	\item \textbf{Null neurons to an effective neuron:} for $i\notin \fI^{-1}_l(0)$, $s=0$, we have $\sum_{j\in \fI^{-1}_{l-1}(0)}\malpha^{[l]}_{ij}\sigma((\vb^{[l-1]}_*)_j)+(\valpha_{\rb}^{[l]})_i=0$; \\
\end{itemize}
\end{statement}
\begin{figure}[h]
\centering
\includegraphics[width=0.6\textwidth]{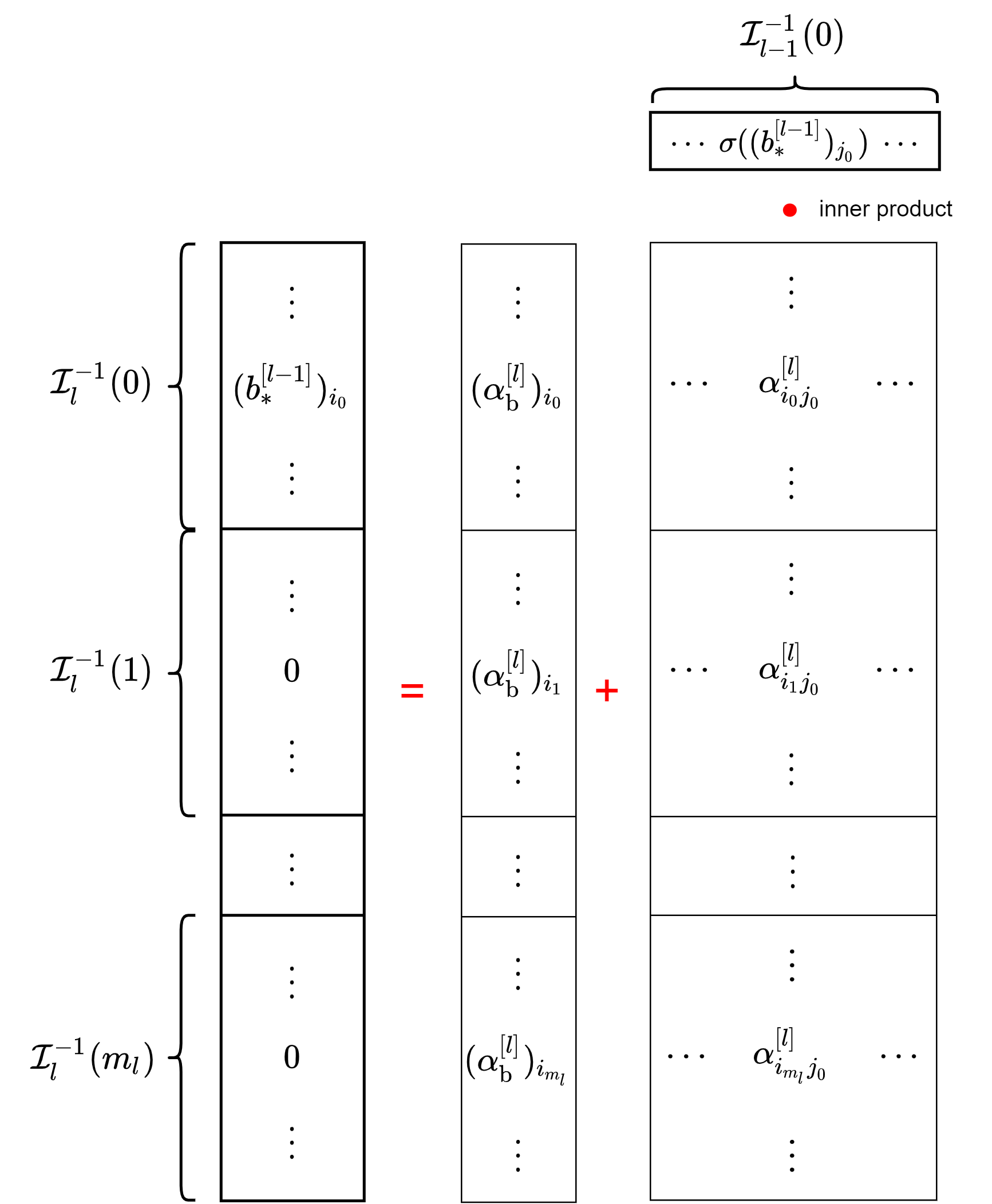}
\caption{Illustration of compatibility conditions II for $\valpha^{[l]}_{\rb}$. \label{fig:alpha_b}}
\end{figure} 
	We then illustrate the general compatible embedding as follows. First, we illustrate an example of the general compatible embedding without null neurons through the forward conditions in Fig. \ref{fig:forwardgenesplitting} and the backward conditions in Fig. \ref{fig:backwardgenesplitting}. Second, we illustrate the conditions related to null neurons in Fig. \ref{fig:nullgeneral}.
	
	\begin{figure}[h]
		\centering
		\includegraphics[width=\textwidth]{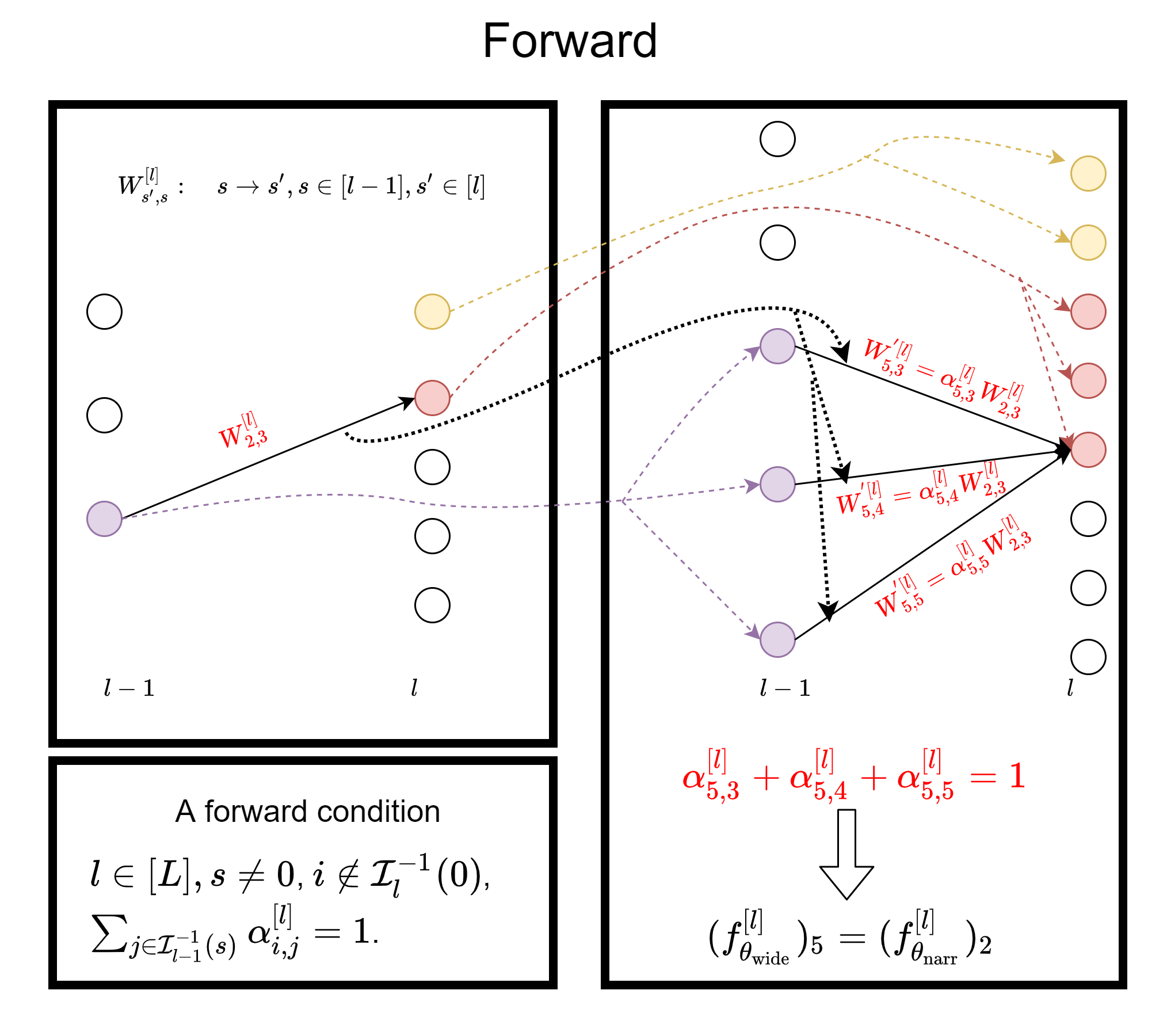}
		\caption{Illustration of the forward conditions for split neurons in compatibility conditions I. The sum of the input weights from neurons that are split from the same previous neuron to a post neuron should be equal to the weight from the previous to the post neuron. \label{fig:forwardgenesplitting}}
	\end{figure}

	\begin{figure}[h]
		\centering
		\includegraphics[width=\textwidth]{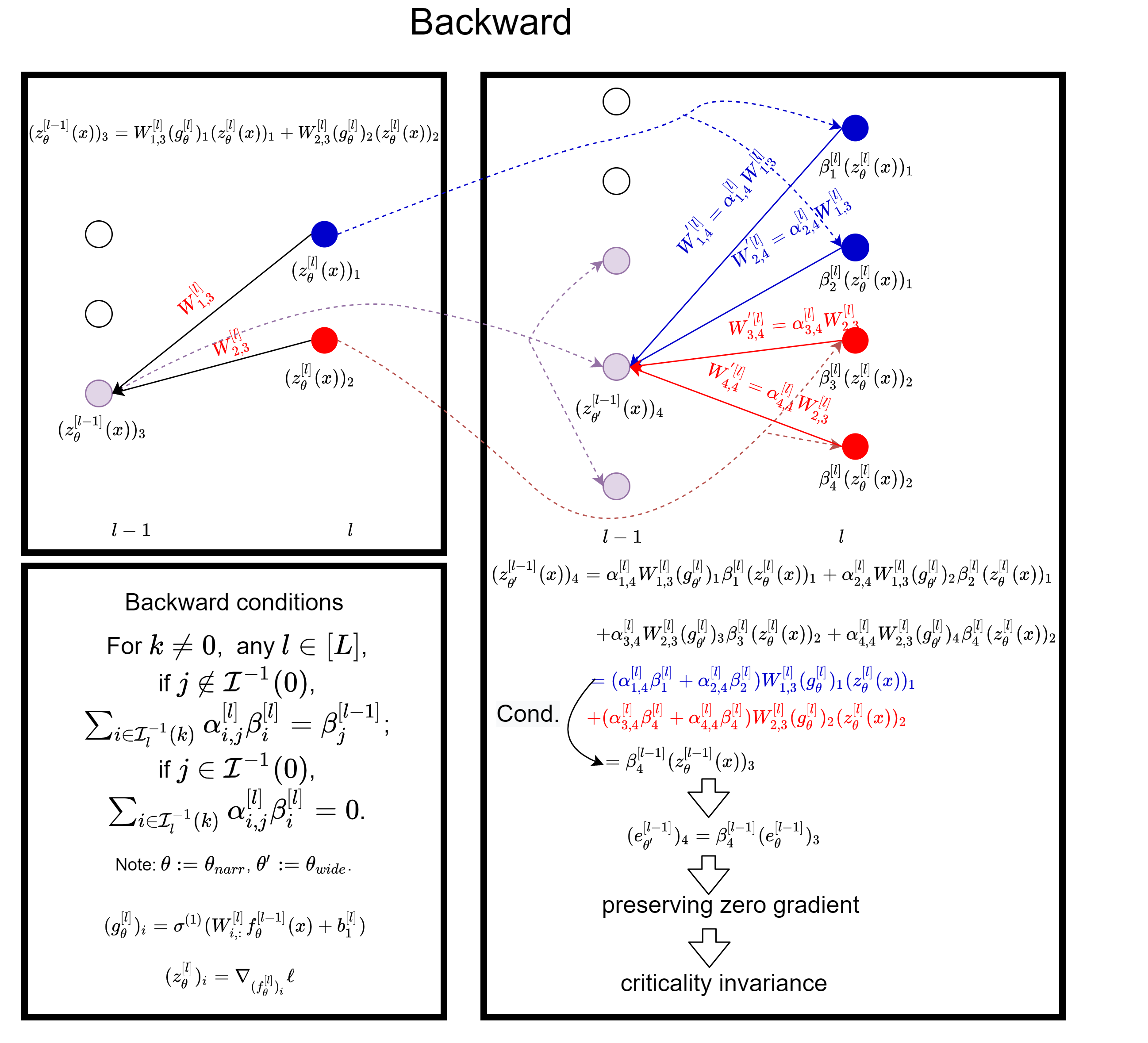}
		\caption{Illustration of the backward conditions for split neurons in compatibility conditions I. The gradient of the loss w.r.t. to the neuron output of the wide network is proportional to the gradient of the loss w.r.t. the corresponding neuron in the narrow network. \label{fig:backwardgenesplitting}}
	\end{figure} 
	
	\begin{figure}[h]
		\centering
		\includegraphics[width=\textwidth]{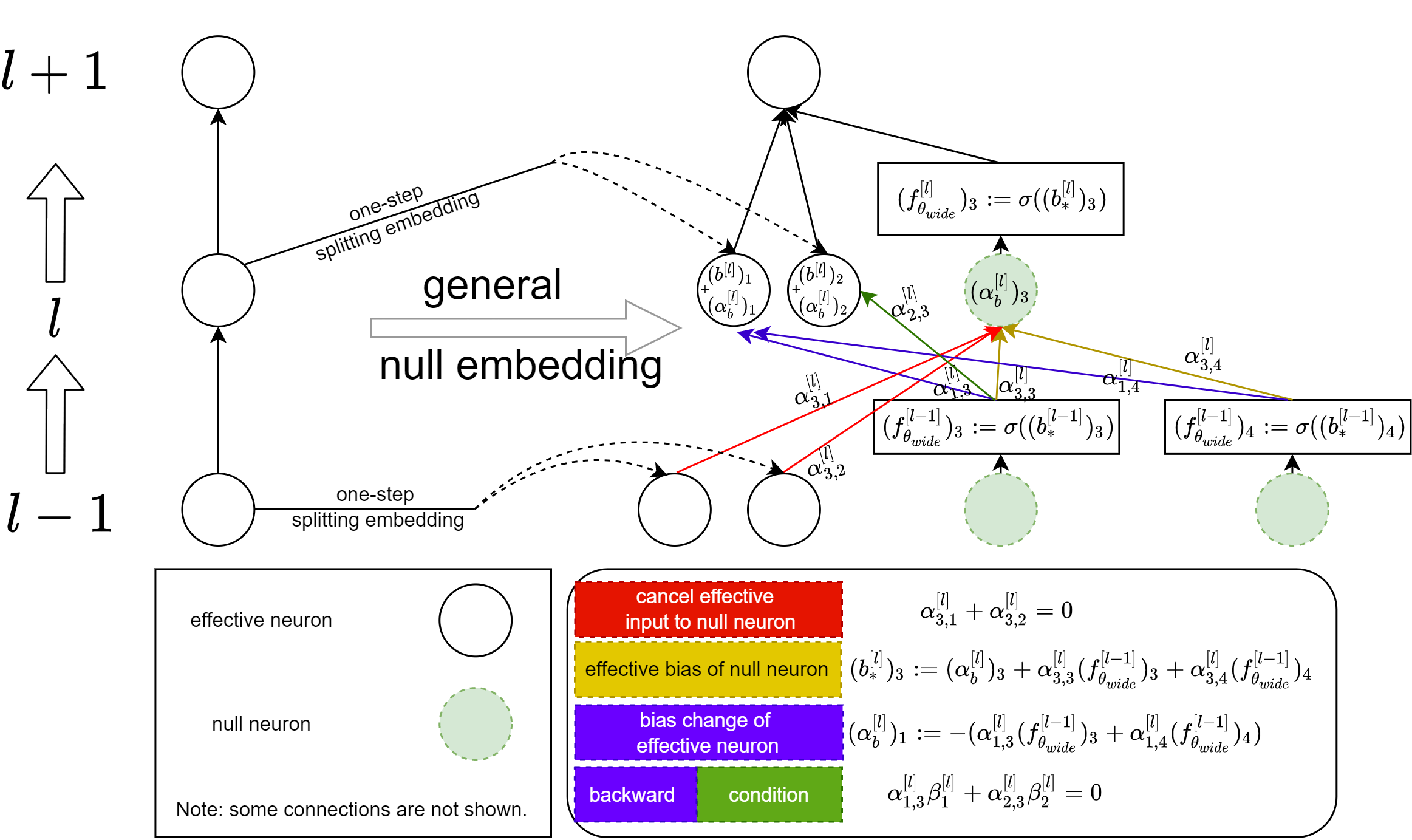}
		\caption{Illustration of compatibility conditions for null neurons. Red: The input from effective neurons split from the same neuron should cancel out. Yellow: the effective bias of a null neuron is the sum of its bias (can be any value) and its input from other null neurons. Purple: The added null neurons have constant input to the effective neuron, which would be added a bias correction to cancel the constant input. Purple+Green: the backward condition of a null neuron leads to the zero gradient. \label{fig:nullgeneral}}
	\end{figure} 
	
	\begin{theorem}\label{thm:gce}
		General compatible embedding is  a critical embedding.
	\end{theorem}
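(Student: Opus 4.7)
The plan is to reduce Theorem \ref{thm:gce} to Lemma \ref{lem...Output.to.critPreser}, which is the most economical route: it suffices to verify that $\fT^{\valpha}_{\fI}$ is an affine embedding, that it preserves the output function, and that the feature vectors and error vectors of $\vtheta_{\rwide} := \fT^{\valpha}_{\fI}(\vtheta_{\rnarr})$ relate to those of $\vtheta_{\rnarr}$ through $\fI$ and the auxiliary sequence $\vbeta$ (copy for effective neurons, constant feature and zero error for null neurons). Affinity is immediate from the Hadamard-product-plus-shift definition. Injectivity follows because the forward compatibility condition $\sum_{j \in \fI^{-1}_{l-1}(s)} \malpha^{[l]}_{ij} = 1$ together with totalness of $\fI$ forces at least one nonzero entry of $\malpha^{[l]}$ in every relevant group, so each narrow weight $(\mW^{[l]}_{\rnarr})_{s,t}$ can be read off from some $(\mW^{[l]}_{\rwide})_{ij}$; the bias entries are recovered similarly using $(\valpha^{[l]}_{\rb})_i = 0$ on effective indices.

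Next I would run a forward induction on $l \in [0:L]$ to establish the feature relations. The base case $l = 0$ is trivial since $\fI_0$ is the identity and carries no null indices. For the inductive step, I expand the pre-activation at a wide neuron $(l, i)$ and split the sum over its pre-neurons $j \in [m'_{l-1}]$ along the partition $\{\fI_{l-1}^{-1}(s)\}_{s \in [m_{l-1}]} \sqcup \fI_{l-1}^{-1}(0)$. The inductive hypothesis turns each group's contribution into a scalar times $(\vf^{[l-1]}_{\vtheta_{\rnarr}})_s$ or into $\sigma((\vb^{[l-1]}_*)_j)$. The four cases then match the four compatibility conditions exactly: for an effective post-neuron, the effective-to-effective forward rule collapses the effective-pre sum into the narrow pre-activation, while the null-to-effective clause of Condition \ref{state2} combined with $(\valpha^{[l]}_{\rb})_i = 0$ cancels the null-pre contribution, giving $(\vf^{[l]}_{\vtheta_{\rwide}})_i = (\vf^{[l]}_{\vtheta_{\rnarr}})_{\fI_l(i)}$; for a null post-neuron, the effective-to-null forward rule kills the effective-pre contribution and the null-to-null clause of Condition \ref{state2} re-expresses the remainder as $(\vb^{[l]}_*)_i$, yielding $(\vf^{[l]}_{\vtheta_{\rwide}})_i = \sigma((\vb^{[l]}_*)_i)$. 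Specializing the argument at $l = L$ (where $\fI_L$ is the identity and $\sigma$ is absent from the output) delivers output preservation.

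I would then perform a backward induction on $l$ for the error vectors using \eqref{eq..text...BackwardPropOnlyforel}. The base case $l = L$ uses output preservation and $\vbeta^{[L]}_k = 1$. For the inductive step I expand $\ve^{[l]}_{\vtheta_{\rwide}} = \bigl((\mW^{[l+1]}_{\rwide})^\T \ve^{[l+1]}_{\vtheta_{\rwide}}\bigr) \circ \vg^{[l]}_{\vtheta_{\rwide}}$, substitute the $\malpha^{[l+1]}$-weighted form of $\mW^{[l+1]}_{\rwide}$, partition by $\fI_{l+1}$, and invoke the inductive hypothesis to replace $(\ve^{[l+1]}_{\vtheta_{\rwide}})_k$ by $\vbeta^{[l+1]}_k (\ve^{[l+1]}_{\vtheta_{\rnarr}})_{\fI_{l+1}(k)}$ (or $0$ for null indices at layer $l{+}1$). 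The feature gradient $\vg^{[l]}_{\vtheta_{\rwide}}$ has already been identified in the forward step. For an effective pre-neuron $j$, the effective-to-effective backward condition $\sum_{k \in \fI_{l+1}^{-1}(s)} \vbeta^{[l+1]}_k \malpha^{[l+1]}_{kj} = \vbeta^{[l]}_j$ factors $\vbeta^{[l]}_j$ out and collapses the sum into the narrow backprop formula, giving $(\ve^{[l]}_{\vtheta_{\rwide}})_j = \vbeta^{[l]}_j (\ve^{[l]}_{\vtheta_{\rnarr}})_{\fI_l(j)}$; for a null pre-neuron, the effective-to-null backward condition forces the whole sum to vanish, giving $(\ve^{[l]}_{\vtheta_{\rwide}})_j = 0$. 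All premises of Lemma \ref{lem...Output.to.critPreser} are then in place, and the theorem follows.

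The main difficulty is not conceptual but notational bookkeeping: in both the forward and the backward step there are four combinatorial cases (effective/null pre-neuron crossed with effective/null post-neuron), each matched by exactly one of the four compatibility clauses, and one must carefully invoke the conventions $(\mW^{[l]}_{\rnarr})_{0,j} = (\mW^{[l]}_{\rnarr})_{i,0} = 1$ and $(\vb^{[l]}_{\rnarr})_0 = 0$ whenever $\fI_l$ takes the value $0$. Keeping these pieces aligned across layers is the only real obstacle.
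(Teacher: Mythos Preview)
Your proposal is correct and follows essentially the same route as the paper: forward induction on layers for the feature vectors (using Conditions \ref{state1}--\ref{state2} to handle the four effective/null cases), backward induction for the error vectors via \eqref{eq..text...BackwardPropOnlyforel} using the backward compatibility conditions, then an appeal to Lemma \ref{lem...Output.to.critPreser}; the affinity and injectivity arguments are likewise the same, with the paper recovering the narrow weight by summing over $\fI^{-1}_{l-1}(s)$ rather than picking a single nonzero $\malpha^{[l]}_{ij}$, which is a cosmetic difference only.
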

	Remark that we later name it as general compatible critical embedding in this work.
	%%%%%%%%%%%%%%%%%%%%%%%%%%%%%%%%%%%%%%%%%%%%%%%%%%%%%%%%%%

	\subsection{Special cases of general compatible critical embedding}
	In the following, we present some special cases of general compatible critical embedding. Specifically, the three-fold global splitting embedding is the key to the proof of a necessary condition of a ``truly-bad'' critical point in Section \ref{section...PropertyofCP}.
	
	\begin{definition}[Splitting embedding\label{def_splitting_embedding}]
		For a general compatible critical embedding $\fT_{\fI}^{\valpha}$, if there is no null neuron in the embedded NN, i.e., $\fI^{-1}_l(0)=\emptyset$ for any $l\in[L-1]$, then we call $\fT_{\fI}^{\valpha}$ a splitting embedding. 
	\end{definition}
	
	\begin{definition}[Null embedding]
		For a general compatible critical embedding $\fT_{\fI}^{\valpha}$, if only null neurons are added, i.e., $\#\fI^{-1}_l(0)=m'_l-m_l$ for any $l\in[L-1]$, hence there is no neuron splitting, i.e., $\#\fI^{-1}_l(s)=1$ for any $l\in[L-1]$, $s\in [m_l]$,  then we call $\fT_{\fI}^{\valpha}$ a null embedding. 
	\end{definition}
	
	\begin{rmk}
		One-step null embedding and its multi-step composition are special cases of null embedding. Similarly, one-step splitting embedding and its multi-step composition are special cases of splitting embedding. Multi-step embedding composed by a mixture of one-step null and splitting embedding is a special case of general compatible critical embedding.
	\end{rmk}
	
	\begin{example}[A three-fold global splitting embedding]\label{ex}
		For any $l\in[L-1]$, we define the operator $\fT_{\mathrm{global}}$ applying on $\vtheta$ as follows
		\begin{align*}
			\fT_{\mathrm{global}}(\vtheta)|_l
			&= \left({\begin{bmatrix}
					\mW^{[l]} & \mzero & \mzero\\
					\mzero & \mW^{[l]} & \mzero\\
					\mzero & \mzero    & \mW^{[l]}\\
			\end{bmatrix} },
			\left[ {\begin{array}{cc}
					\vb^{[l]} \\
					\vb^{[l]} \\
					\vb^{[l]} \\
			\end{array} } \right]\right), \\
			% \fV_{G}(\vtheta)|_l
			%     &=\left(\mzero_{2m_{l}\times 2m_{l-1}},\mzero_{2m_{l}\times 1}\right),\\
			\fT_{\mathrm{global}}(\vtheta)|_L
			&=\left(\left[\mW^{[L]},\mW^{[L]},-\mW^{[L]}\right],\vb^{[L]}\right),\\
			% \fV_{G}(\vtheta)|_L
			%     &=\left(\left[-\mW^{[L]},\mW^{[L]}\right],\mzero_{m_{L}\times 1}\right),\\
		\end{align*}
	\end{example}
	
	\begin{remark}
		For this global splitting embedding, $m'_l = 3m_l$ and  $\fI^{-1}_{l}(s)=\{s,s+m_l,s+2m_l\}$ for any $l\in[L-1]$ and $s\in[m_l]$. For any $l\in[L-1]$,
		$$\valpha^{[l]}=
		\begin{bmatrix}
			\mathbf{1}_{m_l\times m_{l-1}} & \mzero_{m_l\times m_{l-1}} & \mzero_{m_l\times m_{l-1}}\\
			\mzero_{m_l\times m_{l-1}} & \mathbf{1}_{m_l\times m_{l-1}} & \mzero_{m_l\times m_{l-1}}\\
			\mzero_{m_l\times m_{l-1}} & \mzero_{m_l\times m_{l-1}}    & \mathbf{1}_{m_l\times m_{l-1}}\\
		\end{bmatrix},$$
		$$\valpha_{\rb}^{[l]}=\vzero_{3m_l\times 1},$$
		and
		$$\valpha^{[L]}=
		\begin{bmatrix}
			\mathbf{1}_{m_l\times m_{l-1}} & \mathbf{1}_{m_l\times m_{l-1}} & -\mathbf{1}_{m_l\times m_{l-1}}\\
		\end{bmatrix},$$
		$$\valpha_{\rb}^{[l]}=\vzero_{m_L\times 1}.$$
		Moreover, for any $l\in[L-1]$, $\vbeta^{[l]}_j=1$ when $j\in[2m_l]$ and $\vbeta^{[l]}_j=-1$ when $j\in[2m_l+1:3m_l]$. It is a special case of splitting embedding.
	\end{remark}

	\section{Analysis of critical points/submanifolds by critical embeddings}\label{section...PropertyofCP}
	
	\subsection{Degeneracy of critical points/submanifolds}
	A key observation to the critical embeddings proposed above is that it is not unique. Actually, given any NN and a wider NN, there is a class of critical embeddings. Therefore, any critical point can be embedded to a set of critical points in a wider NN. By a $K$-step composition embedding, because embedding $\fT_{l_i,s_i}^{\alpha_i}$ at each step $i$ has one degree of freedom parameterized by $\alpha_i$ given $l_i$ and $s_i$, one critical point in general can be embedded to a $K$-dimensional critical affine subspace, which provides a lower bound to the degeneracy of embedded critical points. Precisely, in  \cite{zhang2021embedding}, we prove the following theorem using composition of one-step splitting embeddings.
	
	\begin{theorem} [Theorem 2 in \cite{zhang2021embedding}\label{theorem2}]
		Given an  $\mathrm{NN}(\{m_l\}_{l=0}^{L})$ and a $K$-neuron wider $\mathrm{NN}(\{m'_l\}_{l=0}^{L})$, then for any  critical point $\vtheta^{\rc}_{\rnarr}=\left(\mW^{[1]},\vb^{[1]},\cdots,\mW^{[L]},\vb^{[L]}\right)$ satisfying $\mW^{[l]}\neq \mzero$ for each  $l\in[L]$, $\vtheta^{\rc}_{\rnarr}$  can be critically embedded to a $K$-dimensional critical affine subspace $\fM_{\rwide}=\{\vtheta_{\rwide}+\sum_{i=1}^{K}\alpha_i \vmu_i|\alpha_i\in \sR \}$ of  loss landscape of $\mathrm{NN}(\{m'_l\}_{l=0}^{L})$. Here $\vtheta_{\rwide}:=(\prod_{l=1}^{K} \fT_{l_l,s_l})(\vtheta_{\rnarr})$ and $\vmu_i:=\fT_{l_K,s_K}\cdots \fV_{l_i,s_i} \cdots\fT_{l_1,s_1}(\vtheta_{\rnarr})$, where $s_l\neq 0$ for all $l\in[K]$.~(The definitions of  $\fV_{l_l,s_l}$ and  $\fT_{l_l,s_l}$ can be found in Definition \ref{def4}).
	\end{theorem}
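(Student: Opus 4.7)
The plan is to construct a $K$-step composition of one-step splitting embeddings with a carefully chosen sequence $(l_i,s_i)_{i=1}^K$, then identify $\vtheta_{\rwide}$ with the composition's value at $\valpha=\vzero$ and $\vmu_i$ with the coefficient of $\alpha_i$ in the multilinear expansion. The final and most delicate step is to verify (via the general compatible critical embedding of Theorem \ref{thm:gce}) that the full $K$-dimensional affine hull $\vtheta_{\rwide}+\mathrm{span}\{\vmu_i\}$ lies in the critical locus, not merely the individual coordinate lines.

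For the construction, I would decompose $K=\sum_{l=1}^{L-1}(m'_l-m_l)$ and, for each layer $l\in[L-1]$, use the hypothesis $\mW^{[l+1]}\neq\mzero$ to pick an index $s_l^*\in[m_l]$ with $\mW^{[l+1]}_{:,s_l^*}\neq\mzero$. I order the $K$ splittings so that exactly $m'_l-m_l$ of the $l_i$'s equal $l$, and at each occurrence I take $s_i\in[m_{l_i}]$ to be such an original-neuron index with non-vanishing outgoing column (reusing $s_l^*$ within layer $l$ if $m'_l-m_l>m_l$). This guarantees $s_i\neq 0$ as required, and that each $\fV_{l_i,s_i}$ acts on an \emph{unmodified}, non-zero column of $\mW^{[l_i+1]}$.

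The analytic heart is the observation that $\fT_{l,s}$ and $\fV_{l,s}$ are both linear on the tuple class, so $\fT_{l_i,s_i}^{\alpha_i}=\fT_{l_i,s_i}+\alpha_i\fV_{l_i,s_i}$ is linear in the parameter with an affine dependence on $\alpha_i$. Expanding the product yields
\[
\Big(\prod_{i=1}^K\fT_{l_i,s_i}^{\alpha_i}\Big)(\vtheta_{\rnarr})
=\vtheta_{\rwide}+\sum_{i=1}^K\alpha_i\vmu_i+\sum_{\substack{S\subseteq[K]\\|S|\geq 2}}\Big(\prod_{j\in S}\alpha_j\Big)\mathcal{O}_S(\vtheta_{\rnarr}),
\]
so Theorem \ref{prop...KStep} immediately gives criticality along each coordinate line $\vtheta_{\rwide}+\alpha_i\vmu_i$. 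Linear independence of $\{\vmu_i\}$ follows because each $\vmu_i$ deposits a non-vanishing copy of $\mW^{[l_i+1]}_{:,s_i}$ into the fresh column index created at step $i$, and the subsequent $\fT_{l_j,s_j}$ with $j>i$ only append rows/columns without overwriting previously placed entries; thus the supports are distinct and the $K$ directions span a $K$-dimensional subspace.

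The hard step, and the main obstacle, is extending criticality from the coordinate lines to the full affine hull $\fM_{\rwide}$. When an original neuron is split multiple times the composition carries genuinely quadratic (and higher) cross terms $\mathcal{O}_S$, so the image of the composition is a curved $K$-dimensional surface that contacts $\fM_{\rwide}$ only along the axes. I would resolve this by stepping outside the composition and exhibiting, for every $\valpha\in\sR^K$, a general compatible critical embedding whose image at $\vtheta_{\rnarr}$ is exactly $\vtheta_{\rwide}+\sum_i\alpha_i\vmu_i$. The pull-back index map $\fI$ is read off from the splitting tree (each wide-NN neuron points to its narrow-NN ancestor); the induced $\malpha^{[l]}$ is forced to have identical rows, which automatically satisfies the backward compatibility condition, while the forward sum-to-one condition becomes a linear constraint that is satisfied for every $\valpha\in\sR^K$ by construction of $\vmu_i$. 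Theorem \ref{thm:gce} then certifies criticality of every point of $\fM_{\rwide}$, completing the argument.
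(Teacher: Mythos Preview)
Your proposal is correct, and the paper itself gives no proof here (the result is quoted from the conference paper, with only the hint ``using composition of one-step splitting embeddings''). Your route through Theorem~\ref{thm:gce} is a genuinely different argument: you exhibit, for each point of $\fM_{\rwide}$, an explicit general compatible splitting embedding realising it, and then invoke that theorem pointwise. This works---the resulting $\malpha^{[l]}$ is constant along rows, the forward condition holds because every $\vmu_i$ has column-coefficients summing to zero within each fibre, and the backward condition follows inductively from the forward one via $\sum_{i\in\fI_l^{-1}(k)}\vbeta_i^{[l]}=1$---and within the present paper's logical order it is legitimate, since Theorem~\ref{thm:gce} precedes this statement.

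One correction to your diagnosis of the hard step: the composition image is not a curved surface transverse to $\fM_{\rwide}$. With your choice $s_i\in[m_{l_i}]$, every higher-order term $\mathcal{O}_S(\vtheta_{\rnarr})$ is either zero (whenever two indices in $S$ carry distinct pairs $(l_i,s_i)$, since the later $\fV$ then reads a zero column) or lies in $\mathrm{span}\{\vmu_i:i\in S\}$ (being supported on a single $\mW^{[l+1]}_{\rwide}$ with column-coefficients summing to zero inside the fibre over $s$, which is exactly that span). Hence the composition image already sits \emph{inside} $\fM_{\rwide}$ and is dense there, as the induced polynomial map on the $\alpha_i$'s is a local diffeomorphism at the origin. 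Criticality of the image is Theorem~\ref{prop...KStep}; closedness of $\{\nabla_{\vtheta}\RS=\vzero\}$ then extends it to all of $\fM_{\rwide}$. This composition-only argument is the elementary route the conference paper had available; your route via Theorem~\ref{thm:gce} trades the density-and-closure step for a direct pointwise certificate, which is cleaner but leans on machinery developed only in the present paper.
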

	
	In addition, in this work, as we propose above a wider class of general compatible critical embeddings, we attempt to obtain a better estimate of the dimension of critical submanifolds corresponding to any given critical function $\vf_{\vtheta^{\rc}_{\rnarr}}$ of a narrow NN. A gross estimate yields the following heuristic argument.

	\begin{sketch}
		Given any total index mapping $\fI$, the degree of freedom of all possible $\valpha$ for general compatible critical embedding $\fT^{\valpha}_\fI$ to a $K$-neuron wider NN is $K+\sum_{l\in[L]} K_l K_{l-1}$, where $K_l:=m'_l-m_l$, and $K:=\sum_{l\in[L-1]}K_l$.
	\end{sketch}
	
	\begin{just}
		% We justify this argument as follows.
		By the definition of general compatible critical embedding, $\{\malpha^{[l]}\}_{l=1}^L$ and auxiliary  variables $\vbeta=\left\{\vbeta^{[l]}_j\in\sR|~l\in[L],j\in[m'_l]\backslash\fI_l^{-1}(0)\right\}$ with $\vbeta^{[L]}_k=\vone$ satisfy forward and backward compatibility conditions. 
		
		Step 1: We first observe that $\sum_{i\in \fI^{-1}_l(s)}\vbeta_i^{[l]}=1$ for any $s\in[m_l]$. Thus, $\vbeta$ has $K-m_{\mathrm{null}}$ degrees of freedom, where $m_{\mathrm{null}}$ is the number of null neurons.
		
		Step 2: Given any such $\vbeta$, we now consider the degrees of freedom for $\{\malpha^{[l]}\}_{l=1}^L$. 
		Note that forward and backward conditions are not independent of one another because  $$\sum_{j\in \fI^{-1}_{l-1}(s)}\sum_{i\in \fI^{-1}_{l}(k)} \vbeta^{[l]}_i\malpha^{[l]}_{ij}=\sum_{i\in \fI^{-1}_{l}(k)} \vbeta^{[l]}_i=1=\sum_{j\in \fI^{-1}_{l-1}(s)}\vbeta^{[l-1]}_j$$
		automatically holds for the given $\vbeta$. Therefore, there are $m'_l m_{l-1}+m_l m'_{l-1}-m_l m_{l-1}$ independent linear equations for $m'_l m'_{l-1}$ parameters in $\valpha^{[l]}$ for each layer $l$, resulting in $\sum_{l\in[L]}K_l K_{l-1}$ degrees of freedom in total. 
		
		Step 3: Given any effective biases $\mB_*$ as auxiliary variables for all null neurons and any $\malpha^{[l]}$ satisfying forward and backward conditions, $(\valpha_{\rb}^{[l]})_i$ is uniquely determined for all neurons in the wide NN. Therefore, there are $m_{\mathrm{null}}$ degrees of freedom in $\mB_*$.
		
		In the end, the degrees of freedom in $\valpha$ is the summation of degrees of freedom in auxiliary variables $\vbeta$, degrees of freedom in $\{\malpha^{[l]}\}_{l=1}^L$ given $\vbeta$ and additional degrees of freedom in $\mB_*$ for all null neurons, which add up to $K+\sum_{l\in[L]} K_l K_{l-1}$ degrees of freedom.
	\end{just}
	
	Intuitively, degrees of freedom in critical embedding $\fT$ transform into dimensions of critical submanifolds, resulting in a higher estimate of degrees of degeneracy $K+\sum_{l\in[L]} K_l K_{l-1}$ in comparision to $K$ obtained by multi-step composition embedding in \cite{zhang2021embedding}. We note that the nonlinear coupling between $\malpha$ and $\vbeta$ in general results in non-affine curved critical submanifolds for three-layer or deeper NNs containing the corresponding $K$-dimensional critical affine subspaces identified in Theorem \ref{theorem2}. A more rigorous estimate requires careful handling of the nonlinear coupling. We leave it to later works.

	\subsection{Irreversible transition to strict-saddle point}
	In this subsection, we look further into the transition between different types of critical points, e.g., local minima, saddle points, through critical embeddings. We are specifically interested in the transition of a critical point to a strict-saddle point due to its good optimization guarantee detailed in \cite{lee2019first}. We prove the following irreversibility property for any critical embedding, which guarantees that a strict-saddle point always embedds to strict-saddle points. 
	
	\begin{theorem}\label{t2}
		Given an $\mathrm{NN}(\{m_l\}_{l=0}^{L})$ and 
		any of its parameters $\vtheta\in \sR^M$, for any critical embedding $\fT:\sR^M\to \sR^{M'}$ to any wider $\mathrm{NN}(\{m'_l\}_{l=0}^{L})$, the number of positive, zero, negative eigenvalues of $\mH_S(\fT(\vtheta))$ is no less than the counterparts of $\mH_S(\vtheta)$.   
	\end{theorem}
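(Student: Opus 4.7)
The plan is to deduce everything from the matrix identity $\mH = \mA^\T \mH' \mA$, where $\fT(\vtheta) = \mA\vtheta + \vc$ with $\mA \in \sR^{M' \times M}$ of full column rank $M$ (forced by injectivity of $\fT$). Output preservation (part (i) of the critical embedding definition) gives $\RS(\fT(\vtheta)) = \RS(\vtheta)$ as functions on $\sR^M$, and differentiating twice---using $\nabla^2\fT \equiv 0$ since $\fT$ is affine---produces the identity via the chain rule. I will write $\mH := \mH_S(\vtheta)$ and $\mH' := \mH_S(\fT(\vtheta))$.

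For the positive and negative eigenvalue counts I would use the standard subspace restriction argument. Let $V_+ \subset \sR^M$ be a maximal positive subspace of $\mH$ with $\dim V_+ = n_+(\mH)$. Since $\mA$ is injective, $\mA V_+$ is an $n_+(\mH)$-dimensional subspace of $\sR^{M'}$, and for any nonzero $\vv \in V_+$,
\[
(\mA\vv)^\T \mH' (\mA\vv) = \vv^\T \mH \vv > 0,
\]
so $\mH'$ is positive definite on $\mA V_+$; by the variational characterization of inertia, $n_+(\mH') \geq n_+(\mH)$. The analogous argument applied to $-\mH$ gives $n_-(\mH') \geq n_-(\mH)$.

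The zero eigenvalue count is the main obstacle: the identity alone does not force $n_0(\mH') \geq n_0(\mH)$ (a small $2\times 2$ counterexample with $\mH = 0$ and $\mH'$ having opposite-sign eigenvalues shows this). To proceed, complete $\mA$ to a basis by appending $\mC \in \sR^{M' \times (M'-M)}$ spanning the orthogonal complement of $\mathrm{image}(\mA)$. Then by Sylvester, $\mH'$ is inertia-equivalent to the block form $\begin{pmatrix} \mH & \mR \\ \mR^\T & \mS \end{pmatrix}$ with $\mR = \mA^\T \mH' \mC$ and $\mS = \mC^\T \mH' \mC$. If $\mH$ is invertible, Haynsworth inertia additivity gives $\mathrm{inertia}(\mH') = \mathrm{inertia}(\mH) + \mathrm{inertia}(\mS - \mR^\T \mH^{-1}\mR)$; since the Schur complement is symmetric of size $M' - M$, each component of its inertia is nonnegative, and all three desired inequalities follow at once. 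For singular $\mH$, the same additivity holds with $\mH^{-1}$ replaced by the Moore--Penrose pseudoinverse $\mH^+$ \emph{provided} the structural inclusion $\mathrm{range}(\mR) \subseteq \mathrm{range}(\mH)$, equivalently $\mA\ker(\mH) \subseteq \ker(\mH')$, is established.

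Establishing this structural inclusion is the central technical step and the hard part of the proof. The matrix identity gives only $\mH' \mA \vv \in \mathrm{image}(\mA)^\perp$ for $\vv \in \ker(\mH)$, so one must additionally show $\mH' \mA \vv \in \mathrm{image}(\mA)$. My plan is to exploit the explicit layer-wise structure of general compatible critical embeddings (Definition~\ref{def...GeneralEmbedding}): the directions orthogonal to $\mathrm{image}(\mA)$ split into (i) perturbations of input-side weights and biases of null neurons, for which $\nabla_v \vf = 0$ at $\fT(\vtheta)$ (so that the corresponding rows of $\mH'{}^{(1)}$ vanish and the surviving $\mH'{}^{(2)}$ couplings are controlled by criticality and the compatibility conditions), and (ii) perturbations dual to the weight-sharing compatibility constraints, whose feature gradients lie in the span of the narrow network's feature vectors by the representation-preservation property. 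A direct block computation using $\mH' = \mH'{}^{(1)} + \mH'{}^{(2)}$ then verifies $\mH' \mA \vv \in \mathrm{image}(\mA)$, closing the argument. As a cleaner fallback, one can first establish the theorem for one-step null and splitting embeddings---where the structural claim is transparent by a coordinate-wise Hessian check---and then extend to general compatible critical embeddings via Theorem~\ref{prop...KStep}, using that the three inertia inequalities are preserved under composition.
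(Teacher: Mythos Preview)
Your derivation of $\mH = \mA^\T \mH' \mA$ and the subspace-restriction argument for $n_+$ and $n_-$ coincide exactly with the paper's proof. For $n_0$, the paper's proof simply asserts ``Similarly, we can prove this result for the number of zero and positive eigenvalues'' and stops --- it supplies no additional argument. You are right that vanishing of a quadratic form on a $k$-dimensional subspace does not by itself force $k$ zero eigenvalues (totally isotropic subspaces for indefinite forms are the obstruction), so the ``similarly'' in the paper is not a proof; you have identified a genuine gap that the paper glosses over.

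Your Haynsworth/Schur-complement route, conditional on the inclusion $\mA\ker(\mH)\subseteq\ker(\mH')$, is a sound way to close this, and the equivalence you state between that inclusion and $\mathrm{range}(\mR)\subseteq\mathrm{range}(\mH)$ is correct (using that $\mA^\T\mH'\mA\vv=0$ already puts $\mH'\mA\vv$ in $\mathrm{image}(\mA)^\perp$). However, both of your strategies for verifying the inclusion --- exploiting the explicit block structure of general compatible embeddings, or reducing to one-step embeddings and composing via Theorem~\ref{prop...KStep} --- establish the result only for that concrete class, not for an arbitrary critical embedding in the sense of the paper's abstract definition (affine, injective, output/representation/criticality preserving). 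Nothing in properties (i)--(iii) of that definition obviously forces the kernel inclusion; representation preservation constrains the feature vectors $\vf^{[l]}$, not the Hessian directly. So your fix is more rigorous than the paper's for the zero-eigenvalue count, but it narrows the theorem to general compatible critical embeddings. If you intend to prove the statement as written (``for any critical embedding''), you would need either an argument using only the abstract properties, or to restate the theorem for the narrower class; be explicit about which you are doing.
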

	
	\subsection{``Truly-bad'' critical point}
	The above irreversibility property of any critical embedding provokes the following thought about a conventional bad local minimum: for any bad local minimum of a given NN, if it can become a strict-saddle point in wider NNs, then it should not be a problem as we can simply use a wider NN in practice. However, there is still a ``truly-bad'' situation in which a critical point may never become a strict-saddle point through any critical embedding. In the following, through proving a stringent necessary condition for such a ``truly-bad'' critical point defined below, we justify its rarity, hence providing a reasonable conjecture for the easy optimization of wide NNs widely observed in practice. 
	
	We denote hereafter that $\mA\succeq \mB$ if and only if $\mA-\mB$ is a semi-positive definite matrix, and $\mA\succ \mB$ if and only if $\mA-\mB$ is a strictly positive definite matrix. 
	\begin{definition}[\textbf{"Truly-bad" critical point}]
		Given any data $S$, loss $\ell(\cdot,\cdot)$ and activation $\sigma(\cdot)$, for any NN,   if there exists a  critical point $\vtheta^{\rc}\in\vTheta^{\rc}$  satisfying that:\\ (i)~$\vtheta^{\rc}$ is not a strict-saddle point~\citep[Definition 1]{lee2019first};\\
		(ii)~For any critical embedding $\fT$,  $\fT(\vtheta^{\rc})$ is also not a strict-saddle point,\\ then we term this critical point a {\textbf{"truly-bad" critical point}}.
	\end{definition}
	We would like to introduce some additional notations in order to state Lemma \ref{lem4} and Lemma \ref{lem5}. 
We denote  
	\[
	\mH^{(1),[L-1]}_S(\vtheta):=\sum_{i,j=1}^{m_L}\Exp_{S}\partial_{ij}\ell (\vf_{\vtheta},\vf^*)\nabla_{\vtheta^{[L-1]}}(\vf_{\vtheta})_i\left(\nabla_{\vtheta^{[L-1]}}(\vf_{\vtheta})_j\right)^\T,
	\]
	and
	\[
	\mH^{(2),[L-1]}_S(\vtheta):=\sum_{i,j=1}^{m_L}\Exp_{S}\partial_i\ell (\vf_{\vtheta},\vf^*)\mW_{i,j}^{[L]}\nabla_{\vtheta^{[L-1]}}\nabla_{\vtheta^{[L-1]}}\left(\vf_{\vtheta}^{[L-1]}\right)_j.
	\]
	We denote further that 
	$\mH^{[L-1]}_S(\vtheta):=\mH^{(1),[L-1]}_S(\vtheta)+\mH^{(2),[L-1]}_S(\vtheta)$.

	We state Lemma \ref{lem4} and Lemma \ref{lem5} as follows.
	%%%%%%%%%%%%%%%%%%%%%%%%%%%%%%%%%%%%%%%%%%%%%
	\begin{lemma}\label{lem4}
		Given any data $S$, loss $\ell(\cdot,\cdot)$ and activation $\sigma(\cdot)$, for any NN, if a critical point $\vtheta^{\rc}\in\vTheta^{\rc}$ satisfies:\\ (i)~$\mH_S(\vtheta^{\rc})\succeq 0$;\\  
		(ii)~$\mH^{(2),[L-1]}_S(\vtheta^{\rc})\neq \mathbf{0}$,\\
		then there exists a general compatible critical embedding $\fT$, such that $\fT(\vtheta^{\rc})$ is a strict-saddle point.
	\end{lemma}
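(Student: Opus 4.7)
The plan is to exhibit an explicit general compatible critical embedding and a specific perturbation direction along which the wide-NN Hessian is strictly negative. Since $\mH^{(2),[L-1]}_{S}(\vtheta^{\rc})$ is symmetric and nonzero, it has an eigenvector $\vw$ with $\lambda:=\vw^{\T}\mH^{(2),[L-1]}_{S}(\vtheta^{\rc})\vw\ne 0$. I would take the two-fold splitting embedding $\fT^{\valpha}_{\fI}$ with $m'_{l}=2m_{l}$ for $l\in[1{:}L{-}1]$, $\fI_{l}$ sending both copies back to the original neuron, and no null neurons. On the output layer set $\malpha^{[L]}=[\alpha\mathbf{1},(1-\alpha)\mathbf{1}]$; on hidden layers use the ``copy-isolating'' choice $\malpha^{[1]}=\mathbf{1}_{2m_{1}\times m_{0}}$ and, for $l\in[2{:}L{-}1]$, $\malpha^{[l]}$ equal to the $2m_{l}\times 2m_{l-1}$ block matrix with $\mathbf{1}_{m_{l}\times m_{l-1}}$ on the diagonal blocks and $\mathbf{0}$ off-diagonal; all $\valpha^{[l]}_{\rb}=\vzero$. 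A routine check of Conditions \ref{state1} and \ref{state2} shows this is a valid general compatible embedding with $\vbeta^{[l]}_{j}=\alpha$ on the first copy and $1-\alpha$ on the second, so by Theorem \ref{thm:gce}, $\vtheta_{\rwide}^{\rc}:=\fT^{\valpha}_{\fI}(\vtheta^{\rc})$ is critical.

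Next, I would expand $\RS$ around $\vtheta_{\rwide}^{\rc}$ along a perturbation $\delta\vtheta'=(\delta\vtheta_{1},\delta\vtheta_{2})$ of the $[L-1]$ parameters of the two copies (layer-$L$ parameters fixed). From $\vf_{\vtheta'}=\alpha\mW^{[L]}\vf^{[L-1]}_{\vtheta_{1}}+(1-\alpha)\mW^{[L]}\vf^{[L-1]}_{\vtheta_{2}}+\vb^{[L]}$ and the fact that the hidden-layer Hessians of $\vf^{[L-1]}_{\vtheta'}$ are block-diagonal across the two copies, a direct computation yields
\begin{align*}
\delta\vtheta'^{\T}\mH^{(1),[L-1]}_{S}(\vtheta_{\rwide}^{\rc})\delta\vtheta' &= \bigl(\alpha\delta\vtheta_{1}+(1-\alpha)\delta\vtheta_{2}\bigr)^{\T}\mH^{(1),[L-1]}_{S}(\vtheta^{\rc})\bigl(\alpha\delta\vtheta_{1}+(1-\alpha)\delta\vtheta_{2}\bigr),\\
\delta\vtheta'^{\T}\mH^{(2),[L-1]}_{S}(\vtheta_{\rwide}^{\rc})\delta\vtheta' &= \alpha\,\delta\vtheta_{1}^{\T}\mH^{(2),[L-1]}_{S}(\vtheta^{\rc})\delta\vtheta_{1}+(1-\alpha)\,\delta\vtheta_{2}^{\T}\mH^{(2),[L-1]}_{S}(\vtheta^{\rc})\delta\vtheta_{2}.
\end{align*}
Taking $\delta\vtheta_{1}=\vw$ and $\delta\vtheta_{2}=-\tfrac{\alpha}{1-\alpha}\vw$ (with $\alpha\notin\{0,1\}$) kills the first line and collapses the second to $\tfrac{\alpha}{1-\alpha}\lambda\|\vw\|^{2}$. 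Since $\alpha\mapsto\tfrac{\alpha}{1-\alpha}$ attains every nonzero real value as $\alpha$ ranges over $\sR\setminus\{0,1\}$, I pick $\alpha$ whose sign is opposite to that of $\lambda$, giving $\delta\vtheta'^{\T}\mH_{S}(\vtheta_{\rwide}^{\rc})\delta\vtheta'<0$. Hence $\mH_{S}(\vtheta_{\rwide}^{\rc})$ has a strictly negative eigenvalue and $\vtheta_{\rwide}^{\rc}$ is a strict saddle.

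The main obstacle is really just bookkeeping: checking the forward/backward compatibility conditions in Conditions \ref{state1}--\ref{state2} for every layer of the proposed $\valpha$, and tracking how the second derivatives of $\vf_{\vtheta'}$ decompose into weighted copies (with weights $\alpha$ and $1-\alpha$) of the narrow network's $\mH^{(1),[L-1]}_{S}$ and $\mH^{(2),[L-1]}_{S}$. Hypothesis (i) plays no role in the construction itself---if $\mH_{S}(\vtheta^{\rc})$ already has a negative eigenvalue, the identity embedding finishes the job---so (i) is present only to isolate the nontrivial case the construction is designed to handle.
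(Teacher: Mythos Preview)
Your argument is correct and follows the same overall strategy as the paper: build a multi-fold splitting embedding whose copies are ``isolated'' in the hidden layers, then choose a test direction that annihilates the $\mH^{(1),[L-1]}_S$ contribution while leaving a sign-controllable $\mH^{(2),[L-1]}_S$ contribution. The difference is in the specific construction. The paper uses the fixed three-fold global splitting embedding of Example~\ref{ex} with $\vbeta=(1,1,-1)$, and handles the two signs of $\lambda$ with two different test vectors ($\vu=[\tfrac12\vv^{\T},\tfrac12\vv^{\T},\vv^{\T}]^{\T}$ for $\lambda>0$ and $\vu=[\vv^{\T},-\vv^{\T},\vzero]^{\T}$ for $\lambda<0$). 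You instead use a two-fold splitting with a free parameter $\alpha$ in $\vbeta=(\alpha,1-\alpha)$, use a single family of test directions, and let $\alpha$ absorb the sign choice. Your approach is more economical in width (doubling rather than tripling), while the paper's construction has the mild advantage that the embedding $\fT_{\mathrm{global}}$ itself is fixed and does not depend on $\vtheta^{\rc}$---either proves the lemma as stated. Two cosmetic slips worth noting: with your definition $\lambda:=\vw^{\T}\mH^{(2),[L-1]}_S(\vtheta^{\rc})\vw$ the final quantity is $\tfrac{\alpha}{1-\alpha}\lambda$ without an extra $\|\vw\|^{2}$ factor, and the map $\alpha\mapsto\tfrac{\alpha}{1-\alpha}$ omits the value $-1$ rather than hitting ``every nonzero real value''; neither affects the conclusion.
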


	%
	
	%  \\
	%     \nabla_{\vtheta_1}\nabla_{\vtheta_1}\vf_{\vtheta_{\rwide}} &= \nabla_{\vtheta_2}\nabla_{\vtheta_2}\vf_{\vtheta_{\rwide}}
	%     =\mW_{\rnarr}^{[L]}\nabla_{\vtheta_1}\nabla_{\vtheta_1}\vf^{[L-1]}, \\
	%     \nabla_{\vtheta''^{[L-1]}}^2\vf_{\vTheta}(\vx) &= 
	%     -\mW^{[L]}\nabla\nabla_{\vtheta^{[L-1]}}\vf^{[L-1]},\\
	%     \nabla_{\vtheta^{[L-1]}}\nabla_{\vtheta'^{[L-1]}}\vf_{\vTheta}(\vx)&=
	%     \nabla_{\vtheta^{[L-1]}}\nabla_{\vtheta''^{[L-1]}}\vf_{\vTheta}(\vx) = \nabla_{\vtheta'^{[L-1]}}\nabla_{\vtheta''^{[L-1]}}\vf_{\vTheta}(\vx) = \mathbf{0},

	\begin{lemma}\label{lem5}
		Given any data $S$, loss $\ell(\cdot,\cdot)$ and activation $\sigma(\cdot)$, for any NN, if a critical point $\vtheta^{\rc}\in\vTheta^{\rc}$ satisfies:\\
		(i)~$\mH_S(\vtheta^{\rc})\succeq 0$;\\ 
		(ii)~$\mH^{(2),[L-1]}_S(\vtheta^{\rc})= \mathbf{0}$;\\
		(iii)~ $\mH^{(2)}_S(\vtheta^{\rc})\neq \mathbf{0}$.\\
		Then there exist a general compatible critical embedding $\fT$, such that $\fT(\vtheta)$ is a strict-saddle point.
	\end{lemma}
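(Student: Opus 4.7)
The plan is to apply a general compatible critical embedding $\fT$ that splits every neuron in layer $L-1$ into two copies with output splitting coefficients $\tfrac{1}{2}$, and then exhibit a perturbation $\vnu$ at $\fT(\vtheta^{\rc})$ along which $\mH^{(1)}_S$ vanishes while $\vnu^{\T}\mH^{(2)}_S\vnu$ has controllable sign. The enabling observation is that $\vf_{\vtheta}=\mW^{[L]}\vf^{[L-1]}_{\vtheta}+\vb^{[L]}$ is affine in $(\mW^{[L]},\vb^{[L]})$, so every second derivative of $(\vf_{\vtheta})_i$ involving $\vb^{[L]}$ or two factors in $\mW^{[L]}$ vanishes identically. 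Hence the only nonzero blocks of $\mH^{(2)}_S(\vtheta^{\rc})$ are its $(\vtheta^{[L-1]},\vtheta^{[L-1]})$ block, which coincides with $\mH^{(2),[L-1]}_S(\vtheta^{\rc})$, and the cross block between $\mW^{[L]}$ and $\vtheta^{[L-1]}$. Under (ii)--(iii), the cross block must be nonzero; concretely, there exist $i_0\in[m_L]$, $j_0\in[m_{L-1}]$ and a direction $\vxi$ in the parameter space of $\vtheta^{[L-1]}$ such that
\[ c := \Exp_{S}\!\left[\partial_{i_0}\ell(\vf_{\vtheta^{\rc}},\vf^*)\,\bigl\langle \nabla_{\vtheta^{[L-1]}}\bigl(\vf^{[L-1]}_{\vtheta^{\rc}}\bigr)_{j_0},\,\vxi\bigr\rangle\right] \neq 0. \]

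I will construct $\fT$ from the total index mapping $\fI_{L-1}^{-1}(s)=\{s,s+m_{L-1}\}$ for $s\in[m_{L-1}]$ (with $\fI_l$ the identity for $l\neq L-1$), together with coefficients $\malpha^{[L]}_{\cdot,s}=\malpha^{[L]}_{\cdot,s+m_{L-1}}=\tfrac{1}{2}$, $\malpha^{[l]}\equiv 1$ for $l\neq L$, auxiliary backward weights $\vbeta^{[L-1]}_s=\vbeta^{[L-1]}_{s+m_{L-1}}=\tfrac{1}{2}$ (and $\vbeta^{[l]}\equiv 1$ otherwise), $\valpha_{\rb}^{[l]}=\vzero$, and no null neurons; compatibility conditions I--II are verified directly, so by Theorem \ref{thm:gce}, $\fT$ is a critical embedding. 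Writing $(1),(2)$ for the two copies of layer $L-1$ at $\tilde{\vtheta}:=\fT(\vtheta^{\rc})$, I take the perturbation $\vnu$ defined by
\[ \delta\mW^{[L],(1)}=\epsilon\, E_{i_0 j_0},\ \ \delta\mW^{[L],(2)}=-\epsilon\, E_{i_0 j_0},\ \ \delta\vtheta^{[L-1],(1)}=\vxi,\ \ \delta\vtheta^{[L-1],(2)}=-\vxi, \]
and zero on all other components ($E_{i_0 j_0}$ is the matrix unit, $\epsilon\in\sR$ free). The opposite-sign perturbations of $\mW^{[L],(p)}$ cancel in $\delta\mW^{[L]}_{\rwide}\,\vf^{[L-1]}_{\rwide}$, while the opposite-sign perturbations of $\vtheta^{[L-1],(p)}$ cancel in $\mW^{[L]}_{\rwide}\,\delta\vf^{[L-1]}_{\rwide}$, so $\nabla(\vf_{\tilde{\vtheta}})_i\cdot\vnu=0$ for every $i$, whence $\vnu^{\T}\mH^{(1)}_S(\tilde{\vtheta})\vnu=0$.

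For $\mH^{(2)}_S$, observe that different copies share no nonlinear coupling in $(\vf_{\tilde{\vtheta}})_i$: the cross second derivative between $\mW^{[L],(p)}$ and $\vtheta^{[L-1],(p')}$ vanishes for $p\neq p'$. A direct second-order Taylor expansion of $(\vf_{\tilde{\vtheta}+t\vnu})_i$ at $t=0$ then yields
\[ \vnu^{\T}\mH^{(2)}_S(\tilde{\vtheta})\vnu \;=\; 4\epsilon\, c \;+\; \vxi^{\T}\mH^{(2),[L-1]}_S(\vtheta^{\rc})\vxi \;=\; 4\epsilon\, c, \]
because the two within-copy cross contributions add constructively (the simultaneous sign flip of $\delta\mW^{[L],(p)}$ and $\delta\vtheta^{[L-1],(p)}$ in copy $(2)$ preserves the product's sign), while the diagonal $\vtheta^{[L-1]}$ pieces from copies $(1),(2)$---each weighted by $\tfrac{1}{2}\mW^{[L]}$ and invariant under $\vxi\mapsto -\vxi$---sum to $\vxi^{\T}\mH^{(2),[L-1]}_S(\vtheta^{\rc})\vxi$, which vanishes by (ii). Therefore $\vnu^{\T}\mH_S(\tilde{\vtheta})\vnu=4\epsilon c$, and choosing $\epsilon$ so that $\epsilon c<0$ exhibits a strictly negative eigenvalue of $\mH_S(\tilde{\vtheta})$, proving that $\tilde{\vtheta}$ is a strict-saddle. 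The most delicate step is orchestrating this cancellation: the splitting coefficient $\tfrac{1}{2}$ is what makes the two diagonal $(\vtheta^{[L-1]},\vtheta^{[L-1]})$ pieces reassemble into exactly $\mH^{(2),[L-1]}_S(\vtheta^{\rc})$ rather than $2\mH^{(2),[L-1]}_S$, while the opposite-sign pattern of $\vnu$ across copies is precisely what simultaneously annihilates $\mH^{(1)}_S$ to first order and leaves the cross term doubled rather than cancelled.
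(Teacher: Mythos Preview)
Your high-level strategy---an antisymmetric perturbation across copies to annihilate the $\mH^{(1)}_S$ contribution while isolating the $(\mW^{[L]},\vtheta^{[L-1]})$ cross block of $\mH^{(2)}_S$---is sound and close in spirit to the paper's proof. However, the embedding you construct is too weak to carry the argument. You split \emph{only} layer $L-1$, with $\fI_l$ the identity for $l\neq L-1$. In that wide network the parameters of layers $1,\ldots,L-2$ are \emph{shared}, not duplicated; there are two copies of $(\mW^{[L-1]},\vb^{[L-1]})$ but only one copy of $\vtheta^{[L-2]}$. Consequently your prescription $\delta\vtheta^{[L-1],(1)}=\vxi$, $\delta\vtheta^{[L-1],(2)}=-\vxi$ is ill-defined whenever $\vxi$ has a nonzero component in layers $1,\ldots,L-2$: those coordinates cannot be simultaneously $+\vxi$ and $-\vxi$. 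And nothing in your argument guarantees that the witnessing direction $\vxi$ (for which $c\neq 0$) can be chosen supported in the layer-$(L-1)$ parameters alone; in general the nonzero entry of the cross block $\Exp_S\bigl[\partial_{i_0}\ell\,\nabla_{\vtheta^{[L-1]}}(\vf^{[L-1]})_{j_0}\bigr]$ may lie in an earlier-layer coordinate.

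The paper closes exactly this gap by using the \emph{global} three-fold splitting $\fT_{\mathrm{global}}$ of Example~\ref{ex}, which duplicates every hidden layer and hence produces genuinely independent copies $\vtheta_1,\vtheta_2$ of the full tuple $\vtheta^{[L-1]}$. It then restricts to the principal submatrix in the coordinates $(\vtheta_1,\mW^{[L],(1)},\vtheta_2,\mW^{[L],(2)})$ and observes that, under (ii), the reduced matrix $\widetilde{\mH}^{(2)}_S(\vtheta^{\rc})$ (the $(\mW^{[L]},\vtheta^{[L-1]})$ block of $\mH^{(2)}_S$ with the zero diagonal) has trace zero yet is nonzero by (iii), hence has a negative eigenvalue $\lambda^{\mathrm{neg}}$ with eigenvector $\vv$; then $\vu=[\vv^{\T},-\vv^{\T}]^{\T}$ gives $\vu^{\T}\widetilde{\mH}\vu=2\lambda^{\mathrm{neg}}<0$. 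Your computation of $\vnu^{\T}\mH^{(2)}_S(\tilde\vtheta)\vnu=4\epsilon c$ is correct \emph{once} the split is global (a two-fold global split with output coefficients $\tfrac12,\tfrac12$ would already suffice for your argument), but not for the embedding you actually defined.
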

	% For convex loss function $l(\cdot,\cdot)$ such that $l''(\cdot,\cdot)\geq 0$, $\mH^{(1)}_S=\Exp_{\vx\sim S}l''(f(\vx,\vtheta),f^*(\vx))\nabla_{\vtheta}f_{\vtheta}(\vx)\nabla_{\vtheta}f_{\vtheta}(\vx)^{\T}$ is always positive semidefinite. We prove that if $\mH^{(2)}_S= \Exp_{\vx\sim S}l'(f(\vx,\vtheta),f^*(\vx))\nabla_{\vtheta}^2f_{\vtheta}(\vx)$ has a negative eigenvalue, then, there exists a critical embedding such that $\mH_S(\fT\vtheta_0)$ has a negative eigenvalue.
	
	\begin{theorem}\label{thm:trulybad}
		Given any data $S$, loss $\ell(\cdot,\cdot)$ and activation $\sigma(\cdot)$, for any NN, if a critical point $\vtheta^{\rc}$ with $\mH_S(\vtheta^{\rc})\succeq 0$ satisfies $\mH^{(2)}_S(\vtheta^{\rc}) \neq \mathbf{0}$, then there exist a general compatible critical embedding $\fT$ such that $\fT(\vtheta^{\rc})$ is a strict-saddle point, i.e., $\mH_S(\fT(\vtheta^{\rc}))$ has at least one negative eigenvalue.
	\end{theorem}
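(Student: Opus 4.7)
The plan is to deduce Theorem~\ref{thm:trulybad} as an immediate corollary of Lemma~\ref{lem4} and Lemma~\ref{lem5} by performing a simple dichotomy on whether the ``upper-layer'' piece $\mH^{(2),[L-1]}_S(\vtheta^{\rc})$ of the Hessian vanishes. Since the three hypotheses involved (semi-positive definiteness of $\mH_S$, nonvanishing of $\mH^{(2)}_S$, and vanishing/nonvanishing of $\mH^{(2),[L-1]}_S$) are exactly the triggers of the two lemmas, no additional ingredient is needed beyond those two lemmas.

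Concretely, I would reason as follows. Fix a critical point $\vtheta^{\rc}$ with $\mH_S(\vtheta^{\rc})\succeq 0$ and $\mH^{(2)}_S(\vtheta^{\rc})\neq \mathbf{0}$. I then split into two cases.
\textbf{Case 1: $\mH^{(2),[L-1]}_S(\vtheta^{\rc})\neq \mathbf{0}$.} The hypotheses of Lemma~\ref{lem4} are satisfied verbatim, and the lemma yields a general compatible critical embedding $\fT$ for which $\fT(\vtheta^{\rc})$ is a strict-saddle point.
\textbf{Case 2: $\mH^{(2),[L-1]}_S(\vtheta^{\rc})=\mathbf{0}$.} Combined with $\mH_S(\vtheta^{\rc})\succeq 0$ and the hypothesis $\mH^{(2)}_S(\vtheta^{\rc})\neq \mathbf{0}$, all three assumptions of Lemma~\ref{lem5} hold, and this lemma again produces an embedding $\fT$ such that $\fT(\vtheta^{\rc})$ is a strict-saddle point. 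In either case we obtain the desired embedding, and since a strict-saddle point has at least one negative Hessian eigenvalue, the conclusion follows.

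The step that is essentially trivial here is the case analysis; the real content lives inside Lemma~\ref{lem4} and Lemma~\ref{lem5}, which I am invoking as black boxes. If I were to rebuild those, the main obstacle would be constructing an embedding whose extra $\valpha$-freedom produces a direction in the wider parameter space along which the quadratic form of the Hessian is strictly negative. The natural tool is the three-fold global splitting embedding of Example~\ref{ex}: its added $\valpha$-parameters create new eigendirections whose second-order variation is governed by $\mH^{(2)}_S$ rather than $\mH^{(1)}_S$ (since the $\mH^{(1)}$-part is annihilated by the output-preserving property along infinitesimal splitting directions), so a nonvanishing $\mH^{(2)}$ or $\mH^{(2),[L-1]}$ can be exploited to tilt the quadratic form negative on the embedded parameters even though the original Hessian was semi-positive definite. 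But for the theorem itself, once the two lemmas are granted, the proof reduces to the two-line case split above.
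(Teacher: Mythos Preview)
Your proposal is correct and matches the paper's own proof, which states in a single line that the theorem follows directly from Lemma~\ref{lem4} and Lemma~\ref{lem5}. The dichotomy on whether $\mH^{(2),[L-1]}_S(\vtheta^{\rc})$ vanishes is exactly the intended argument.
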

	
	\begin{proof}
		This can be directly obtained by Lemma \ref{lem4} and \ref{lem5}.
	\end{proof}
	
	\begin{theorem*}[short version of Theorem \ref{thm:trulybad}]
		$\mH^{(2)}_S(\vtheta^{\rc}) = \mzero$ is a necessary condition for a critical point $\vtheta^{\rc}$ being a "truly-bad" critical point.
	\end{theorem*}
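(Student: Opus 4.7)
The plan is to derive the short version as a one-line contrapositive of Theorem~\ref{thm:trulybad}, so no essentially new argument is required; the statement is a repackaging of the main theorem into the language of necessary conditions.

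First I would unpack the definition of ``truly-bad''. Condition~(i) says $\vtheta^{\rc}$ is not a strict-saddle point. Following \cite{lee2019first}, a critical point is a strict-saddle exactly when its Hessian has at least one strictly negative eigenvalue, so at a critical point condition~(i) is equivalent to $\mH_S(\vtheta^{\rc}) \succeq \mzero$. Condition~(ii) then says that for every critical embedding $\fT$, the image $\fT(\vtheta^{\rc})$ is also not a strict-saddle, i.e., $\mH_S(\fT(\vtheta^{\rc})) \succeq \mzero$ for every critical embedding $\fT$.

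Next I would argue by contradiction. Suppose $\vtheta^{\rc}$ is truly-bad but $\mH^{(2)}_S(\vtheta^{\rc}) \neq \mzero$. Condition~(i) already provides $\mH_S(\vtheta^{\rc}) \succeq \mzero$, so the two hypotheses of Theorem~\ref{thm:trulybad} are simultaneously in force. Applying that theorem supplies a general compatible critical embedding $\fT$ for which $\fT(\vtheta^{\rc})$ is a strict-saddle point. This directly contradicts condition~(ii). Hence $\mH^{(2)}_S(\vtheta^{\rc}) = \mzero$, establishing the necessary condition.

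There is no substantive obstacle here, since the heavy lifting has already been done in Lemmas~\ref{lem4} and~\ref{lem5}, where the three-fold global splitting embedding from Example~\ref{ex} is used to inject a negative-eigenvalue direction into the embedded Hessian whenever $\mH^{(2)}_S(\vtheta^{\rc}) \neq \mzero$. The only minor care I would take is to confirm that the convention ``critical point and not a strict-saddle'' $\Leftrightarrow$ $\mH_S \succeq \mzero$ matches the definition of strict-saddle adopted in \cite{lee2019first}; this is a definitional check rather than a real proof step.
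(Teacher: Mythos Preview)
Your proposal is correct and matches the paper's approach: the short version is stated immediately after Theorem~\ref{thm:trulybad} as a direct contrapositive reformulation, with no separate argument given beyond the proof of Theorem~\ref{thm:trulybad} via Lemmas~\ref{lem4} and~\ref{lem5}. Your unpacking of condition~(i) as $\mH_S(\vtheta^{\rc})\succeq \mzero$ and the ensuing contradiction argument are exactly what is intended.
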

	%%%%%%%%%%%%%%%%%%%%%%%%%%%%%%%%%%%%%%%%%%%%%%%%%%%%%%%%%%%%% 
% 	\bibliographystyle{elsarticle-num-names}

\section{Conclusion and discussion}
In this work, we prove the Embedding Principle that loss landscape of an NN \emph{contains} all critical points/functions of all the narrower NNs. We define the critical embedding, which serves as the key tool not only to the proof of Embedding Principle but also to the study of the general geometry of loss landscape. Importantly, we discover a wide class of general compatible embedding, by which we obtain rich understanding about the critical points/submanifolds, e.g., their degree of degeneracy, their easy and irreversible transition to strict-saddle points.

% This embedding principle unravels the
% wide existence of highly degenerate critical points with low complexity in the loss landscape of a wide DNN, i.e., critical points with low-complexity output function and degenerate Hessian matrix, embedded from critical points of narrow DNNs. 
% With such a loss landscape of DNN, the gradient-based training has the potential of getting attracted or even converging to a low complexity critical point as confirmed by numerical experiments, which implies a potential implicit regularization towards low-complexity function of nonlinear DNN training dynamics.

The general compatible embedding proposed in this work unravels that the embedding relation from a critical point of the loss landscape of a narrow NN to critical points of the loss landscape of any wide NN is not limited by the composition of adding neurons one by one, but can be a collective operation. As a consequence, all critical points of a wide NN embedded from a critical point of a narrower NN by all possible general compatible critical embeddings, form high-dimensional critical submanifolds which in general are not affine subspaces for three-layer or deeper NNs. It is interesting and important. However, it remains a problem for the future study about whether the general compatible embeddings in certain sense are all the critical embeddings.

Embedding Principle provides an integrated view about loss landscapes of NNs with different widths. Specifically, it informs us that the existence of bad local minimum in a NN with a fixed width may not be a big deal for optimization as long as it can become a strict-saddle point in wider NNs, i.e., not a ``truly-bad'' critical point. In this work, we prove a stringent necessary condition for a ``truly-bad'' critical point. Still, it remains a problem about whether non-trivial ``truly-bad'' critical points indeed exist, and whether they are indeed a headache for optimization. 

We emphasize that Embedding Principle provides a function space view on the critical points~/submanifolds of loss landscape, which is of great importance for studying the implicit regularization and generalization of NNs. In recent years, it has been more and more clear that optimization and generalization are heavily intertwined with one another for deep learning. Therefore, one can not expect to develop a deep learning theory with optimization theory and generalization theory established separately. Yet, for a long time, the study of loss landscape focuses mainly on the parameter space in pursuit of an optimization guarantee. On the other hand, the study of generalization focuses mainly on the function space, failing to incorporate the geometry of loss landscape in parameter space which is key to the training. Now, by the Embedding Principle, we see a hierarchical structure of critical functions of different complexities of the loss landscape, which originate from loss landscape with clear optimization implication while being amenable to the analysis of complexity with clear generalization implication. Such a critical function hierarchy reflects the degree of matching between the NN architecture and data, i.e., if critical functions of narrow NNs attain low training error, then the NN architecture well matches the target function and may obtain a well generalized solution through training like in Fig. \ref{fig:EPrinciple_example}. Though, more works need to be done to unravel the full details and implication of this critical function hierarchy, we believe it is an important piece and may be a key to the deep learning theory.

\section*{Acknowledgments}
This work is sponsored by the National Key R\&D Program of China  Grant No. 2019YFA0709503 (Z. X.), the Shanghai Sailing Program, the Natural Science Foundation of Shanghai Grant No. 20ZR1429000  (Z. X.), the National Natural Science Foundation of China Grant No. 62002221 (Z. X.), the National Natural Science Foundation of China Grant No. 12101401 (T. L.), the National Natural Science Foundation of China Grant No. 12101402 (Y. Z.), Shanghai Municipal of Science and Technology Project Grant No. 20JC1419500 (Y.Z.), Shanghai Municipal of Science and Technology Major Project No. 2021SHZDZX0102, and the HPC of School of Mathematical Sciences and the Student Innovation Center at Shanghai Jiao Tong University.

\appendix
\section{Theoretical details}
\begin{lemma*} [Lemma \ref{lem...appen...section1...null} in main text.]
For any one-step null embedding $\fT_{l,0}^{\alpha}$, given any  $\mathrm{NN}(\{m_l\}_{l=0}^{L})$ and its parameters
$\vtheta_{\rnarr}\in\mathrm{Tuple}_{\{m_0,\cdots,m_L\}}$ with $\mathrm{Tuple}_{\{m_0,\cdots,m_L\}}\in\fD_{l,0}$, we have $\vtheta_{\rwide}:=\fT_{l,0}^{\alpha}(\vtheta_{\rnarr})$ satisfies the following conditions: given any data $S$, loss $\ell(\cdot,\cdot)$ and activation $\sigma(\cdot)$, for any $l\in[L-1]$,\\
(i) feature vectors in $\vF_{\vtheta_{\rwide}}$: $\vf^{[l']}_{\vtheta_{\rwide}}=\vf^{[l']}_{\vtheta_{\rnarr}}$, for $l'\in[L]$ and $l'\neq l$, $\vf^{[l]}_{\vtheta_{\rwide}}=\left[(\vf_{\vtheta_{\rnarr}}^{[l]})^\T,\sigma(\alpha)\right]^\T$;\\
(ii) feature gradients in $\vG_{\vtheta_{\rwide}}$: $\vg^{[l']}_{\vtheta_{\rwide}}=\vg^{[l']}_{\vtheta_{\rnarr}}$, for $l'\in[L]$ and $l'\neq l$, $\vg^{[l]}_{\vtheta_{\rwide}}=
\left[(\vg^{[l]}_{\vtheta_{\rnarr}})^\T,\sigma^{(1)}(\alpha)\right]^\T$;\\
(iii) error vectors in $\vZ_{\vtheta_{\rwide}}$: 
$\vz^{[l']}_{\vtheta_{\rwide}}=\vz^{[l']}_{\vtheta_{\rnarr}}$, for $l'\in[L]$ and $l'\neq l$, $\vz^{[l]}_{\vtheta_{\rwide}}=
\left[ \vz_{\vtheta_{\rnarr}}^{[l]},0\right]^\T$;\\
(iv) $\fT_{l,0}^{\alpha}$ is injective for all $\alpha$;\\
(v) $\fT_{l,0}^{\alpha}$ is an affine embedding for all $\alpha$.
\end{lemma*}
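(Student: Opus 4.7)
The plan is to verify each of the five claims in turn by directly unpacking the definition of $\fT^\alpha_{l,0}$ and then tracing the effect through the forward pass, the feature-gradient formula, and the backpropagation recursion \eqref{eq..text...BackwardProp}.

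First I would establish (i) by induction on the layer index. For $l'<l$, the construction leaves $\vtheta|_{l'}$ untouched, so $\vf^{[l']}_{\vtheta_{\rwide}}=\vf^{[l']}_{\vtheta_{\rnarr}}$ trivially. At layer $l$, the new weight matrix is $\bigl[\mW^{[l]};\vzero_{1\times m_{l-1}}\bigr]$ with bias $\bigl[\vb^{[l]};\alpha\bigr]$, so the pre-activation of the added neuron is $\vzero^\T\vf^{[l-1]}_{\vtheta_{\rwide}}+\alpha=\alpha$, while the remaining rows reproduce the pre-activations of $\vtheta_{\rnarr}$ thanks to the inductive hypothesis. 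Applying $\sigma(\cdot)$ componentwise yields $\vf^{[l]}_{\vtheta_{\rwide}}=[(\vf^{[l]}_{\vtheta_{\rnarr}})^\T,\sigma(\alpha)]^\T$. At layer $l+1$, the weight matrix becomes $[\mW^{[l+1]},\mzero_{m_{l+1}\times 1}]$; when multiplied against the augmented feature vector, the appended coordinate is annihilated, so $\vf^{[l+1]}_{\vtheta_{\rwide}}=\vf^{[l+1]}_{\vtheta_{\rnarr}}$, and for $l'>l+1$ the weights are untouched so the equality propagates by induction.

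For (ii), the feature gradient $\vg^{[l']}_{\vtheta}=\sigma^{(1)}(\mW^{[l']}\vf^{[l'-1]}_{\vtheta}+\vb^{[l']})$ depends only on pre-activations, which were computed above; the only new entry is $\sigma^{(1)}(\alpha)$ at layer $l$. For (iii), I will run the backpropagation recursion \eqref{eq..text...BackwardProp} from the output layer backward. At $l'=L$, $\vz^{[L]}_{\vtheta_{\rwide}}=\nabla\ell(\vf^{[L]}_{\vtheta_{\rwide}},\cdot)=\vz^{[L]}_{\vtheta_{\rnarr}}$ by (i). For $l<l'<L$, the weights are identical and the forward features agree, so equality of error vectors is immediate by induction. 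At $l'=l$, the key identity is
\begin{equation*}
\vz^{[l]}_{\vtheta_{\rwide}}=\bigl(\mW^{[l+1]}_{\rwide}\bigr)^\T\bigl(\vz^{[l+1]}_{\vtheta_{\rwide}}\circ\vg^{[l+1]}_{\vtheta_{\rwide}}\bigr)=\bigl[\mW^{[l+1]},\mzero\bigr]^\T\bigl(\vz^{[l+1]}_{\vtheta_{\rnarr}}\circ\vg^{[l+1]}_{\vtheta_{\rnarr}}\bigr),
\end{equation*}
which directly produces $[(\vz^{[l]}_{\vtheta_{\rnarr}})^\T,0]^\T$. For $l'=l-1$, the matrix $(\mW^{[l]}_{\rwide})^\T$ acquires an extra column but it is multiplied by the appended coordinate of $\vz^{[l]}_{\vtheta_{\rwide}}\circ\vg^{[l]}_{\vtheta_{\rwide}}$, which equals $0\cdot\sigma^{(1)}(\alpha)=0$; hence the extra column contributes nothing and $\vz^{[l-1]}_{\vtheta_{\rwide}}=\vz^{[l-1]}_{\vtheta_{\rnarr}}$. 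The remaining cases $l'<l-1$ are trivial since both $\mW^{[l'+1]}$ and $\vz^{[l'+1]}_{\vtheta}\circ\vg^{[l'+1]}_{\vtheta}$ coincide for the two networks.

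Finally, (v) follows from the definitions: $\fT_{l,0}$ acts by padding the relevant matrix/vector blocks with zeros, which is clearly linear (in particular $\fT_{l,0}(\vzero)=\vzero$), while $\fV_{l,0}$ is a constant tuple independent of $\vtheta$; thus $\fT^\alpha_{l,0}(\vtheta)-\fT^\alpha_{l,0}(\vzero)=\fT_{l,0}(\vtheta)$ is linear, so $\fT^\alpha_{l,0}$ is affine. Claim (iv) then follows from (v): the original blocks $\mW^{[k]},\vb^{[k]}$ for $k\neq l,l+1$, together with the first $m_l$ rows of the layer-$l$ weight/bias and the first $m_l$ columns of the layer-$(l+1)$ weight, are read off verbatim from $\fT^\alpha_{l,0}(\vtheta)$, so $\fT^\alpha_{l,0}(\vtheta_1)=\fT^\alpha_{l,0}(\vtheta_2)$ forces $\vtheta_1=\vtheta_2$. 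None of these steps presents a real obstacle; the only bookkeeping point that needs care is the backward pass at layer $l-1$, where one must notice that the extra column of $(\mW^{[l]}_{\rwide})^\T$ is harmless because it contracts against the zero coordinate introduced at layer $l$.
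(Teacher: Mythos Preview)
Your proposal is correct and follows essentially the same approach as the paper: both verify (i)--(iii) by direct layer-by-layer computation through the forward pass and the backpropagation recursion, and both handle (iv)--(v) by observing that the original blocks are preserved verbatim in the image and that $\fT_{l,0}$ is linear while $\alpha\fV_{l,0}$ is a constant offset. The only cosmetic differences are that the paper writes out $\tilde\fT^\alpha_{l,0}(\vtheta_0)$ explicitly for (v) rather than invoking the constancy of $\fV_{l,0}$, and treats (iv) before (v) rather than after.
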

%%%%%%%%%%%%%%%%%%%%%%%%%%%%%%%%%%%%%%%%%%%%%%%%%%%%%%%%%%%%%%%%%%%%%%%%%%%%%%%%
\begin{proof}
(i) By the construction of $\vtheta_{\rwide}$, it is clear that $\vf^{[l']}_{\vtheta_{\rwide}}=\vf^{[l']}_{\vtheta_{\rnarr}}$ for all $l'\in [l-1]$. Then
\begin{align}
	\vf^{[l]}_{\vtheta_{\rwide}}
	&= \sigma  \left(\left[ {\begin{array}{cc}
			\mW^{[l]} \\
			\vzero_{1\times m_{l-1}} \\
	\end{array} } \right] \vf^{[l-1]}_{\vtheta_{\rnarr}}+
	\left[ {\begin{array}{cc}
			\vb^{[l]} \\
			\alpha \\
	\end{array} } \right]
	\right)
	= \left[ {\begin{array}{cc}
			\vf^{[l]}_{\vtheta_{\rnarr}} \\
			\sigma(\alpha) \\
	\end{array} } \right].
\end{align}
Note that since
\begin{equation*}
	\alpha\left[\mzero_{m_{l+1}\times {(m_l+1)}}\right]\left[ {\begin{array}{cc}
			\vf^{[l]}_{\vtheta_{\rnarr}} \\
			\sigma(\alpha) \\
	\end{array} } \right]=\mzero_{m_{l+1}\times 1}
\end{equation*}
Thus
\begin{align}
	\vf^{[l+1]}_{\vtheta_{\rwide}} 
	&=\sigma  \left(
	\left[\mW^{[{l+1}]},\mzero_{m_{l+1}\times 1}\right]
	\left[ {\begin{array}{cc}
			\vf^{[l]}_{\vtheta_{\rnarr}} \\
			\sigma(\alpha) \\
	\end{array} } \right]+\mzero_{m_{l+1}\times1}+
	\vb^{[l+1]}+\alpha~\mzero_{m_{l+1}\times1} 
	\right) = \vf_{\vtheta_{\rnarr}}^{[l+1]}.
\end{align}
Next, by the construction of $\vtheta_{\rwide}$, it is clear that $\vf^{[l']}_{\vtheta_{\rwide}}=\vf^{[l']}_{\vtheta_{\rnarr}}$ for any $l'\in[l+1:L]$. 

(ii) The results for feature gradients $\vg^{[l']}_{\vtheta_{\rwide}}=\vg^{[l']}_{\vtheta_{\rnarr}}$ for $l'\in[L]$ can be calculated in a similar way except by replacing $\sigma(\cdot)$ with  $\sigma^{(1)}(\cdot)$.

(iii)  By the backpropagation and the above facts in (i), we have \[\vz^{[L]}_{\vtheta_{\rwide}}=\nabla\ell(\vf^{[L]}_{\vtheta_{\rwide}},\vy)=\nabla\ell(\vf^{[L]}_{\vtheta_{\rnarr}},\vy)=\vz^{[L]}_{\vtheta_{\rnarr}}.\]

Recall the recurrence relation for $l'\in[l+1:L-1]$, then we recursively obtain the following equality for $l'$ from $L-1$ down to $l+1$:
\begin{equation}
	\vz^{[l']}_{\vtheta_{\rwide}}
	=(\mW^{[l'+1]})^\T \left(\vz^{[l'+1]}_{\vtheta_{\rwide}}\circ \vg^{[l'+1]}_{\vtheta_{\rwide}}\right)
	=(\mW^{[l'+1]})^\T \left(\vz^{[l'+1]}_{\vtheta_{\rnarr}}\circ \vg^{[l'+1]}_{\vtheta_{\rnarr}}\right)
	=\vz_{\vtheta_{\rnarr}}^{[l']}.
\end{equation}
Next,
\begin{align}
	\vz^{[l]}_{\vtheta_{\rwide}}
	&=\left(\left[\mW^{[{l+1}]},\mzero_{m_{l+1}\times 1}\right]
	+\alpha \left[\mzero_{m_{l+1}\times {(m_l+1)}}\right]\right)^\T
	\left(\vz^{[l+1]}_{\vtheta_{\rwide}}\circ \vg^{[l+1]}_{\vtheta_{\rwide}}\right) \nonumber\\
	&=\left[ {\begin{array}{cc}
			\vz^{[l]}_{\vtheta_{\rnarr}} \\
			0 \\
	\end{array} } \right]+
	\left[ \mzero_{ {(m_l+1)\times1}} \right]= \left[ \vz_{\vtheta_{\rnarr}}^{[l]},0\right]^\T.
\end{align}
Finally, 
\begin{align}
	\vz^{[l-1]}_{\vtheta_{\rwide}}
	&=\left[{\begin{array}{cc}
			{\mW^{[l]}}^\T,
			\vzero_{ m_{l-1} \times 1} \\
	\end{array} } \right] 
	\left(\left[ {\begin{array}{cc}
			\vz^{[l]}_{\vtheta_{\rnarr}} \\
			0 \\
	\end{array} } \right]
	\circ \left[ {\begin{array}{cc}
			\vg^{[l]}_{\vtheta_{\rnarr}} \\
			\sigma^{(1)}(\alpha)\\
	\end{array} } \right]\right)\nonumber\\
	&=(\mW^{[l]})^\T\left(\vz^{[l]}_{\vtheta_{\rnarr}}\circ\vg^{[l]}_{\vtheta_{\rnarr}}\right)+\mzero_{m_{l-1}\times 1}=\vz_{\vtheta_{\rnarr}}^{[l-1]}.
\end{align}
This with the recurrence relation once again leads to $\vz^{[l']}_{\vtheta_{\rwide}}=\vz_{\vtheta_{\rnarr}}^{[l']}$ for all $l'\in[l-1]$.

(iv) If for $\vtheta_1, \vtheta_2\in\mathrm{Tuple}_{\{m_0,\cdots,m_L\}}$ and $\vtheta_1\neq\vtheta_2$, since $\fT_{l,0}^{\alpha}|_k$ for $k\neq l, l+1$ is the identity map, then if there exists some $k_0\neq l, l+1$, such that $\mW_1^{[k_0]}\neq \mW_2^{[k_0]}$ or $\vb_1^{[k_0]}\neq \vb_2^{[k_0]}$,  then obviously $\fT_{l,0}^{\alpha}(\vtheta_1)\neq\fT_{l,0}^{\alpha}(\vtheta_2)$.   If   $k_0=l$ or $k_0=l+1$, by similar reasoning, $\fT_{l,0}^{\alpha}(\vtheta_1)\neq\fT_{l,0}^{\alpha}(\vtheta_2)$. 

(v) For $\vtheta_{ 0}=(\mW_0^{[1]},\vb_0^{[1]},\cdots,\mW_0^{[L]},\vb_0^{[L]})\in\mathrm{Tuple}_{\{m_0,\cdots,m_L\}}$, we have 
\[
\Tilde{\fT}_{l,0}^{\alpha}(\vtheta_0) =\left(\mW_0^{[1]},\vb_0^{[1]},\cdots,\left[ {\begin{array}{cc}
		\mW_0^{[l]} \\
		\vzero_{1\times m_{l-1}} \\
\end{array} } \right], \left[ {\begin{array}{cc}
		\vb_0^{[l]} \\
		0 \\
\end{array} } \right], \left[\mW_0^{[l+1]},\mzero_{m_{l+1}\times 1}\right],\vb_0^{[l+1]},\cdots,  \mW_0^{[L]},\vb_0^{[L]}\right),
\]
obviously $\Tilde{\fT}_{l,0}^{\alpha}$ is a linear operator, thus $\fT_{l,0}^{\alpha}$ is an affine operator.
\end{proof}
%%%%%%%%%%%%%%%%%%%%%%%%%%%%%%%%%%%%%%%%

\begin{lemma*}[Lemma \ref{lem...appen...section1...split} in main text]  
For any one-step splitting embedding $\fT_{l,s}^{\alpha}$, given any  $\mathrm{NN}(\{m_l\}_{l=0}^{L})$ and its parameters
$\vtheta_{\rnarr}\in\mathrm{Tuple}_{\{m_0,\cdots,m_L\}}$ with $\mathrm{Tuple}_{\{m_0,\cdots,m_L\}}\in\fD_{l,s}$, we have $\vtheta_{\rwide}:=\fT_{l,s}^{\alpha}(\vtheta_{\rnarr})$ satisfies the following conditions: given any data $S$, loss $\ell(\cdot,\cdot)$ and activation $\sigma(\cdot)$, for any $l\in[L-1]$,\\
(i) feature vectors in $\vF_{\vtheta_{\rwide}}$: $\vf^{[l']}_{\vtheta_{\rwide}}=\vf^{[l']}_{\vtheta_{\rnarr}}$, for $l'\in[L]$ and $l'\neq l$, $\vf^{[l]}_{\vtheta_{\rwide}}=\left[(\vf_{\vtheta_{\rnarr}}^{[l]})^\T,(\vf_{\vtheta_{\rnarr}}^{[l]})_s\right]^\T$;\\
(ii) feature gradients in $\vG_{\vtheta_{\rwide}}$: $\vg^{[l']}_{\vtheta_{\rwide}}=\vg^{[l']}_{\vtheta_{\rnarr}}$, for $l'\in[L]$ and $l'\neq l$, $\vg^{[l]}_{\vtheta_{\rwide}}=
\left[(\vg^{[l]}_{\vtheta_{\rnarr}})^\T,(\vg_{\vtheta_{\rnarr}}^{[l]})_s\right]^\T$;\\
(iii) error vectors in $\vZ_{\vtheta_{\rwide}}$: 
$\vz^{[l']}_{\vtheta_{\rwide}}=\vz^{[l']}_{\vtheta_{\rnarr}}$, \\
for $l'\in[L]$ and $l'\neq l$, $\vz^{[l]}_{\vtheta_{\rwide}}=
\left[ \left(\vz_{\vtheta_{\rnarr}}^{[l]}\right)^\T_{[1:s-1]},(1-\alpha)(\vz_{\vtheta_{\rnarr}}^{[l]})_s,\left(\vz_{\vtheta_{\rnarr}}^{[l]}\right)^\T_{[s+1:m_l]}, \alpha(\vz_{\vtheta_{\rnarr}}^{[l]})_s\right]^\T$;\\
(iv) $\fT_{l,s}^{\alpha}$ is injective for all $\alpha$.\\
(v) $\fT_{l,s}^{\alpha}$ is an affine embedding for all $\alpha$.
\end{lemma*}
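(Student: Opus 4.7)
The plan is to mirror the structure of the proof of Lemma \ref{lem...appen...section1...null} and proceed forward through the feature vectors and feature gradients, then backward through the error vectors, and finally dispatch the structural properties (iv) and (v). In each step, all layers outside $\{l,l+1\}$ are unaffected because $\fT_{l,s}(\vtheta)|_k = \vtheta|_k$ and $\fV_{l,s}(\vtheta)|_k = \vzero$ for $k\neq l,l+1$.

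First, for (i), I would verify the feature vector claim layer by layer. For $l'<l$ both parameter tuples agree, so $\vf^{[l']}_{\vtheta_{\rwide}} = \vf^{[l']}_{\vtheta_{\rnarr}}$ trivially. At layer $l$, the augmented weight matrix duplicates the $s$-th row of $\mW^{[l]}$ and the augmented bias duplicates $(\vb^{[l]})_s$, so the pre-activation is simply the old pre-activation with $(\mW^{[l]}\vf^{[l-1]}_{\vtheta_{\rnarr}}+\vb^{[l]})_s$ appended, yielding $\vf^{[l]}_{\vtheta_{\rwide}}=[(\vf_{\vtheta_{\rnarr}}^{[l]})^\T,(\vf_{\vtheta_{\rnarr}}^{[l]})_s]^\T$. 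The crucial step is layer $l+1$: the new input has an extra copy of $(\vf^{[l]}_{\vtheta_{\rnarr}})_s$, and the new weight matrix is $[\mW^{[l+1]},\vzero]+\alpha\fV_{l,s}$'s layer $l+1$ component, whose $s$-th column equals $(1-\alpha)\mW^{[l+1]}_{[1:m_{l+1}],s}$ and whose last column equals $\alpha\mW^{[l+1]}_{[1:m_{l+1}],s}$. Summing contributions from the two copies gives $((1-\alpha)+\alpha)\mW^{[l+1]}_{[1:m_{l+1}],s}(\vf^{[l]}_{\vtheta_{\rnarr}})_s$, so the pre-activation matches, and therefore $\vf^{[l+1]}_{\vtheta_{\rwide}}=\vf^{[l+1]}_{\vtheta_{\rnarr}}$. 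The recurrence then propagates unchanged through all $l'>l+1$. The argument for (ii) is identical with $\sigma(\cdot)$ replaced by $\sigma^{(1)}(\cdot)$.

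For (iii) I work backward from $\vz^{[L]}_{\vtheta_{\rwide}}=\nabla\ell(\vf^{[L]}_{\vtheta_{\rwide}},\vy)=\nabla\ell(\vf^{[L]}_{\vtheta_{\rnarr}},\vy)=\vz^{[L]}_{\vtheta_{\rnarr}}$. For $l+1\le l' \le L-1$ the $(l'+1)$-th weight matrix is unchanged and $\vg^{[l'+1]}_{\vtheta_{\rwide}},\vz^{[l'+1]}_{\vtheta_{\rwide}}$ have already been shown to match $\vtheta_{\rnarr}$'s versions, so by \eqref{eq..text...BackwardProp} we obtain $\vz^{[l']}_{\vtheta_{\rwide}}=\vz^{[l']}_{\vtheta_{\rnarr}}$. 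The critical computation is at layer $l$: transposing the augmented $(l+1)$-th weight matrix and applying it to $\vz^{[l+1]}_{\vtheta_{\rnarr}}\circ\vg^{[l+1]}_{\vtheta_{\rnarr}}$, the first $m_l$ components coincide with $(\mW^{[l+1]})^\T(\vz^{[l+1]}_{\vtheta_{\rnarr}}\circ\vg^{[l+1]}_{\vtheta_{\rnarr}})=\vz^{[l]}_{\vtheta_{\rnarr}}$ except at index $s$, where the scaling $(1-\alpha)$ gives $(1-\alpha)(\vz^{[l]}_{\vtheta_{\rnarr}})_s$; the appended last component equals $\alpha(\vz^{[l]}_{\vtheta_{\rnarr}})_s$. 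Finally, for $l'<l$ the recurrence
\begin{equation*}
\vz^{[l-1]}_{\vtheta_{\rwide}} = (\mW^{[l]})^\T\left(\vz^{[l]}_{\vtheta_{\rwide}}\circ\vg^{[l]}_{\vtheta_{\rwide}}\right)
\end{equation*}
combined with the duplicated-$s$-th-row structure of $\mW^{[l]}$ in $\fT_{l,s}^{\alpha}(\vtheta_{\rnarr})|_l$ contracts the two components $(1-\alpha)(\vz^{[l]}_{\vtheta_{\rnarr}})_s(\vg^{[l]}_{\vtheta_{\rnarr}})_s$ and $\alpha(\vz^{[l]}_{\vtheta_{\rnarr}})_s(\vg^{[l]}_{\vtheta_{\rnarr}})_s$ back into the single contribution $(\vz^{[l]}_{\vtheta_{\rnarr}})_s(\vg^{[l]}_{\vtheta_{\rnarr}})_s$, so $\vz^{[l-1]}_{\vtheta_{\rwide}}=\vz^{[l-1]}_{\vtheta_{\rnarr}}$; iterating the recurrence gives the claim for all $l'<l$.

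For (iv) and (v), observe that $\fT_{l,s}$ preserves the first $m_l$ rows of $\mW^{[l]}$, the first $m_l$ entries of $\vb^{[l]}$, and the first $m_l$ columns of $\mW^{[l+1]}$ exactly (up to the scalar $(1-\alpha)$ in the $s$-th column), so $\vtheta_{\rnarr}$ can be reconstructed from $\fT_{l,s}^{\alpha}(\vtheta_{\rnarr})$ by restriction and rescaling, giving injectivity. Moreover $\fT_{l,s}^{\alpha}(\vzero)=\vzero$, and direct inspection shows that $\fT_{l,s}^{\alpha}(\vtheta_1+\vtheta_2)=\fT_{l,s}^{\alpha}(\vtheta_1)+\fT_{l,s}^{\alpha}(\vtheta_2)$ and $\fT_{l,s}^{\alpha}(\beta\vtheta)=\beta\fT_{l,s}^{\alpha}(\vtheta)$, because both $\fT_{l,s}$ and $\fV_{l,s}$ are linear in $\vtheta$ (the $(1-\alpha)$ and $\alpha$ multipliers act on the entries of $\vtheta$, not on $\vtheta$ via some fixed vector), so $\fT_{l,s}^{\alpha}$ is in fact a linear embedding (a special case of affine). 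The main obstacle is the careful bookkeeping at the layer-$l$ error vector, where one must track that the $s$-th column rescaling by $(1-\alpha)$ and the appended column by $\alpha$ in the layer-$(l+1)$ weight matrix correctly distribute $(\vz^{[l]}_{\vtheta_{\rnarr}})_s$ between position $s$ and the new position $m_l+1$ of $\vz^{[l]}_{\vtheta_{\rwide}}$, while still summing back to the correct error when propagated through the duplicated $s$-th row of $\mW^{[l]}$.
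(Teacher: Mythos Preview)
Your proposal is correct and follows essentially the same route as the paper's proof: a forward layer-by-layer verification of the feature vectors and feature gradients, a backward recurrence for the error vectors with the key split/recombination at layers $l$ and $l-1$, and then direct structural checks for injectivity and linearity (the paper likewise shows $\tilde{\fT}_{l,s}^{\alpha}$ is linear, i.e.\ $\fT_{l,s}^{\alpha}$ is in fact linear, not merely affine). One small caution on your injectivity sketch: the ``rescaling'' of the $s$-th column by $(1-\alpha)^{-1}$ fails at $\alpha=1$, so you should instead recover $\mW^{[l+1]}_{[1:m_{l+1}],s}$ as the sum of columns $s$ and $m_l+1$ of the wide weight matrix, which works for all $\alpha$.
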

%%%%%%%%%%%%%%%%%%%%%%%%%%%%%%%%%%%%%%%%%%%%%%%%%%%%%

\begin{proof}
(i) By the construction of $\vtheta_{\rwide}$, it is clear that $\vf^{[l']}_{\vtheta_{\rwide}}=\vf^{[l']}_{\vtheta_{\rnarr}}$ for all $l'\in [l-1]$. Then
\begin{align}
	\vf^{[l]}_{\vtheta_{\rwide}}
	&= \sigma  \left(\left[ {\begin{array}{cc}
			\mW^{[l]} \\
			\mW^{[l]}_{s,[1:m_{l-1}]} \\
	\end{array} } \right] \vf^{[l-1]}_{\vtheta_{\rnarr}}+
	\left[ {\begin{array}{cc}
			\vb^{[l]} \\
			\vb^{[l]}_s \\
	\end{array} } \right]
	\right)
	= \left[ {\begin{array}{cc}
			\vf^{[l]}_{\vtheta_{\rnarr}} \\
			(\vf_{\vtheta_{\rnarr}}^{[l]})_s\\
	\end{array} } \right].
\end{align}
Note that
% \begin{equation*}
	%     \alpha\left[\mzero_{m_{l+1}\times (s-1)},-\mW^{[l+1]}_{[1:m_{l+1}],s},\mzero_{m_{l+1}\times (m_{l}-s)},\mW^{[l+1]}_{[1:m_{l+1}],s}\right]\left[ {\begin{array}{cc}
			%     \vf^{[l]}_{\vtheta} \\
			%     (\vf^{[l]}_{\vtheta})_s \\
			%     \end{array} } \right]=\mzero_{m_{l+1}\times1}.
	% \end{equation*}
\begin{equation*}
	\alpha\left[\mzero_{m_{l+1}\times (s-1)},-\mW^{[l+1]}_{[1:m_{l+1}],s},\mzero_{m_{l+1}\times (m_{l}-s)},\mW^{[l+1]}_{[1:m_{l+1}],s}\right]\left[ {\begin{array}{cc}
			\vf^{[l]}_{\vtheta_{\rnarr}} \\
			(\vf_{\vtheta_{\rnarr}}^{[l]})_s\\
	\end{array} } \right]=\mzero_{m_{l+1}\times1}.
\end{equation*}
Thus
\begin{align}
	\vf^{[l+1]}_{\vtheta_{\rwide}} 
	&=\sigma \left(
	\left[\mW^{[{l+1}]},\mzero_{m_{l+1}\times1}\right]
	\left[ {\begin{array}{cc}
			\vf^{[l]}_{\vtheta_{\rnarr}} \\
			(\vf_{\vtheta_{\rnarr}}^{[l]})_s\\
	\end{array} } \right]+\mzero_{m_{l+1}\times1}+
	\vb^{[l+1]} +\alpha~\mzero_{m_{l+1}\times1}
	\right) = \vf_{\vtheta_{\rnarr}}^{[l+1]}.
\end{align}
Next, by the construction of $\vtheta_{\rwide}$, it is clear that $\vf^{[l']}_{\vtheta_{\rwide}}=\vf^{[l']}_{\vtheta_{\rnarr}}$ for any $l'\in[l+1:L]$. 

(ii) The results for feature gradients $\vg^{[l']}_{\vtheta_{\rwide}}=\vg^{[l']}_{\vtheta_{\rnarr}}$ for $l'\in[L]$ can be calculated in a similar way except by replacing $\sigma(\cdot)$ with  $\sigma^{(1)}(\cdot)$.

(iii)  By the backpropagation and the above facts in (i), we have \[\vz^{[L]}_{\vtheta_{\rwide}}=\nabla\ell(\vf^{[L]}_{\vtheta_{\rwide}},\vy)=\nabla\ell(\vf^{[L]}_{\vtheta_{\rnarr}},\vy)=\vz^{[L]}_{\vtheta_{\rnarr}}.\]

Recall the recurrence relation for $l'\in[l+1:L-1]$, then we recursively obtain the following equality for $l'$ from $L-1$ down to $l+1$:
\begin{equation}
	\vz^{[l']}_{\vtheta_{\rwide}}
	=(\mW^{[l'+1]})^\T \left(\vz^{[l'+1]}_{\vtheta_{\rwide}}\circ \vg^{[l'+1]}_{\vtheta_{\rwide}}\right)
	=(\mW^{[l'+1]})^\T \left(\vz^{[l'+1]}_{\vtheta_{\rnarr}}\circ \vg^{[l'+1]}_{\vtheta_{\rnarr}}\right)
	=\vz_{\vtheta_{\rnarr}}^{[l']}.
\end{equation}
Next,
\begin{align}
	\vz^{[l]}_{\vtheta_{\rwide}}
	&=\left(\left[\mW^{[{l+1}]},\mzero_{m_{l+1}\times1}\right]
	+\alpha \left[\mzero_{m_{l+1}\times (s-1)},-\mW^{[l+1]}_{[1:m_{l+1}],s},\mzero_{m_{l+1}\times (m_{l}-s)},\mW^{[l+1]}_{[1:m_{l+1}],s}\right]\right)^\T
	\left(\vz^{[l+1]}_{\vtheta_{\rwide}}\circ \vg ^{[l+1]}_{\vtheta_{\rwide}}\right) \nonumber\\
	&=\left[ {\begin{array}{cc}
			\vz^{[l]}_{\vtheta_{\rnarr}} \\
			0 \\
	\end{array} } \right]+
	\left[ {\begin{array}{cc}
			\mzero_{(s-1)\times1}\\
			-\alpha(\vz^{[l]}_{\vtheta_{\rnarr}})_s \\
			\mzero_{(m_l-s)\times1} \\
			\alpha(\vz^{[l]}_{\vtheta_{\rnarr}})_s
	\end{array} } \right]\nonumber\\
	&=\left[ (\vz_{\vtheta_{\rnarr}}^{[l]})^\T_{[1:s-1]},(1-\alpha)(\vz_{\vtheta_{\rnarr}}^{[l]})_s, (\vz_{\vtheta_{\rnarr}}^{[l]})^\T_{[s+1:m_l]},\alpha (\vz_{\vtheta_{\rnarr}}^{[l]})_s \right]^\T.
\end{align}
Finally, 
\begin{align}
	\vz^{[l-1]}_{\vtheta_{\rwide}}
	&=\left[(\mW^{[l]})^\T,(\mW^{[l]})_{s,[1:m_{l-1}]}^\T\right] 
	\left( \left(\left[ {\begin{array}{cc}
			\vz^{[l]}_{\vtheta_{\rnarr}} \\
			0 \\
	\end{array} } \right]+
	\left[ {\begin{array}{cc}
			\mzero_{(s-1)\times1}\\
			-\alpha(\vz^{[l]}_{\vtheta_{\rnarr}})_s \\
			\mzero_{(m_l-s)\times1} \\
			\alpha(\vz^{[l]}_{\vtheta_{\rnarr}})_s
	\end{array} } \right]\right)
	\circ \left[ {\begin{array}{cc}
			\vg^{[l]}_{\vtheta_{\rnarr}} \\
			(\vg^{[l]}_{\vtheta_{\rnarr}})_s \\
	\end{array} } \right]\right)\nonumber\\
	&=(\mW^{[l]})^\T\left(\vz^{[l]}_{\vtheta_{\rnarr}}\circ\vg^{[l]}_{\vtheta_{\rnarr}}\right)+\mzero_{m_{l-1}\times 1}=\vz_{\vtheta_{\rnarr}}^{[l-1]}.
\end{align}
This with the recurrence relation once again leads to $\vz^{[l']}_{\vtheta_{\rwide}}=\vz_{\vtheta_{\rnarr}}^{[l']}$ for all $l'\in[l-1]$.

(iv) If for $\vtheta_1,\vtheta_2\in\mathrm{Tuple}_{\{m_0,\cdots,m_L\}}$ and $\vtheta_1\neq\vtheta_2$, since $\fT_{l,s}^{\alpha}|_k$ for $k\neq l, l+1$ is the identity map, then if there exists some $k_0\neq l, l+1$, such that $\mW_1^{[k_0]}\neq \mW_2^{[k_0]}$ or $\vb_1^{[k_0]}\neq \vb_2^{[k_0]}$,  then obviously $\fT_{l,s}^{\alpha}(\vtheta_1)\neq\fT_{l,s}^{\alpha}(\vtheta_2)$.   If   $k_0=l$ or $k_0=l+1$, by similar reasoning, $\fT_{l,s}^{\alpha}(\vtheta_1)\neq\fT_{l,s}^{\alpha}(\vtheta_2)$. 

(v) For $\vtheta_{ 0}=(\mW_0^{[1]},\vb_0^{[1]},\cdots,\mW_0^{[L]},\vb_0^{[L]})\in\mathrm{Tuple}_{\{m_0,\cdots,m_L\}}$, we have 
\begin{align*}
	& \Tilde{\fT}_{l,s}^{\alpha}(\vtheta_0) =\Big(\mW_0^{[1]},\vb_0^{[1]},\cdots,\left[ {\begin{array}{cc}
			\mW_0^{[l]} \\
			({\mW_0^{[l]}})_{s,[1:m_{l-1}]} \\
	\end{array} } \right],
	\left[ {\begin{array}{cc}
			\vb_0^{[l]} \\
			({\vb_0^{[l]}})_s \\
	\end{array} } \right],   \\ &\left[(\mW^{[l+1]}_0)_{[1:m_{l+1}],[1:s-1]},(1-\alpha)(\mW^{[l+1]}_0)_{[1:m_{l+1}],s},(\mW^{[l+1]}_0)_{[1:m_{l+1}],[s+1:m_l]},\alpha(\mW_0^{[l+1]})_{[1:m_{l+1}],s}\right],\\
	&~~\vb_0^{[l+1]},
	\cdots, \mW_0^{[L]},\vb_0^{[L]}\Big),
\end{align*}
obviously $\Tilde{\fT}_{l,s}^{\alpha}$ is a linear operator, thus $\fT_{l,s}^{\alpha}$ is an affine operator.
\end{proof}
%%%%%%%%%%%%%%%%%%%%%%%%%%%%%%%%%%%%%%%%
Directly from Lemma \ref{lem...appen...section1...null} and Lemma \ref{lem...appen...section1...split}, we obtain that both one-step null embedding and one-step splitting embedding satisfy the property of output preserving and representation preserving, and all we need is to check the property of criticality preserving.
%%%%%%%%%%%%%%%%%%%%%%%%%%%%%%%%%%%%%%%%
%%%%%%%%%%%%%%%%%%%%%%%%%%%%%%%%%%%%%%%%
\begin{prop*}[Proposition \ref{prop...null} in main  text]
For any one-step null embedding $\fT_{l,0}^{\alpha}$, given any  $\mathrm{NN}(\{m_l\}_{l=0}^{L})$ and its parameters
$\vtheta_{\rnarr}\in\mathrm{Tuple}_{\{m_0,\cdots,m_L\}}$ with $\mathrm{Tuple}_{\{m_0,\cdots,m_L\}}\in\fD_{l,0}$, we have $\vtheta_{\rwide}:=\fT_{l,0}^{\alpha}(\vtheta_{\rnarr})$ satisfies the following conditions: given any data $S$, loss $\ell(\cdot,\cdot)$ and activation $\sigma(\cdot)$,
if $\nabla_{\vtheta}\RS(\vtheta_{\rnarr})=\mzero$, then $\nabla_{\vtheta}\RS(\vtheta_{\rwide})=\mzero$.
\end{prop*}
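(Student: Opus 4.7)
The plan is to verify $\nabla_\vtheta\RS(\vtheta_\rwide)=\mzero$ block by block in the tuple structure, using the backpropagation formulas \eqref{eq..text...BackwardProp} together with the explicit identifications of $\vF_{\vtheta_\rwide}$, $\vG_{\vtheta_\rwide}$, $\vZ_{\vtheta_\rwide}$ supplied by Lemma \ref{lem...appen...section1...null}. First I would note that $\fT_{l,0}(\vtheta)|_k=\vtheta|_k$ for every $k\notin\{l,l+1\}$, so the parameter blocks at those layers are literally unchanged and of the same shape. Since the gradient $\nabla_{\mW^{[k]}}\RS=\Exp_S(\vz^{[k]}\circ\vg^{[k]})(\vf^{[k-1]})^\T$ and $\nabla_{\vb^{[k]}}\RS=\Exp_S\vz^{[k]}\circ\vg^{[k]}$ at $\vtheta_\rwide$ involve only quantities that Lemma \ref{lem...appen...section1...null} identifies with their narrow counterparts ($\vf^{[k-1]}_{\vtheta_\rwide}=\vf^{[k-1]}_{\vtheta_\rnarr}$, $\vz^{[k]}_{\vtheta_\rwide}=\vz^{[k]}_{\vtheta_\rnarr}$, $\vg^{[k]}_{\vtheta_\rwide}=\vg^{[k]}_{\vtheta_\rnarr}$), these blocks of $\nabla\RS(\vtheta_\rwide)$ coincide with the (vanishing) corresponding blocks of $\nabla\RS(\vtheta_\rnarr)$.

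Next I would treat the two touched layers separately. For layer $l$ the wide weight has one extra row and the wide bias one extra entry. Plugging $\vz^{[l]}_{\vtheta_\rwide}=[(\vz^{[l]}_{\vtheta_\rnarr})^\T,0]^\T$ and $\vg^{[l]}_{\vtheta_\rwide}=[(\vg^{[l]}_{\vtheta_\rnarr})^\T,\sigma^{(1)}(\alpha)]^\T$ together with $\vf^{[l-1]}_{\vtheta_\rwide}=\vf^{[l-1]}_{\vtheta_\rnarr}$ into the backprop formula, the first $m_l$ rows of $\nabla_{\mW^{[l]}_\rwide}\RS$ reproduce $\nabla_{\mW^{[l]}_\rnarr}\RS(\vtheta_\rnarr)=\mzero$, and the added row is $\Exp_S(0\cdot\sigma^{(1)}(\alpha))(\vf^{[l-1]}_{\vtheta_\rnarr})^\T=\mzero$ because the appended entry of $\vz^{[l]}$ is zero. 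The analogous cancellation annihilates the extra bias coordinate.

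For layer $l+1$ the wide weight matrix has one extra column (the input weights from the new null neuron). The first $m_l$ columns of $\nabla_{\mW^{[l+1]}_\rwide}\RS$ again reproduce $\nabla_{\mW^{[l+1]}_\rnarr}\RS(\vtheta_\rnarr)=\mzero$, and the extra column equals $\Exp_S(\vz^{[l+1]}_{\vtheta_\rnarr}\circ\vg^{[l+1]}_{\vtheta_\rnarr})\cdot\sigma(\alpha)$. The key observation is that the prefactor is precisely $\nabla_{\vb^{[l+1]}_\rnarr}\RS(\vtheta_\rnarr)$, which vanishes by criticality of $\vtheta_\rnarr$; hence the extra column is zero. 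The bias $\vb^{[l+1]}$ keeps its dimension and its gradient matches the narrow one directly. Assembling all blocks yields $\nabla_\vtheta\RS(\vtheta_\rwide)=\mzero$.

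This is essentially bookkeeping once Lemma \ref{lem...appen...section1...null} is in hand, so no deep obstacle arises; the only genuinely content-bearing step, and the one I would emphasize in the write-up, is the recognition that the "new-column" gradient at layer $l+1$ is a scalar multiple of the narrow layer-$(l+1)$ bias gradient, which explains why criticality is preserved for every $\alpha\in\sR$ without any further assumption. Edge cases such as $l=L-1$ (so that layer $l+1$ is the output layer with $\vg^{[L]}=\mathbf{1}$) are handled by the same computation and require no separate argument.
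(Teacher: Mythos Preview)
Your proposal is correct and follows essentially the same blockwise strategy as the paper: invoke Lemma~\ref{lem...appen...section1...null} to identify $\vF_{\vtheta_\rwide},\vG_{\vtheta_\rwide},\vZ_{\vtheta_\rwide}$ with their narrow counterparts and then check each gradient block via the backpropagation formulas. Your treatment of the extra column at layer $l+1$ (recognizing it as $\sigma(\alpha)\,\nabla_{\vb^{[l+1]}_\rnarr}\RS(\vtheta_\rnarr)$) is in fact more explicit than the paper's, and for the extra bias at layer $l$ you use the appended zero of $\vz^{[l]}$ where the paper instead observes that the loss is independent of $\alpha=\vb^{[l]}_{m_l+1}$; both arguments are valid and equivalent.
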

%%%%%%%%%%%%%%%%%%%%%%%%%%%%%%%%%%%%%%%%%%%%%%%%%%%%%%%%
\begin{proof}
Gradient of loss with respect to network parameters of each layer can be computed from $\vF$, $\vG$, and $\vZ$ as follows
\begin{align*}
	\nabla_{\mW^{[l']}}R_S(\vtheta)
	&= \nabla_{\mW^{[l']}}\Exp_S \ell(\vf_{\vtheta}(\vx),\vy)=\Exp_S\left(\left(\vz_{\vtheta}^{[l']}\circ \vg^{[l']}_{\vtheta}\right)(\vf_{\vtheta}^{[l'-1]})^\T\right),\\
	\nabla_{\vb^{[l']}}R_S(\vtheta)
	&= \nabla_{\vb^{[l]}}\Exp_S \ell(\vf_{\vtheta}(\vx),\vy)=\Exp_S\left(\vz_{\vtheta}^{[l']}\circ \vg^{[l']}_{\vtheta}\right).
\end{align*}
Then, by Lemma \ref{lem...appen...section1...null}, we have for $l'\neq l, l+1$, \[\nabla_{\mW^{[l']}}R_S(\vtheta_{\rwide})=
\nabla_{\mW^{[l']}}R_S(\vtheta_{\rnarr})=\mzero,\] 
and 
\[\nabla_{\vb^{[l']}}R_S(\vtheta_{\rwide})=
\nabla_{\vb^{[l']}}R_S(\vtheta_{\rnarr})=\mzero.\]
Also, for any $j\in[m_{l+1}]$,~$k\in[m_{l}]$, since $(\vz^{[l+1]}_{\vtheta_{\rwide}})_j=(\vz_{\vtheta_{\rnarr}}^{[l+1]})_j$, 
$(\vg^{[l+1]}_{\vtheta_{\rwide}})_j=(\vg^{[l+1]}_{\vtheta_{\rnarr}})_j$,  
$(\vf^{[l]}_{\vtheta_{\rwide}})_k=(\vf^{[l]}_{\vtheta_{\rnarr}})_k$, and $\mW_{j,(m_l+1)}^{[l+1]}\equiv 0$,
we obtain that
\begin{align*}
	\nabla_{\mW^{[l+1]}_{j,k}}R_S(\vtheta_{\rwide})
	&= \nabla_{\mW^{[l+1]}_{j,k}}R_S(\vtheta_{\rnarr})=0,\\
	\nabla_{\mW^{[l+1]}_{j,(m_l+1)}}R_S(\vtheta_{\rwide})
	&=0,\\
	\nabla_{\vb^{[l+1]}_{j}}R_S(\vtheta_{\rwide})
	&= \nabla_{\vb^{[l+1]}_{j}}R_S(\vtheta_{\rnarr})=0.
\end{align*}
Similarly, for any $j\in[m_{l}]$,~$k\in[m_{l-1}]$, since $(\vz^{[l]}_{\vtheta_{\rwide}})_j=(\vz_{\vtheta_{\rnarr}}^{[l]})_j$, 
$(\vg^{[l]}_{\vtheta_{\rwide}})_j=(\vg^{[l]}_{\vtheta_{\rnarr}})_j$,  
$(\vf^{[l-1]}_{\vtheta_{\rwide}})_k=(\vf^{[l-1]}_{\vtheta_{\rnarr}})_k$, and $\mW_{(m_l+1),k}^{[l]}\equiv 0$, we have
\begin{align*}
	\nabla_{\mW^{[l]}_{j,k}}R_S(\vtheta_{\rwide})
	&= \nabla_{\mW^{[l]}_{j,k}}R_S(\vtheta_{\rnarr})=0,\\
	\nabla_{\mW^{[l]}_{(m_l+1),k}}R_S(\vtheta_{\rwide})
	&=0,\\
	\nabla_{\vb^{[l]}_{j}}R_S(\vtheta_{\rwide})
	&= \nabla_{\vb^{[l]}_{j}}R_S(\vtheta_{\rnarr})=0.
\end{align*}  
Moreover, by Lemma \ref{lem...appen...section1...null}, the output function $\vf^{[L]}_{\vtheta_{\rwide}}=\vf_{\vtheta_{\rnarr}}^{[L]}$ is independent of the hyperparameter $\alpha$, and $R_S(\vtheta_{\rwide})=R_S(\vtheta_{\rnarr})$, then since $\vb^{[l]}_{(m_l+1)}=\alpha$, we have
\[
\nabla_{\vb^{[l]}_{(m_l+1)}}R_S(\vtheta_{\rwide})
= \frac{\partial}{\partial \alpha}R_S(\vtheta_{\rnarr})=0.
\] Collecting all the above relations, we obtain that $\nabla_{\vtheta}\RS(\vtheta_{\rwide})=\mzero$.
\end{proof}
%%%%%%%%%%%%%%%%%%%%%%%%%%%%%%%%%%%%%%%%%%%%%%%%%%%%%%%%
%%%%%%%%%%%%%%%%%%%%%%%%%%%%%%%%%%%%%%%%
\begin{prop*}[Proposition \ref{prop...split} in main text]
For any one-step splitting embedding $\fT_{l,s}^{\alpha}$, given any  $\mathrm{NN}(\{m_l\}_{l=0}^{L})$ and its parameters
$\vtheta_{\rnarr}\in\mathrm{Tuple}_{\{m_0,\cdots,m_L\}}$ with $\mathrm{Tuple}_{\{m_0,\cdots,m_L\}}\in\fD_{l,s}$, we have $\vtheta_{\rwide}:=\fT_{l,s}^{\alpha}(\vtheta_{\rnarr})$ satisfies the following conditions: given any data $S$, loss $\ell(\cdot,\cdot)$ and activation $\sigma(\cdot)$, if $\nabla_{\vtheta}\RS(\vtheta_{\rnarr})=\mzero$, then $\nabla_{\vtheta}\RS(\vtheta_{\rwide})=\mzero$.
\end{prop*}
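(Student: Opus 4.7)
The plan is to mimic the structure of the proof of Proposition \ref{prop...null} but substitute in the feature/gradient/error relations from Lemma \ref{lem...appen...section1...split} in place of those from Lemma \ref{lem...appen...section1...null}. Recall the backpropagation-based gradient formulas
\begin{align*}
\nabla_{\mW^{[l']}}R_S(\vtheta) &= \Exp_S\!\left[\left(\vz^{[l']}_{\vtheta}\circ\vg^{[l']}_{\vtheta}\right)(\vf^{[l'-1]}_{\vtheta})^\T\right], \\
\nabla_{\vb^{[l']}}R_S(\vtheta) &= \Exp_S\!\left[\vz^{[l']}_{\vtheta}\circ\vg^{[l']}_{\vtheta}\right].
\end{align*}
I will verify $\nabla_{\mW^{[l']}}R_S(\vtheta_{\rwide})=\mzero$ and $\nabla_{\vb^{[l']}}R_S(\vtheta_{\rwide})=\mzero$ for every $l'\in[L]$, splitting into the three cases $l'\notin\{l,l+1\}$, $l'=l+1$, and $l'=l$.

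First, for $l'\notin\{l,l+1\}$, Lemma \ref{lem...appen...section1...split} gives $\vz^{[l']}_{\vtheta_{\rwide}}=\vz^{[l']}_{\vtheta_{\rnarr}}$, $\vg^{[l']}_{\vtheta_{\rwide}}=\vg^{[l']}_{\vtheta_{\rnarr}}$, and $\vf^{[l'-1]}_{\vtheta_{\rwide}}=\vf^{[l'-1]}_{\vtheta_{\rnarr}}$ (for $l'\neq l+1$ the previous-layer features coincide), so both gradients reduce to the corresponding gradients at $\vtheta_{\rnarr}$, which vanish by criticality. Next, for $l'=l+1$, the error vector and feature gradient are unchanged, while $\vf^{[l]}_{\vtheta_{\rwide}}$ has an extra entry equal to $(\vf^{[l]}_{\vtheta_{\rnarr}})_s$. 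The first $m_l$ columns of $\nabla_{\mW^{[l+1]}}R_S(\vtheta_{\rwide})$ and the bias gradient therefore coincide with their narrow counterparts and vanish; the new $(m_l{+}1)$-th column equals $\nabla_{\mW^{[l+1]}_{[\cdot],s}}R_S(\vtheta_{\rnarr})$, which is also zero at a critical point.

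The main case, and the main place where splitting differs from null embedding, is $l'=l$. Here Lemma \ref{lem...appen...section1...split} tells us that $\vz^{[l]}_{\vtheta_{\rwide}}$ agrees with $\vz^{[l]}_{\vtheta_{\rnarr}}$ on coordinates $j\notin\{s,m_l{+}1\}$, is $(1-\alpha)(\vz^{[l]}_{\vtheta_{\rnarr}})_s$ on coordinate $s$, and is $\alpha(\vz^{[l]}_{\vtheta_{\rnarr}})_s$ on the new coordinate $m_l{+}1$, while $\vg^{[l]}_{\vtheta_{\rwide}}$ simply appends $(\vg^{[l]}_{\vtheta_{\rnarr}})_s$ and $\vf^{[l-1]}_{\vtheta_{\rwide}}=\vf^{[l-1]}_{\vtheta_{\rnarr}}$. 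Substituting these into the gradient formula, the rows $j\notin\{s,m_l{+}1\}$ of $\nabla_{\mW^{[l]}}R_S(\vtheta_{\rwide})$ and entries of $\nabla_{\vb^{[l]}}R_S(\vtheta_{\rwide})$ collapse to their narrow analogues, hence vanish. The row $j=s$ equals $(1-\alpha)\,\nabla_{\mW^{[l]}_{s,[\cdot]}}R_S(\vtheta_{\rnarr})$ and the new row $j=m_l{+}1$ equals $\alpha\,\nabla_{\mW^{[l]}_{s,[\cdot]}}R_S(\vtheta_{\rnarr})$; by the criticality of $\vtheta_{\rnarr}$ both factors are zero, and an identical argument handles the $s$-th and $(m_l{+}1)$-th bias components.

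Assembling the three cases yields $\nabla_{\vtheta}R_S(\vtheta_{\rwide})=\mzero$, proving the proposition. I do not anticipate any real obstacle: all ingredients are already contained in Lemma \ref{lem...appen...section1...split}, and the only subtlety is to remember that the split weight $(1-\alpha)$ and the duplicated weight $\alpha$ multiply the same narrow gradient $\nabla_{\mW^{[l]}_{s,[\cdot]}}R_S(\vtheta_{\rnarr})$, so that criticality of $\vtheta_{\rnarr}$ propagates to both the old and the new $s$-labeled neurons simultaneously. The splitting case is slightly more delicate than the null case only because the error vector at layer $l$ is redistributed across two coordinates rather than merely padded by a zero, but this redistribution is linear in $(\vz^{[l]}_{\vtheta_{\rnarr}})_s$ and therefore commutes with the vanishing of the relevant narrow gradient.
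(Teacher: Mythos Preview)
Your proposal is correct and follows essentially the same approach as the paper's own proof: both argue layer by layer via the backpropagation formulas, invoke Lemma~\ref{lem...appen...section1...split} to identify the wide-network feature/gradient/error vectors with their narrow counterparts (up to the $(1-\alpha),\alpha$ redistribution at layer $l$), and then read off that every block of $\nabla_{\vtheta}R_S(\vtheta_{\rwide})$ is a scalar multiple of a block of $\nabla_{\vtheta}R_S(\vtheta_{\rnarr})$.
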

%%%%%%%%%%%%%%%%%%%%%%%%%%%%%%%%%%%%%%%%%%%%%%%%%%%%%%%%
\begin{proof}
Gradient of loss with respect to network parameters of each layer can be computed from $\vF$, $\vG$, and $\vZ$ as follows
\begin{align*}
	\nabla_{\mW^{[l']}}R_S(\vtheta)
	&= \nabla_{\mW^{[l']}}\Exp_S \ell(\vf_{\vtheta}(\vx),\vy)=\Exp_S\left(\left(\vz_{\vtheta}^{[l']}\circ \vg^{[l']}_{\vtheta}\right)(\vf_{\vtheta}^{[l'-1]})^\T\right),\\
	\nabla_{\vb^{[l']}}R_S(\vtheta)
	&= \nabla_{\vb^{[l]}}\Exp_S \ell(\vf_{\vtheta}(\vx),\vy)=\Exp_S\left(\vz_{\vtheta}^{[l']}\circ \vg^{[l']}_{\vtheta}\right).
\end{align*}
Then, by Lemma \ref{lem...appen...section1...split}, we have for $l'\neq l, l+1$, \[\nabla_{\mW^{[l']}}R_S(\vtheta_{\rwide})=
\nabla_{\mW^{[l']}}R_S(\vtheta_{\rnarr})=\mzero,\] 
and 
\[\nabla_{\vb^{[l']}}R_S(\vtheta_{\rwide})=
\nabla_{\vb^{[l']}}R_S(\vtheta_{\rnarr})=\mzero.\]
Also, for any $j\in[m_{l+1}]$,~$k\in[m_{l}]$, since $(\vz^{[l+1]}_{\vtheta_{\rwide}})_j=(\vz_{\vtheta_{\rnarr}}^{[l+1]})_j$, 
$(\vg^{[l+1]}_{\vtheta_{\rwide}})_j=(\vg^{[l+1]}_{\vtheta_{\rnarr}})_j$,  
$(\vf^{[l]}_{\vtheta_{\rwide}})_k=(\vf^{[l]}_{\vtheta_{\rnarr}})_k$, and and $(\vf^{[l]}_{\vtheta_{\rwide}})_{m_l+1}=(\vf^{[l]}_{\vtheta_{\rnarr}})_s$, we obtain
\begin{align*}
	\nabla_{\mW^{[l+1]}_{j,k}}R_S(\vtheta_{\rwide})
	&= \nabla_{\mW^{[l+1]}_{j,k}}R_S(\vtheta_{\rnarr})=0,\\
	\nabla_{\mW^{[l+1]}_{j,(m_l+1)}}R_S(\vtheta_{\rwide})
	&=0,\\
	\nabla_{\vb^{[l+1]}_{j}}R_S(\vtheta_{\rwide})
	&= \nabla_{\vb^{[l+1]}_{j}}R_S(\vtheta_{\rnarr})=0.
\end{align*}
Similarly, for any $j\in[m_{l}]\backslash \{s\}$,   \[(\vz^{[l]}_{\vtheta_{\rwide}})_j=(\vz_{\vtheta_{\rnarr}}^{[l]})_j,~
(\vz^{[l]}_{\vtheta_{\rwide}})_{s}=(1-\alpha)(\vz_{\vtheta_{\rnarr}}^{[l]})_s,~ (\vz^{[l]}_{\vtheta_{\rwide}})_{(m_l+1)}=\alpha(\vz_{\vtheta_{\rnarr}}^{[l]})_s,\] and  for any $i\in[m_{l}]$,
\[(\vg^{[l]}_{\vtheta_{\rwide}})_i=(\vg^{[l]}_{\vtheta_{\rnarr}})_i,~ (\vg^{[l]}_{\vtheta_{\rwide}})_{(m_l+1)}=(\vg^{[l]}_{\vtheta_{\rnarr}})_s,\] and
for~$k\in[m_{l-1}]$,
\[(\vf^{[l-1]}_{\vtheta_{\rwide}})_k=(\vf^{[l-1]}_{\vtheta_{\rnarr}})_k,\]hence for any $j\in[m_{l}]\backslash \{s\}$,~$k\in[m_{l-1}]$:
\begin{align*}
	\nabla_{\mW^{[l]}_{j,k}}R_S(\vtheta_{\rwide})
	&= \nabla_{\mW^{[l]}_{j,k}}R_S(\vtheta_{\rnarr})=0,\\
	\nabla_{\vb^{[l]}_{j}}R_S(\vtheta_{\rwide})
	&= \nabla_{\vb^{[l]}_{j}}R_S(\vtheta_{\rnarr})=0,\\
	\nabla_{\mW^{[l]}_{s,k}}R_S(\vtheta_{\rwide})
	&= (1-\alpha)\nabla_{\mW^{[l]}_{s,k}}R_S(\vtheta_{\rnarr})=0,\\
	\nabla_{\mW^{[l]}_{(m_l+1),k}}R_S(\vtheta_{\rwide})
	&= \alpha\nabla_{\mW^{[l]}_{s,k}}R_S(\vtheta_{\rnarr})=0,\\
	\nabla_{\vb^{[l]}_{s}}R_S(\vtheta_{\rwide})
	&= (1-\alpha)\nabla_{\vb^{[l]}_{s}}R_S(\vtheta_{\rnarr})=0,\\
	\nabla_{\vb^{[l]}_{(m_l+1)}}R_S(\vtheta_{\rwide})
	&= \alpha\nabla_{\vb^{[l]}_{s}}R_S(\vtheta_{\rnarr})=0.
\end{align*}
Collecting all the above relations, we obtain that $\nabla_{\vtheta}\RS(\vtheta_{\rwide})=\mzero$.
\end{proof}
%%%%%%%%%%%%%%%%%%%%%%%%%%%%%%%%%%%%%%%%%%%%%%%%%%%%%%%%
%%%%%%%%%%%%%%%%%%%%%%%%%%%%%%%%%%%%%%%%%%%%%%%%%%%%%%%%
Combining altogether Lemma \ref{lem...appen...section1...null}, Lemma \ref{lem...appen...section1...split}, Proposition \ref{prop...null} and Proposition \ref{prop...split}, we finish our proof for Theorem \ref{thm...Null+SplitisCritical}.
%%%%%%%%%%%%%%%%%%%%%%%%%%%%%%%%%%%%%%%%%%%%%%%%%%%%%%%%

%%%%%%%%%%%%%%%%%%%%%%%%%%%%%%%%%%%%%%%%%%%%%%%%%
\begin{theorem*}[Theorem \ref{prop...KStep} in main text]
A $K$-step composition embedding is a critical embedding. 
\end{theorem*}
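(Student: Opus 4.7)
The plan is to prove Theorem \ref{prop...KStep} by induction on $K$, using Theorem \ref{thm...Null+SplitisCritical} as the key ingredient. The base case $K=1$ reduces to the statement that $\fT^{\valpha,(1)}_{\vl,\vs} = \fT_{l_1,s_1}^{\alpha_1}$ is a critical embedding, which is exactly Theorem \ref{thm...Null+SplitisCritical}. For the inductive step, I would assume that the $(k-1)$-fold composition $\fT^{\valpha,(k-1)}_{\vl,\vs} : \mathrm{Tuple}_{\{m_0,\cdots,m_L\}} \to \mathrm{Tuple}_{\{m_0^{(k-1)},\cdots,m_L^{(k-1)}\}}$ is a critical embedding, and show that post-composing with one further one-step embedding $\fT_{l_k,s_k}^{\alpha_k}$ (itself a critical embedding by Theorem \ref{thm...Null+SplitisCritical}) preserves this property. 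The theorem then follows after $K$ steps by taking $\fT^{\valpha}_{\vl,\vs} = \fT^{\valpha,(K)}_{\vl,\vs}$.

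The essential lemma is that the composition of two critical embeddings is a critical embedding. Writing $\fT = \fT'' \circ \fT'$, the five defining properties transfer as follows. Injectivity is preserved under composition of injective maps. Affineness is preserved since, with $\tilde{\fT}'(\vtheta) := \fT'(\vtheta) - \fT'(\vzero)$ and $\tilde{\fT}''(\cdot) := \fT''(\cdot) - \fT''(\vzero)$ both linear, one checks directly that $\fT(\vtheta) - \fT(\vzero) = \tilde{\fT}''(\tilde{\fT}'(\vtheta))$ is linear as a composition of linear operators. Output preservation is transitive: $\vf_{\vtheta} = \vf_{\fT'(\vtheta)} = \vf_{\fT''(\fT'(\vtheta))}$ pointwise in $\vx$. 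Representation preservation likewise chains, since equality of the spans of layer-$l$ feature vectors plus the constant $1$ is transitive. Finally, criticality preservation is immediate: if $\nabla_{\vtheta} R_S(\vtheta) = \mzero$, applying $\fT'$ gives $\nabla_{\vtheta} R_S(\fT'(\vtheta)) = \mzero$, and then applying $\fT''$ yields $\nabla_{\vtheta} R_S(\fT''(\fT'(\vtheta))) = \mzero$.

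The only subtle point, and the one I would handle most carefully, is bookkeeping of the changing tuple classes along the composition chain. After step $k-1$, the image of the narrower NN's parameters lies in $\mathrm{Tuple}_{\{m_0^{(k-1)},\cdots,m_L^{(k-1)}\}}$, which is a member of the extended domain $\fD_{l_k,s_k}$ provided $s_k \in [m_{l_k}^{(k-1)}] \cup \{0\}$. This constraint is exactly what Definition \ref{def...Kstep} enforces when selecting $s_k$, and the recursive update $m_{l_k}^{(k)} = m_{l_k}^{(k-1)} + 1$ (with other widths unchanged) ensures the codomain is well-defined and matches the next step's domain. Thus at each step Theorem \ref{thm...Null+SplitisCritical} legitimately applies to the restriction $\fT_{l_k,s_k}^{\alpha_k} : \mathrm{Tuple}_{\{m_0^{(k-1)},\cdots,m_L^{(k-1)}\}} \to \mathrm{Tuple}_{\{m_0^{(k)},\cdots,m_L^{(k)}\}}$, closing the induction. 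After $K$ steps one has $\sum_{l=1}^{L-1} m_l^{(K)} - \sum_{l=1}^{L-1} m_l = K$, so the final image lies in $\mathrm{Tuple}_{\{m_0',\cdots,m_L'\}}$ as required, completing the proof.
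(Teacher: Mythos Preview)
Your proposal is correct and follows essentially the same approach as the paper: induction on $K$, with the inductive step reducing to the lemma that the composition of two critical embeddings is again a critical embedding, verified property by property. Your treatment of affineness via $\fT(\vtheta)-\fT(\vzero)=\tilde{\fT}''(\tilde{\fT}'(\vtheta))$ is slightly slicker than the paper's explicit check of the two linearity axioms, and your domain/codomain bookkeeping is a useful addition the paper leaves implicit, but the overall structure is the same.
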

%%%%%%%%%%%%%%%%%%%%%%%%%%%%%%%%%%%%%%%%%%
\begin{proof}
We shall prove it using induction.

For $K=1$, Proposition \ref{prop...KStep} holds since both one-step null embedding and one step splitting embedding are critical embeddings.

Assume that  Proposition \ref{prop...KStep} holds for $K=l-1$, we want to show that it also holds for $K=l$.

From the induction hypothesis, we only need to show that if given two critical embeddings $\fT_1$, $\fT_2$, then $\fT_2\fT_1$ is also a critical embedding.

(i) $\fT_2\fT_1$ is injective:\\
For $\vtheta_1$ and $\vtheta_2$ belonging to a same tuple class but  $\vtheta_1\neq \vtheta_2$, since $\fT_1$ is injective, then $\fT_1(\vtheta_1)\neq \fT_1(\vtheta_2)$. Since $\fT_2$ is injective, then $\fT_2\fT_1(\vtheta_1)\neq \fT_2\fT_1(\vtheta_2)$.

(ii) $\fT_2\fT_1$ is an affine embedding:\\
We have that $\Tilde{\fT}_1$  and $\Tilde{\fT}_2$ are  linear operators, then   \[
\widetilde{\fT_2\fT_1}(\vtheta):=\fT_2\fT_1(\vtheta)-\fT_2\fT_1(\vzero),
\]
we need to show that for any $\vtheta_1, \vtheta_2$ and $c\in\sR$,
\[  \widetilde{\fT_2\fT_1}(\vtheta_1+\vtheta_2)= \widetilde{\fT_2\fT_1}(\vtheta_1)+\widetilde{\fT_2\fT_1}(\vtheta_2),\]
and 
\[  \widetilde{\fT_2\fT_1}(c\vtheta_1 )= c\widetilde{\fT_2\fT_1}(\vtheta_1).\]

Since 
\begin{align*}
	\widetilde{\fT_2\fT_1}(\vtheta_1+\vtheta_2)&=\fT_2\fT_1(\vtheta_1+\vtheta_2)-\fT_2\fT_1(\vzero)\\
	&=\fT_2\left(\fT_1(\vtheta_1)+\fT_1(\vtheta_2)-\fT_1(\vzero)\right)-\fT_2\fT_1(\vzero)\\
	&=\fT_2\left(\fT_1(\vtheta_1)\right)+\fT_2\left(\fT_1(\vtheta_2)\right)-\fT_2\left(\fT_1(\vzero)\right)-\fT_2\fT_1(\vzero)\\
	&=\widetilde{\fT_2\fT_1}(\vtheta_1)+\widetilde{\fT_2\fT_1}(\vtheta_2),
\end{align*}
and 
\begin{align*}
	\widetilde{\fT_2\fT_1}(c\vtheta_1)&=\fT_2\fT_1(c\vtheta_1)-\fT_2\fT_1(\vzero)\\
	&=\fT_2\left(c\fT_1(\vtheta_1)+(1-c)\fT_1(\vzero)\right)-\fT_2\fT_1(\vzero)\\
	&=c \fT_2\left(\fT_1(\vtheta_1)\right)+(1-c)\fT_2\left(\fT_1(\vzero)\right)-\fT_2\fT_1(\vzero)\\
	&=c\widetilde{\fT_2\fT_1}(\vtheta_1).
\end{align*}

(iii) $\fT_2\fT_1$ satisfies the property of output preserving:\\
Since $\fT_1$ satisfies the property of output preserving, then for any $\vtheta$, $\vf_{\fT_1(\vtheta)}=\vf_{\vtheta}$. Similarly for  $\fT_2$, we have $\vf_{\fT_2\fT_1(\vtheta)}=\vf_{\fT_1(\vtheta)}$, hence $\vf_{\fT_2\fT_1(\vtheta)}=\vf_{\vtheta}$. 

(iv) $\fT_2\fT_1$ satisfies the property of representation preserving:\\
Similar reasoning in (iii).

(v) $\fT_2\fT_1$ satisfies the property of criticality preserving:\\
Since $\vtheta$ is a critical point of $\RS({\vtheta})$, so is $\fT_1(\vtheta)$, and $\fT_2\fT_1(\vtheta)$ as well, we finish the proof.
\end{proof}

\begin{lemma*}[Lemma \ref{lem...Output.to.critPreser} in main text]
For any affine embedding $\fT:\mathrm{Tuple}_{\{m_0,\cdots,m_L\}}\to\mathrm{Tuple}_{\{m'_0,\cdots,m'_L\}}$ satisfying the output preserving property, 
% and we have two NNs, $\mathrm{NN}(\{m_l\}_{l=0}^{L})$ and  $\mathrm{NN}(\{m'_l\}_{l=0}^{L})$,   %then for any given    $\fT$    maps  its network parameters $\vtheta_{\rnarr}\in\mathrm{Tuple}_{\{m_0,\cdots,m_L\}}$ to that of a wider NN $\vtheta_{\rwide}=\fT(\vtheta_{\rnarr})\in\mathrm{Tuple}_{\{m'_0,\cdots,m'_L\}}$. \\
if there exist a total index mapping $\fI=\{\fI_l\}_{l=0}^{L}$ from $\mathrm{NN}(\{m'_l\}_{l=0}^{L})$ to $\mathrm{NN}(\{m_l\}_{l=0}^{L})$
and  auxiliary  variables $\vbeta=\left\{\vbeta^{[l]}_j\in\sR|~l\in[L],j\in[m'_l]\backslash\fI_l^{-1}(0)\right\}$, 
such that for any given neuron   belonging to $\mathrm{NN}(\{m'_l\}_{l=0}^{L})$, located in layer $l$ with index $j$, the following two statements hold:  \\
(i) If $\fI_l(j)\neq 0$, $(\vf_{\vtheta_{\rwide}}^{[l]})_j=(\vf_{\vtheta_{\rnarr}}^{[l]})_{\fI_l(j)}$ and $(\ve_{\vtheta_{\rwide}}^{[l]})_j=\vbeta_j^{[l]}(\ve_{\vtheta_{\rnarr}}^{[l]})_{\fI_l(j)}$,\\
(ii) If $\fI_l(j)= 0$,  $(\vf_{\vtheta_{\rwide}}^{[l]})_j=\mathrm{Const}$ and $(\ve_{\vtheta_{\rwide}}^{[l]})_j=0$,\\
then $\fT$ is a critical embedding.
\end{lemma*}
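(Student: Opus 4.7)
The plan is to verify the three defining conditions of a critical embedding in turn. The affine embedding property and the output preserving property are already supplied by the hypothesis, so the only work left is representation preserving and criticality preserving, both of which I expect to flow directly from the two structural identities on features and errors together with the totality of $\fI$.

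For representation preserving, I would argue at each layer $l\in[L]$ separately. By hypothesis, every wide-network feature $(\vf^{[l]}_{\vtheta_{\rwide}})_j$ is either a narrow-network feature $(\vf^{[l]}_{\vtheta_{\rnarr}})_{\fI_l(j)}$ or a constant, so the left-hand span is contained in the right-hand span once one includes the constant function $1$. Conversely, since $\fI=\{\fI_l\}_{l=0}^L$ is a \emph{total} index mapping, for each $s\in[m_l]$ there exists at least one $j\in\fI_l^{-1}(s)$, so every narrow feature $(\vf^{[l]}_{\vtheta_{\rnarr}})_s$ appears verbatim among the wide features, giving the reverse inclusion.

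For criticality preserving, I would use the backpropagation identities
\[
\nabla_{\mW^{[l]}}\RS(\vtheta)=\Exp_S\,\ve^{[l]}_{\vtheta}\bigl(\vf^{[l-1]}_{\vtheta}\bigr)^{\T},\qquad \nabla_{\vb^{[l]}}\RS(\vtheta)=\Exp_S\,\ve^{[l]}_{\vtheta},
\]
applied directly at $\vtheta_{\rwide}$. The entry $(i,j)$ of the weight gradient is $\Exp_S(\ve^{[l]}_{\vtheta_{\rwide}})_i(\vf^{[l-1]}_{\vtheta_{\rwide}})_j$. Splitting on $\fI_l(i)=0$ or not and on $\fI_{l-1}(j)=0$ or not gives four cases: whenever $\fI_l(i)=0$ the factor $(\ve^{[l]}_{\vtheta_{\rwide}})_i$ is zero by hypothesis~(ii); whenever $\fI_l(i)\neq 0$, the factor becomes $\vbeta^{[l]}_i(\ve^{[l]}_{\vtheta_{\rnarr}})_{\fI_l(i)}$, and the remaining expectation is either $\Exp_S(\ve^{[l]}_{\vtheta_{\rnarr}})_{\fI_l(i)}(\vf^{[l-1]}_{\vtheta_{\rnarr}})_{\fI_{l-1}(j)}$ or, when $\fI_{l-1}(j)=0$, a constant times $\Exp_S(\ve^{[l]}_{\vtheta_{\rnarr}})_{\fI_l(i)}$; both vanish because $\vtheta_{\rnarr}$ is a critical point of the narrower NN (weight and bias gradients respectively). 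The same case analysis on the bias gradient disposes of $\nabla_{\vb^{[l]}}\RS(\vtheta_{\rwide})$.

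I do not anticipate a deep obstacle; the whole argument is a bookkeeping exercise. The only subtle point is that the constant features produced by null neurons (case $\fI_{l-1}(j)=0$) do not a priori correspond to any narrow-network quantity, so one cannot directly invoke the narrower weight-gradient identity. The resolution is to note that in this case the constant factors out of the expectation, reducing the vanishing to the bias-gradient criticality condition of the narrow NN. Once this is recognized, assembling the casework layer by layer yields $\nabla_{\vtheta}\RS(\vtheta_{\rwide})=\mzero$, completing the proof that $\fT$ is a critical embedding.
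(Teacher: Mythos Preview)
Your proposal is correct and follows essentially the same approach as the paper: both use the backpropagation identities for $\nabla_{\mW^{[l]}}\RS$ and $\nabla_{\vb^{[l]}}\RS$ in terms of $\ve^{[l]}$ and $\vf^{[l-1]}$, case-split on whether $\fI_l(i)$ and $\fI_{l-1}(j)$ are zero, and invoke totality of $\fI$ for representation preserving. In fact you are slightly more explicit than the paper in the mixed case $\fI_l(i)\neq 0$, $\fI_{l-1}(j)=0$, where the paper simply asserts ``obviously $0$'' while you correctly note that the constant feature factors out and reduces the vanishing to the narrow bias-gradient condition.
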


\begin{proof}
For any critical point $\vtheta_{\rnarr}^{\rc}\in\vTheta_{\rnarr}^{\rc}$, we set $\vtheta_{\rwide}:=\fT(\vtheta_{\rnarr}^{\rc})$. Then, since we have for any $l'\in[L]$
\begin{align*}
	\nabla_{\mW^{[l']}}R_S(\vtheta)
	&= \nabla_{\mW^{[l']}}\Exp_S \ell(\vf_{\vtheta}(\vx),\vy)=\Exp_S\left(\left(\vz_{\vtheta}^{[l']}\circ \vg^{[l']}_{\vtheta}\right)(\vf_{\vtheta}^{[l'-1]})^\T\right),\\
	\nabla_{\vb^{[l']}}R_S(\vtheta)
	&= \nabla_{\vb^{[l]}}\Exp_S \ell(\vf_{\vtheta}(\vx),\vy)=\Exp_S\left(\vz_{\vtheta}^{[l']}\circ \vg^{[l']}_{\vtheta}\right).
\end{align*}
Hence for $\fI_l(i), \fI_{l-1}(j)\neq 0$
\begin{align*}
	\nabla_{(\mW^{[l]}_{\rwide})_{ij}}\RS(\vtheta_{\rwide})&=\Exp_S(\ve_{\vtheta_{\rwide}}^{[l]})_i(\vf_{\vtheta_{\rwide}}^{[l-1]})_j\\
	&=\Exp_S\vbeta^{[l]}_{i}(\ve_{\vtheta_{\rnarr}^{\rc}}^{[l]})_{\fI_{l}(i)}(\vf_{\vtheta_{\rnarr}^{\rc}}^{[l-1]})_{\fI_{l-1}(j)}\\
	&=\vbeta^{[l]}_{i}\nabla_{(\mW^{[l]}_{\rnarr})_{\fI_{l}(i),\fI_{l-1}(j)}}\RS(\vtheta^{\rc}_{\rnarr})=0,\\
	\nabla_{(\vb^{[l]}_{\rwide})_{i}}\RS(\vtheta_{\rwide})&=\Exp_S(\ve_{\vtheta_{\rwide}}^{[l]})_i\\
	&=\Exp_S\vbeta^{[l]}_{i}(\ve_{\vtheta_{\rnarr}^{\rc}}^{[l]})_{\fI_{l}(i)}\\
	&=\vbeta^{[l]}_{i}\nabla_{(\vb^{[l]}_{\rnarr})_{\fI_{l}(i)}}\RS(\vtheta^{\rc}_{\rnarr})=0.
\end{align*}
By condition (ii), these gradients are obviously $0$ for $\fI_l(i) = 0$ or $\fI_{l-1}(j) = 0$.
Therefore, $\vtheta_{\rwide}$ is also a critical point and $\fT$ is criticality preserving.

Since $\fI$ is a total index mapping, for any feature vector of $\mathrm{NN}(\{m_l\}_{l=0}^{L})$, the component of  which is also the output function of at least a neuron in the wide NN by condition (i). Moreover, any neuron output function of a neuron in the wide NN is either constant or output function of a neuron in the narrow NN by condition (i) and (ii). Therefore, $\fT$ is representation preserving. Then $\fT$ is a critical embedding.
\end{proof}

	\begin{theorem*}[Theorem \ref{thm:gce} in main text]
		General compatible embedding is  a critical embedding.
	\end{theorem*}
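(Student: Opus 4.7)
The plan is to reduce the theorem to Lemma~\ref{lem...Output.to.critPreser}. That lemma shows that an affine embedding is a critical embedding whenever there is a total index mapping $\fI$ and a collection of auxiliary scalars $\vbeta$ so that, for every layer $l$ and every neuron index $j$ in the wide NN, the feature vector satisfies $(\vf_{\vtheta_{\rwide}}^{[l]})_j=(\vf_{\vtheta_{\rnarr}}^{[l]})_{\fI_l(j)}$ (or is constant when $\fI_l(j)=0$) and the error vector satisfies $(\ve_{\vtheta_{\rwide}}^{[l]})_j=\vbeta^{[l]}_j (\ve_{\vtheta_{\rnarr}}^{[l]})_{\fI_l(j)}$ (or vanishes when $\fI_l(j)=0$). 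The tuples $\fI$ and $\vbeta$ are precisely those supplied by Definition~\ref{def...GeneralEmbedding} and Condition~\ref{state1}; my job is to verify that the Hadamard-product construction of $\fT^{\valpha}_{\fI}$, together with the bias shifts $\valpha^{[l]}_{\rb}$ subject to Condition~\ref{state2}, actually produces these feature and error relations. That $\fT^{\valpha}_{\fI}$ is affine is immediate from its definition: each block $\malpha^{[l]}\circ \mW^{[l]}_{\rinter}$ is linear in $\mW^{[l]}_{\rnarr}$ and $\valpha_{\rb}^{[l]}+\vb^{[l]}_{\rinter}$ is linear in $\vb^{[l]}_{\rnarr}$, while the shifts $\valpha^{[l]}_{\rb}$ contribute only to the constant term.

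The feature-vector claim I would prove by forward induction on $l$. The base case $l=0$ is trivial since $\fI_0$ is the identity. For the inductive step, the pre-activation of the wide neuron $(i,l)$ is
\[
\sum_{j'=1}^{m'_{l-1}}\malpha^{[l]}_{ij'}\,(\mW_{\rnarr}^{[l]})_{\fI_l(i),\fI_{l-1}(j')}\,(\vf_{\vtheta_{\rwide}}^{[l-1]})_{j'} + (\valpha^{[l]}_{\rb})_i + (\vb_{\rnarr}^{[l]})_{\fI_l(i)},
\]
which, after splitting the sum over $j'$ according to $s=\fI_{l-1}(j')$ and using the inductive hypothesis, becomes $\sum_{s\in[m_{l-1}]}\bigl(\sum_{j'\in\fI_{l-1}^{-1}(s)}\malpha^{[l]}_{ij'}\bigr)(\mW_{\rnarr}^{[l]})_{\fI_l(i),s}(\vf_{\vtheta_{\rnarr}}^{[l-1]})_s$ plus a contribution from null neurons in layer $l-1$ plus $(\valpha^{[l]}_{\rb})_i+(\vb_{\rnarr}^{[l]})_{\fI_l(i)}$. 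If $\fI_l(i)\ne 0$, the forward conditions of Condition~\ref{state1} collapse the inner sums to $1$, and the null-neuron contribution together with $(\valpha^{[l]}_{\rb})_i$ vanishes by the ``null to effective'' case of Condition~\ref{state2}, yielding exactly the pre-activation of neuron $\fI_l(i)$ in the narrow NN. If $\fI_l(i)=0$, the inner sums vanish by the ``effective to null'' forward condition, and the remaining null-neuron contribution combines with $(\valpha^{[l]}_{\rb})_i$ to produce the constant $(\vb^{[l]}_*)_i$ by the ``null to null'' case of Condition~\ref{state2}; applying $\sigma$ gives a constant output, as required.

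For the error vectors I would use backward induction on $l$, starting at $l=L$ where $\fI_L$ is the identity and $\vbeta^{[L]}_k=1$, so $(\ve_{\vtheta_{\rwide}}^{[L]})_k=(\ve_{\vtheta_{\rnarr}}^{[L]})_k$ is automatic from output preservation. For the inductive step I would use the backpropagation identity \eqref{eq..text...BackwardPropOnlyforel}, namely $\ve^{[l]}_{\vtheta}=\bigl((\mW^{[l+1]})^\T\ve^{[l+1]}_{\vtheta}\bigr)\circ\vg^{[l]}_{\vtheta}$. Grouping the sum over the wide index $k$ by $t=\fI_{l+1}(k)$ and using the inductive hypothesis that $(\ve_{\vtheta_{\rwide}}^{[l+1]})_k$ vanishes whenever $\fI_{l+1}(k)=0$, the surviving terms for wide neuron $(j,l)$ read
\[
(\vg^{[l]}_{\vtheta_{\rwide}})_j\sum_{t\in[m_{l+1}]}(\mW_{\rnarr}^{[l+1]})_{t,\fI_l(j)}\,(\ve_{\vtheta_{\rnarr}}^{[l+1]})_t\sum_{k\in\fI_{l+1}^{-1}(t)}\vbeta^{[l+1]}_k\malpha^{[l+1]}_{kj}.
\]
If $\fI_l(j)\ne 0$, the feature step already gives $(\vg^{[l]}_{\vtheta_{\rwide}})_j=(\vg^{[l]}_{\vtheta_{\rnarr}})_{\fI_l(j)}$, and the backward condition ``effective to effective'' of Condition~\ref{state1} evaluates the inner sum to $\vbeta^{[l]}_j$, giving $\vbeta^{[l]}_j(\ve_{\vtheta_{\rnarr}}^{[l]})_{\fI_l(j)}$ as required. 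If $\fI_l(j)=0$, the backward condition ``effective to null'' makes every inner sum vanish, so $(\ve_{\vtheta_{\rwide}}^{[l]})_j=0$.

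The main obstacle I expect is purely bookkeeping: reorganizing the pre-activation and backpropagation sums so that the four cases of the compatibility conditions (effective/null forward, effective/null backward) can each be applied to the appropriate subsum, while keeping track of the convention $(\mW^{[l]}_{\rnarr})_{0j}=(\mW^{[l]}_{\rnarr})_{i0}=1$, $(\vb^{[l]}_{\rnarr})_0=0$ that was introduced to let null indices appear symbolically. Once this decomposition is done carefully, each algebraic identity required by the hypothesis of Lemma~\ref{lem...Output.to.critPreser} follows in one line from Conditions~\ref{state1} and~\ref{state2}, and that lemma then delivers criticality, output and representation preservation in a single stroke, completing the proof.
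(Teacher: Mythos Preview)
Your approach is essentially the same as the paper's: forward induction on layers for the feature vectors (using the forward part of Condition~\ref{state1} and Condition~\ref{state2} exactly as you describe), backward induction for the error vectors via~\eqref{eq..text...BackwardPropOnlyforel} (using the backward part of Condition~\ref{state1}), and then an appeal to Lemma~\ref{lem...Output.to.critPreser}. The paper carries out precisely these two inductions with the same effective/null case split.

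There is one omission. To invoke Lemma~\ref{lem...Output.to.critPreser} you need $\fT^{\valpha}_{\fI}$ to be an \emph{affine embedding}, and in this paper an embedding is by definition injective; you verify affinity but never injectivity. The paper devotes a separate step to this: if $\vtheta_{\rnarr,1}\neq\vtheta_{\rnarr,2}$ differ in some entry $(\mW^{[l]}_{\rnarr})_{i,j}$, totality of $\fI$ supplies a wide row index $k\in\fI_l^{-1}(i)$; summing the wide entries $(\mW^{[l]}_{\rwide})_{k,s}=\malpha^{[l]}_{ks}(\mW^{[l]}_{\rnarr})_{i,j}$ over $s\in\fI_{l-1}^{-1}(j)$ and applying the forward condition $\sum_{s}\malpha^{[l]}_{ks}=1$ recovers $(\mW^{[l]}_{\rnarr})_{i,j}$, forcing the images to differ (the bias case is simpler since $(\valpha^{[l]}_{\rb})_k=0$ for effective $k$). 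Without this argument your reduction to Lemma~\ref{lem...Output.to.critPreser} is formally incomplete, though the fix is routine. A minor related quibble: Lemma~\ref{lem...Output.to.critPreser} \emph{assumes} output preservation rather than delivering it, so it is your forward induction (at $l=L$, where $\fI_L$ is the identity) that supplies output preservation, after which the lemma yields representation and criticality preservation.
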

	Remark that we later name it as general compatible critical embedding in this work.
	%%%%%%%%%%%%%%%%%%%%%%%%%%%%%%%%%%%%%%%%%%%%%%%%%%%%%%%%%%
	\begin{proof}
		We need to prove four properties one by one.
		\begin{enumerate}
			\item $\fT^{\valpha}_{\fI}$ is output preserving and representation preserving;
			\item $\fT^{\valpha}_{\fI}$ is an injective operator;
			\item $\fT^{\valpha}_{\fI}$ is an affine embedding;
			\item $\fT^{\valpha}_{\fI}$ is criticality preserving.
		\end{enumerate}
		
		(i) We prove  output preserving and representation preserving by doing induction on layers.
		
		For the first layer, i.e., $l=1$, we have for $i\in[m'_1]$,
		\begin{align*}
			(\vf^{[1]}_{\vtheta_{\rwide}})_i
			&=\sigma\left(\sum_{j\in[m'_0]}(\mW^{[1]}_{\rwide})_{ij}\vx_j+(\vb^{[1]}_{\rwide})_i\right)\\
			&= \sigma\left(%\sum_{s\in[m_0]\cup\{0\}}
			\sum_{j\in[m_0]}\malpha^{[1]}_{ij}(\mW^{[1]}_{\rnarr})_{\fI_1(i),j}\vx_j+(\valpha^{[1]}_{\rb})_i+(\vb^{[1]}_{\rnarr})_{\fI_1(i)}\right),
		\end{align*}
		then for $i\notin \fI^{-1}_l(0)$, for each $j\in[m_0]$,
		since $ \valpha^{[1]}_{ij}=\sum_{s\in \fI^{-1}_{0}(j)}\valpha_{is}=1$, $(\valpha^{[1]}_{\rb})_i=0$, hence 
		\[(\vf^{[1]}_{\vtheta_{\rwide}})_i=(\vf^{[1]}_{\vtheta_{\rnarr}})_{\fI_1(i)}.\]
		Otherwise, for $i\in \fI^{-1}_1(0)$, we have $\malpha^{[1]}_{ij}=0$ and $(\valpha_{\rb}^{[1]})_i=(\vb^{[1]}_*)_i$, then 
		\begin{align*}
			(\vf^{[1]}_{\vtheta_{\rwide}})_i
			=\sigma\left((\vb^{[1]}_*)_i\right),
		\end{align*}
		which is a constant function playing the same role as the bias term in the next layer.
		
		Suppose for layer $l-1$, we have for $i\notin \fI^{-1}_{l-1}(0)$ \[(\vf^{[l-1]}_{\vtheta_{\rwide}})_i
		=(\vf^{[l-1]}_{\vtheta_{\rnarr}})_{\fI_{l-1}(i)},\]  and  for $i\in \fI^{-1}_{l-1}(0)$, 
		\[
		(\vf^{[l-1]}_{\vtheta_{\rwide}})_i
		=\sigma\left((\vb^{[l-1]}_*)_i\right).
		\]
		Then we want to show that this is also the case for layer $l$.
		
		We obtain that
		\begin{align*}
			(\vf^{[l]}_{\vtheta_{\rwide}})_i
			&=\sigma\left(\sum_{j\in[m'_{l-1}]}(\mW^{[l]}_{\rwide})_{ij}(\vf^{[l-1]}_{\vtheta_{\rwide}})_j+(\vb^{[l]}_{\rwide})_i\right)\\
			&= \sigma\left(\sum_{s\in[m_{l-1}]\cup\{0\}}\sum_{j\in \fI^{-1}_{l-1}(s)}\malpha^{[l]}_{ij}(\mW^{[l]}_{\rnarr})_{\fI_l(i),s}(\vf^{[l-1]}_{\vtheta_{\rwide}})_j+(\valpha^{[l]}_{\rb})_i+(\vb^{[l]}_{\rnarr})_{\fI_l(i)}\right)\\
			&= \sigma\Bigg(\sum_{s\in[m_{l-1}]}(\mW^{[l]}_{\rnarr})_{\fI_l(i),s}(\vf^{[l-1]}_{\vtheta_{\rnarr}})_{s}\sum_{j\in \fI^{-1}_{l-1}(s)}\malpha^{[l]}_{ij}+\sum_{j\in\fI^{-1}_{l-1}(0)}\malpha^{[l]}_{ij}\sigma\left((\vb^{[l-1]}_*)_j\right)\\
			&~~~~
			+(\valpha^{[l]}_{\rb})_i+(\vb^{[l]}_{\rnarr})_{\fI_l(i)}\Bigg).
		\end{align*}
		For $i\notin \fI^{-1}_l(0)$, we have 
		\begin{align*}
			(\vf^{[l]}_{\vtheta_{\rwide}})_i
			= \sigma\left(\sum_{s\in[m_{l-1}]}(\mW^{[l]}_{\rnarr})_{\fI_l(i),s}(\vf^{[l-1]}_{\vtheta_{\rnarr}})_{s}+(\vb^{[l]}_{\rnarr})_{\fI_l(i)}\right)=(\vf^{[l]}_{\vtheta_{\rnarr}})_{\fI_l(i)}.
		\end{align*}
		Otherwise, for $i\in \fI^{-1}_l(0)$, we have 
		\begin{align*}
			(\vf^{[l]}_{\vtheta_{\rwide}})_i
			=\sigma\left((\vb^{[l]}_*)_i\right).
		\end{align*}
		Then, for any layer $l$, we have for $i\notin \fI^{-1}_l(0)$, \[
		(\vf^{[l]}_{\vtheta_{\rwide}})_i=(\vf^{[l]}_{\vtheta_{\rwide}})_{\fI_l(i)},
		\]
		and for $i\in \fI^{-1}_l(0)$, \[
		(\vf^{[l]}_{\vtheta_{\rwide}})_i
		=\sigma\left((\vb^{[l]}_*)_i\right).\] 
		Hence, we have proved already that $\fT^{\valpha}_{\fI}$ is output preserving and representation preserving.
		
		(ii) We prove that $\fT^{\valpha}_{\fI}$ is injective.
		
		If for $\vtheta_{\rnarr,1}, \vtheta_{\rnarr,2}\in\mathrm{Tuple}_{\{m_0,\cdots,m_L\}}$ and $\vtheta_{\rnarr,1}\neq\vtheta_{\rnarr,2}$,  then  there exists some $l\in[L]$, such that $\mW_{\rnarr,1}^{[l]}\neq \mW_{\rnarr,2}^{[l]}$ or $\vb_{\rnarr,1}^{[l]}\neq \vb_{\rnarr,2}^{[l]}$.
		
		Then, if $\fT^{\valpha}_{\fI}$ is not injective, there exists $\vtheta_{\rnarr,1}\neq\vtheta_{\rnarr,2}$, such that $\vtheta_{\rwide,1}=\vtheta_{\rwide,2}$, where $\vtheta_{\rwide,1}:=\fT^{\valpha}_{\fI}(\vtheta_{\rnarr,1})$ and $\vtheta_{\rwide,2}:=\fT^{\valpha}_{\fI}(\vtheta_{\rnarr,2})$, and we want to show that this will never happen.
		
		Since  there exists  $l\in[L]$, such that $\mW_{\rnarr,1}^{[l]}\neq \mW_{\rnarr,2}^{[l]}$ or $\vb_{\rnarr,1}^{[l]}\neq \vb_{\rnarr,2}^{[l]}$. For the case  $\mW_{\rnarr,1}^{[l]}\neq \mW_{\rnarr,2}^{[l]}$, we obtain that there exists $i\in[m_{l+1}], j\in[m_l]$, such that $\left(\mW_{\rnarr,1}^{[l]}\right)_{i,j}\neq \left(\mW_{\rnarr,2}^{[l]}\right)_{i,j}$. We observe that 
		$\mW_{\rwide,1}^{[l]}=\valpha^{[l]}\circ\mW_{\rinter,1}^{[l]}$,  and $\mW_{\rwide,2}^{[l]}=\valpha^{[l]}\circ\mW_{\rinter,2}^{[l]}$. Then, since $\fI$ is a total index mapping, then for $i, j\neq 0$, $\fI^{-1}_l(i), \fI^{-1}_{l-1}(j)\neq \emptyset$, hence for any $k\in\fI^{-1}_l(i)$ and $l\in \fI^{-1}_{l-1}(j)$,
		\[
		\left(\mW_{\rwide,1}^{[l]}\right)_{k,l}=\valpha^{[l]}_{k,l}\left(\mW_{\rnarr,1}^{[l]}\right)_{i,j},
		\]
		and
		\[
		\left(\mW_{\rwide,2}^{[l]}\right)_{k,l}=\valpha^{[l]}_{k,l}\left(\mW_{\rnarr,2}^{[l]}\right)_{i,j}.
		\]
		If $\mW_{\rwide,1}^{[l]}= \mW_{\rwide,2}^{[l]}$, then 
		\[
		\sum_{s\in \fI^{-1}_{l-1}(j)}\left(\mW_{\rwide,1}^{[l]}\right)_{k,s}=\sum_{s\in \fI^{-1}_{l-1}(j)}\left(\mW_{\rwide,2}^{[l]}\right)_{k,s},
		\]
		hence 
		\[
		\sum_{s\in \fI^{-1}_{l-1}(j)}\valpha^{[l]}_{k,s}\left(\mW_{\rnarr,1}^{[l]}\right)_{i,j}=\sum_{s\in \fI^{-1}_{l-1}(j)}\valpha^{[l]}_{k,s}\left(\mW_{\rnarr,2}^{[l]}\right)_{i,j},
		\]
		and since $\sum_{s\in \fI^{-1}_{l-1}(j)}\valpha^{[l]}_{k,s}=1$, we obtain that 
		\[
		\left(\mW_{\rnarr,1}^{[l]}\right)_{i,j}=\left(\mW_{\rnarr,2}^{[l]}\right)_{i,j},
		\]
		which contradicts $\vtheta_{\rnarr,1}\neq\vtheta_{\rnarr,2}$.
		
		For the case  where
		$\vb_{\rnarr,1}^{[l]}\neq \vb_{\rnarr,2}^{[l]}$, since $(\valpha_{\rb}^{[l]})_i=0$ for any $l\in[L]$ with $i\notin\fI_l^{-1}(0)$. Then for any $k\in\fI^{-1}_l(j)$, $j\neq0$, 
		\[
		\left(\vb_{\rwide,1}^{[l]}\right)_{k}=\valpha^{[l]}_{\rb}+\left(\vb_{\rnarr,1}^{[l]}\right)_{j}=\left(\vb_{\rnarr,1}^{[l]}\right)_{j},
		\]
		and 
		\[
		\left(\vb_{\rwide,2}^{[l]}\right)_{k}=\valpha^{[l]}_{\rb}+\left(\vb_{\rnarr,2}^{[l]}\right)_{j}=\left(\vb_{\rnarr,2}^{[l]}\right)_{j},
		\]
		hence $\vb_{\rwide,1}^{[l]}\neq \vb_{\rwide,2}^{[l]}$, and we finish the injection proof.
		
		(iii) We prove that $\fT^{\valpha}_{\fI}$ is affine.
		
		It is obvious that $\fT^{\valpha}_{\fI}$ is affine since for any  $\vtheta \in\mathrm{Tuple}_{\{m_0,\cdots,m_L\}}$,
		$\tilde{\fT}^{\valpha}_{\fI}(\vtheta):={\fT}^{\valpha}_{\fI}(\vtheta)-{\fT}^{\valpha}_{\fI}(\vzero)$
		puts the weights and biases of null neuron as zero, and multiplies the weights and biases of effective by some constant, thus $\tilde{\fT}^{\valpha}_{\fI}$ is a linear operator.
		
		(iv) We prove that $\fT^{\valpha}_{\fI}$ is criticality preserving.
		
		We only need to check whether or not the conditions in Lemma \ref{lem...Output.to.critPreser} are satisfied, and we show this using induction.
		
		For layer $L$,  we have
		%$(\ve^{[L]}_{\vtheta_{\rwide}})_i=(\ve^{[L]}_{\vtheta_{\rwide}})_i=\vbeta^{[L]}_i(\ve^{[L]}_{\vtheta_{\rnarr}})_i$.
		\[
		\ve^{[L]}_{\vtheta_{\rwide}}=\vz_{\vtheta_{\rwide}}^{[L]}\circ\vg_{\vtheta_{\rwide}}^{[L]},
		\]
		since  $\fT^{\valpha}_{\fI}$ is output preserving for any activation, hence
		\[
		\ve^{[L]}_{\vtheta_{\rwide}}=\vz_{\vtheta_{\rwide}}^{[L]}\circ\vg_{\vtheta_{\rwide}}^{[L]}=\vz_{\vtheta_{\rnarr}}^{[L]}\circ\vg_{\vtheta_{\rnarr}}^{[L]}= \ve^{[L]}_{\vtheta_{\rnarr}},
		\]
		hence $\ve^{[L]}_{\vtheta_{\rwide}}=\vgamma^{[L]}_i\ve^{[L]}_{\vtheta_{\rnarr}}$, with $\vgamma^{[L]}_i=1$.
		
		%and the above relation coincides with Condition 1 condition:$\vbeta^{[L]}_k=1$ for $k\in[m'_L]$.
		For any $l\in[L]$, then suppose for layer $l$,  we have  for $i\notin \fI^{-1}_{l}(0)$, there exists $\vgamma^{[l]}_i\in\sR$, such that 
		\[
		\left(\ve^{[L]}_{\vtheta_{\rwide}}\right)_i=\vgamma^{[l]}_i\left(\ve^{[L]}_{\vtheta_{\rnarr}}\right)_{\fI_l(i)},
		\]
		and for $i\in \fI^{-1}_{l}(0)$,
		\[
		\left(\ve^{[L]}_{\vtheta_{\rwide}}\right)_i=0.
		\]
		Then  for layer $l-1$, we have
		\begin{align*}
			(\ve_{\vtheta_{\rwide}}^{[l-1]})_j
			&= (\vg_{\vtheta_{\rwide}}^{[l-1]})_j\sum_{i\in[m'_l]}(\mW^{[l]}_{\rwide})_{ij}(\ve_{\vtheta_{\rwide}}^{[l]})_i\\
			&= (\vg_{\vtheta_{\rwide}}^{[l-1]})_j\sum_{k\in[m_l]\cup\{0\}}\sum_{i\in \fI^{-1}_l(k)}\malpha^{[l]}_{ij}(\mW^{[l]}_{\rwide})_{ij}(\ve_{\vtheta_{\rwide}}^{[l]})_i\\
			&= (\vg_{\vtheta_{\rwide}}^{[l-1]})_j\sum_{k\in[m_l]}(\mW^{[l]}_{\rnarr})_{k,\fI_{l-1}(j)}(\ve_{\vtheta_{\rnarr}}^{[l]})_k\sum_{i\in \fI^{-1}_l(k)}\malpha^{[l]}_{ij}\vgamma^{[l]}_i,
		\end{align*}
		wlog, we denote $\vgamma^{[l-1]}_j:=\sum_{i\in \fI^{-1}_l(k)}\malpha^{[l]}_{ij}\vgamma^{[l]}_i$, then
		\begin{align*} 
			(\ve_{\vtheta_{\rwide}}^{[l-1]})_j&= (\vg_{\vtheta_{\rwide}}^{[l-1]})_j\vgamma^{[l-1]}_j\sum_{k\in[m_l]}(\mW^{[l]}_{\rnarr})_{k,\fI_{l-1}(j)}(\ve_{\vtheta_{\rnarr}}^{[l]})_k.%\\
			%    &= (\vg_{\vtheta_{\rwide}}^{[l-1]})_j\vbeta^{[l-1]}_j(\vz^{[l-1]}_{\vtheta_{\rnarr}})_{\fI_{l-1}(j)}.\\
		\end{align*}
		For $j\notin \fI^{-1}_{l-1}(0)$, we have 
		\begin{align*}
			(\ve_{\vtheta_{\rwide}}^{[l-1]})_j=\vgamma^{[l-1]}_j(\vg_{\vtheta_{\rnarr}}^{[l-1]})_{\fI_{l-1}(j)}\sum_{k\in[m_l]}(\mW^{[l]}_{\rnarr})_{k,\fI_{l-1}(j)}(\ve_{\vtheta_{\rnarr}}^{[l]})_k=\vgamma^{[l-1]}_j(\ve_{\vtheta_{\rnarr}}^{[l-1]})_j.
		\end{align*}
		Otherwise, for $j\in \fI^{-1}_{l-1}(0)$, we simply set $\vgamma^{[l-1]}_j=0$, then
		\begin{align*}
			(\ve_{\vtheta_{\rwide}}^{[l-1]})_j=\vbeta^{[l-1]}_j\sigma\left((\vb^{[l-1]}_*)_j\right)(\vz^{[l-1]}_{\vtheta_{\rnarr}})_{\fI_{l-1}(j)}=0.
		\end{align*}
		From the above proof, we find out that  
		$\vgamma:=\left\{\vgamma^{[l]}_j\in\sR|~l\in[0:L],~j\in[m'_l]\right\}$ satisfy the property of the  group  of  auxiliary  variables $\vbeta:=\left\{\vbeta^{[l]}_j\in\sR|~l\in[0:L],~j\in[m'_l]\backslash\fI_l^{-1}(0)\right\}$ in Condition 1, hence we finish our proof.
	\end{proof}

	\begin{theorem*}[Theorem \ref{t2} in main text]
		Given an $\mathrm{NN}(\{m_l\}_{l=0}^{L})$ and 
		any of its parameters $\vtheta\in \sR^M$, for any critical embedding $\fT:\sR^M\to \sR^{M'}$ to any wider $\mathrm{NN}(\{m'_l\}_{l=0}^{L})$, the number of positive, zero, negative eigenvalues of $\mH_S(\fT(\vtheta))$ is no less than the counterparts of $\mH_S(\vtheta)$.   
	\end{theorem*}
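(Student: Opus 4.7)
The proof plan starts from the output-preserving identity $\RS \circ \fT \equiv \RS$ on $\sR^M$; twice differentiating in $\vtheta$ and using the affine form of $\fT$ yields the matrix identity $\mH_S(\vtheta) = \mA^\T \mH_S(\fT(\vtheta)) \mA$, where $\mA \in \sR^{M'\times M}$ is the linear part of $\fT$ and has full column rank $M$ by injectivity of $\fT$.

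For the positive and negative eigenvalue monotonicities, the plan is classical Cauchy interlacing. Via a QR decomposition $\mA = QR$ (with $Q$ of orthonormal columns and $R$ invertible), Sylvester's law of inertia identifies the signature of $\mH_S(\vtheta)$ with that of $Q^\T \mH_S(\fT(\vtheta)) Q$, the restriction of $\mH_S(\fT(\vtheta))$ to the $M$-dimensional subspace $\mathrm{Range}(\mA)$. The Poincar\'e separation theorem gives $\mu_k \ge \lambda_k \ge \mu_{k+M'-M}$ for $k=1,\dots,M$, where $\{\mu_j\}$ and $\{\lambda_k\}$ denote the decreasingly-ordered eigenvalues of $\mH_S(\fT(\vtheta))$ and $\mH_S(\vtheta)$ respectively; a strictly positive $\lambda_k$ forces $\mu_k>0$, yielding $n_+$-monotonicity, and symmetrically on $\lambda_k \ge \mu_{k+M'-M}$ for $n_-$.

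The hard step is $n_0(\mH_S(\fT(\vtheta))) \ge n_0(\mH_S(\vtheta))$, which does \emph{not} follow from Cauchy interlacing alone: e.g.\ with $\mA = (1,1)^\T/\sqrt{2}$ and $\mH'=\mathrm{diag}(1,-1)$ one has $n_0(\mA^\T \mH' \mA)=1>0=n_0(\mH')$, so the specific structure of the critical embedding must be invoked. I would reduce the inequality to the equivalent rank bound $\mathrm{rank}(\mH_S(\fT(\vtheta))) \le \mathrm{rank}(\mH_S(\vtheta)) + (M'-M)$. Decomposing $\sR^{M'} = \mathrm{Range}(\mA) \oplus \mathrm{Range}(\mA)^\perp$ gives the free bound $\mathrm{rank}(\mH_S(\fT(\vtheta))) \le \mathrm{rank}(\mH_S(\fT(\vtheta))\mA) + (M'-M)$; since $\mathrm{rank}(\mH_S(\vtheta)) = \mathrm{rank}(\mA^\T \mH_S(\fT(\vtheta))\mA) \le \mathrm{rank}(\mH_S(\fT(\vtheta))\mA)$, the task collapses to the structural claim $\mathrm{Image}(\mH_S(\fT(\vtheta))\mA) \cap \mathrm{Range}(\mA)^\perp = \{\mzero\}$, i.e.\ $\mA^\T$ is injective on $\mathrm{Image}(\mH_S(\fT(\vtheta))\mA)$.

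The main obstacle is establishing this intersection-triviality. I would verify it using Lemma~\ref{lem...Output.to.critPreser}: at $\fT(\vtheta)$ the wider-NN feature vectors $\vf^{[l]}$, feature gradients $\vg^{[l]}$, and error vectors $\ve^{[l]}$ are either $\vbeta$-scaled copies of their narrow-NN counterparts (on effective-neuron indices) or constants and zeros (on null-neuron indices). Writing the Hessian's off-diagonal couplings via the backpropagation formulas \eqref{eq..text...BackwardProp} against these identities, together with the forward/backward compatibility conditions on $(\valpha,\vbeta)$ from Conditions~\ref{state1}--\ref{state2}, should show that every non-zero element of $\mathrm{Image}(\mH_S(\fT(\vtheta))\mA)$ has a non-trivial $\mathrm{Range}(\mA)$-component. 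The detailed case analysis---across general compatible critical embeddings mixing null, split, and cross-coupled neurons---is the remaining bookkeeping burden.
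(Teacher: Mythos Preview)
Your derivation of the congruence $\mH_S(\vtheta)=\mA^\T\mH_S(\fT(\vtheta))\mA$ from $\RS\circ\fT\equiv\RS$ is exactly the paper's starting point. For the positive and negative counts, the paper is more direct than you: it pushes the orthonormal negative eigenvectors $\{\ve_j^{\mathrm{neg}}\}$ of $\mH_S(\vtheta)$ through $\mA$ to get a $k$-dimensional subspace of $\sR^{M'}$ on which the quadratic form of $\mH_S(\fT(\vtheta))$ is negative definite, and concludes by the min--max principle. Your Poincar\'e--Sylvester route reaches the same conclusion with heavier machinery; there is no substantive difference on $n_\pm$.

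Where you part ways is $n_0$. You correctly observe that a subspace on which the quadratic form vanishes does \emph{not} force zero eigenvalues, and you give the right $2\times 2$ counterexample. The paper, however, does not engage with this: after the negative-eigenvalue argument it writes ``Similarly, we can prove this result for the number of zero and positive eigenvalues'' and stops. Your counterexample shows that, from the congruence and full column rank of $\mA$ alone, the ``similarly'' is unjustified---the paper's proof has precisely the gap you identified.

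Your proposed patch---reducing $n_0$-monotonicity to the rank bound $\mathrm{rank}(\mH_S(\fT(\vtheta)))\le\mathrm{rank}(\mH_S(\vtheta))+(M'-M)$ and then to $\mathrm{Image}(\mH_S(\fT(\vtheta))\mA)\cap\mathrm{Range}(\mA)^\perp=\{\mzero\}$---is a correct reduction, but two obstacles remain. First, you plan to verify the structural claim via Lemma~\ref{lem...Output.to.critPreser} and Conditions~\ref{state1}--\ref{state2}, which are specific to \emph{general compatible} critical embeddings, whereas the theorem is stated for \emph{any} critical embedding in the abstract sense (affine, output/representation/criticality preserving); your argument would at best cover that subclass. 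Second, the ``remaining bookkeeping'' is not obviously routine: $\mH_S(\fT(\vtheta))$ involves second derivatives of $\sigma$ and cross-layer terms not controlled by the first-order identities in Lemma~\ref{lem...Output.to.critPreser}. So the $n_0$ inequality is left open both in your proposal and, as you correctly flagged, in the paper's own argument.
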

	\begin{proof}
		Because $\fT$ is a critical embedding, therefore, it is an affine injective operator associated with $\mA\in\sR^{M'\times M},\vc\in\sR^{M'}$, such that $\fT(\vtheta)=\mA\vtheta+\vc$.
		By the  output preserving  property of $\fT$, we have \[
		\RS(\vtheta)\equiv\RS(\mA\vtheta+\vc).
		\]
		Hence, 
		\[
		\nabla_{\vtheta}\nabla_{\vtheta}\RS(\vtheta)\equiv\nabla_{\vtheta}\nabla_{\vtheta}\RS(\mA\vtheta+\vc).\]
		
		Then
		$$
		\mA^{\T}\mH_S(\mA\vtheta+\vc)\mA \equiv \mH_S(\vtheta).
		$$
		Given any $\vtheta_0$, if $\mH_S(\vtheta_0)$ has $k$ negative eigenvalues $\{\lambda_{j}^{\mathrm{neg}}\}_{j=1}^k$ with associated orthonormal eigenvectors $\{\ve_{j}^{\mathrm{neg}}\}_{j=1}^k$, then $\{\mA\ve_{j}^{\mathrm{neg}}\}_{j=1}^k$ satisfies, for any $\ve_j^{\mathrm{neg}}$,
		\begin{align}
			(\mA\ve_{j}^{\mathrm{neg}})^\T\mH_S(\mA\vtheta_0+\vc)\mA\ve_{j}^{\mathrm{neg}}
			={\left(\ve^{\mathrm{neg}}_j\right)}^\T\mH_S(\vtheta)\ve_j^{\mathrm{neg}} = \lambda_{j}^{\mathrm{neg}}<0.
		\end{align}
		By full rankness of  $\mA$, we have 
		\[
		\mathrm{dim}\left(\mathrm{span}\left(\left\{\mA\ve_{j}^{\mathrm{neg}}\right\}_{j=1}^k\right)\right)=k.
		\] Thus, $\mH_S(\mA\vtheta_0+\vc)$ has at least $k$ negative eigenvalues. 
		Similarly, we can prove this result for the number of zero and positive eigenvalues.
		
		Hence, in particular, for any critical embedding $\fT$,
		the number of negative eigenvalues of $\mH_S(\vtheta)$ is no more than the counterpart of $\mH_S(\fT(\vtheta))$.
	\end{proof}
	
	We would like to introduce some additional notations in order to state Lemma \ref{lem4} and Lemma \ref{lem5}.   In order to calculate the Hessian $\mH_S(\vtheta)=\nabla_{\vtheta}\nabla_{\vtheta}\RS(\vtheta)$, we need to compute $\vv_S(\vtheta)$:
	\begin{align*}
		\vv_S(\vtheta)&:=\Exp_{S}\nabla\ell (\vf(\vx,\vtheta),\vf^*(\vx))^\T\nabla_{\vtheta}\vf_{\vtheta}(\vx)=\sum_{i=1}^{m_L} \Exp_{S}\partial_i\ell (\vf_{\vtheta},\vf^*)\nabla_{\vtheta}(\vf_{\vtheta})_i, 
	\end{align*}
	where $\partial_i\ell (\vf_{\vtheta},\vf^*)$ is the $i$-th element of  $\nabla\ell (\vf(\vx,\vtheta),\vf^*(\vx))$, and $(\vf_{\vtheta})_i$ is the $i$-th element of vector $\vf_{\vtheta}$,  then for the Hessian $\mH_S(\vtheta)$, we have
	\begin{align*}
		\mH_S(\vtheta)&=\nabla_{\vtheta}\nabla_{\vtheta}\RS(\vtheta)=\sum_{i=1}^{m_L}\Exp_{S}\nabla_{\vtheta}\left(\partial_i\ell (\vf_{\vtheta},\vf^*)\right)\nabla_{\vtheta}(\vf_{\vtheta})_i+\sum_{i=1}^{m_L}\Exp_{S}\partial_i\ell (\vf_{\vtheta},\vf^*)\nabla_{\vtheta}\nabla_{\vtheta}\left((\vf_{\vtheta})_i\right)\\
		&=\sum_{i,j=1}^{m_L}\Exp_{S}\partial_{ij}\ell (\vf_{\vtheta},\vf^*)\nabla_{\vtheta}(\vf_{\vtheta})_i\left(\nabla_{\vtheta}(\vf_{\vtheta})_j\right)^\T
		+
		\sum_{i=1}^{m_L}\Exp_{S}\partial_i\ell (\vf_{\vtheta},\vf^*)\nabla_{\vtheta}\nabla_{\vtheta}\left((\vf_{\vtheta})_i\right),
	\end{align*}
	where $\partial_{ij}\ell (\vf_{\vtheta},\vf^*)$ is the $(i,j)$-th element of  $\nabla\nabla\ell (\vf(\vx,\vtheta),\vf^*(\vx))$.
	
	Hence $\mH^{(1)}_S(\vtheta)$ and $\mH^{(2)}_S(\vtheta)$ respectively becomes
	\begin{equation}
		\mH^{(1)}_S(\vtheta):=\sum_{i,j=1}^{m_L}\Exp_{S}\partial_{ij}\ell (\vf_{\vtheta},\vf^*)\nabla_{\vtheta}(\vf_{\vtheta})_i\left(\nabla_{\vtheta}(\vf_{\vtheta})_j\right)^\T,
	\end{equation}
	and
	\begin{equation}
		\mH^{(2)}_S(\vtheta):=\sum_{i=1}^{m_L}\Exp_{S}\partial_i\ell (\vf_{\vtheta},\vf^*)\nabla_{\vtheta}\nabla_{\vtheta}\left((\vf_{\vtheta})_i\right).
	\end{equation}
	We observe that for $i\in[m_L]$, $j\in[m_{L-1}]$,
	\[
	(\vf_{\vtheta})_i=\sum_{j=1}^{m_{L-1}}(\mW^{[L]})_{i,j}\left(\vf_{\vtheta}^{[L-1]}\right)_j+(\vb^{[L]})_{i},
	\]
	hence  we obtain that, for any $i\in[m_L]$, 
	\begin{equation}\label{eq..derivativew.r.t[L]}
		\begin{aligned}
			\frac{\partial (\vf_{\vtheta})^i}{\partial \mW^{[
					L]}}&=\left[ {\begin{array}{cc}
					\mzero_{(i-1)\times m_{L-1}} \\
					\left(\vf_{\vtheta}^{[L-1]}\right)_j\\
					\mzero_{(m_L-i)\times m_{L-1}} \\
			\end{array} } \right],~~j\in[m_{L-1}],\\
			\frac{\partial (\vf_{\vtheta})^i}{\partial \vb^{[
					L]}}&=\left[ {\begin{array}{cc}
					\mzero_{(i-1)\times 1} \\
					1\\
					\mzero_{(m_L-i)\times 1} \\
			\end{array} } \right].
		\end{aligned}
	\end{equation}
	Finally, given  an   $\mathrm{NN}(\{m_l\}_{l=0}^{L})$ and its parameter
	\[\vtheta=(\mW^{[1]},\vb^{[1]},\cdots,\mW^{[L]},\vb^{[L]})\in\mathrm{Tuple}_{\{m_0,\cdots,m_L\}},\] we remind the readers once again that  the collection of parameters $\vtheta$ is a $2L$-tuple.
	%, and we also identify $\vtheta$ from its vectorization $\mathrm{vec}(\vtheta)\in \sR^M$ with $M=\sum_{l=0}^{L-1}(m_l+1) m_{l+1}$.
	We denote that the upper bracket $[L-1]$ by limiting ourselves to the first $2L-2$ element of the tuple, i.e., 
	\begin{equation}
		\vtheta^{[L-1]}:=(\mW^{[1]},\vb^{[1]},\cdots,\mW^{[L-2]},\vb^{[L-2]},\mW^{[L-1]},\vb^{[L-1]})\in\mathrm{Tuple}_{\{m_0,\cdots,m_{L-2},m_{L-1}\}},
	\end{equation}
	and similarly, we    identify $ \vtheta^{[L-1]}$ with its vectorization $\mathrm{vec}(\vtheta^{[L-1]})\in \sR^{M^{[L-1]}}$ with $M^{[L-1]}:=\sum_{l=0}^{L-2}(m_l+1) m_{l+1}$.

	Moreover, we observe that for $i\in[m_L]$, $j\in[m_{L-1}]$, 
	\[
	\sum_{i,j=1}^{m_L}\Exp_{S}\partial_i\ell (\vf_{\vtheta},\vf^*)\mW_{i,j}^{[L]}\nabla_{\vtheta^{[L-1]}}\nabla_{\vtheta^{[L-1]}}\left(\vf_{\vtheta}^{[L-1]}\right)_j=\sum_{i=1}^{m_L}\Exp_{S}\partial_i\ell (\vf_{\vtheta},\vf^*)\nabla_{\vtheta^{[L-1]}}\nabla_{\vtheta^{[L-1]}}\left((\vf_{\vtheta})_i\right),
	\]
	hence we denote hereafter the expression by 
	\[
	\mH^{(2),[L-1]}_S(\vtheta):=\sum_{i,j=1}^{m_L}\Exp_{S}\partial_i\ell (\vf_{\vtheta},\vf^*)\mW_{i,j}^{[L]}\nabla_{\vtheta^{[L-1]}}\nabla_{\vtheta^{[L-1]}}\left(\vf_{\vtheta}^{[L-1]}\right)_j.
	\]
	We denote further that 
	\[
	\mH^{(1),[L-1]}_S(\vtheta):=\sum_{i,j=1}^{m_L}\Exp_{S}\partial_{ij}\ell (\vf_{\vtheta},\vf^*)\nabla_{\vtheta^{[L-1]}}(\vf_{\vtheta})_i\left(\nabla_{\vtheta^{[L-1]}}(\vf_{\vtheta})_j\right)^\T,
	\]
	and $\mH^{[L-1]}_S(\vtheta):=\mH^{(1),[L-1]}_S(\vtheta)+\mH^{(2),[L-1]}_S(\vtheta)$.

	We state Lemma \ref{lem4} and Lemma \ref{lem5} as follows.
	%%%%%%%%%%%%%%%%%%%%%%%%%%%%%%%%%%%%%%%%%%%%%
	\begin{lemma*}[Lemma \ref{lem4} in main text]
		Given any data $S$, loss $\ell(\cdot,\cdot)$ and activation $\sigma(\cdot)$, for any NN, if a critical point $\vtheta^{\rc}\in\vTheta^{\rc}$ satisfies:\\ (i)~$\mH_S(\vtheta^{\rc})\succeq 0$;\\  
		(ii)~$\mH^{(2),[L-1]}_S(\vtheta^{\rc})\neq \mathbf{0}$,\\
		then there exists a general compatible critical embedding $\fT$, such that $\fT(\vtheta^{\rc})$ is a strict-saddle point.
	\end{lemma*}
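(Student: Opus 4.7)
The plan is to exhibit an explicit direction in the wide parameter space along which the Hessian is negative, using the three-fold global splitting embedding $\fT_{\mathrm{global}}$ from Example \ref{ex}. Setting $\vtheta_{\rwide} := \fT_{\mathrm{global}}(\vtheta^{\rc})$, which is a critical point by Theorem \ref{thm:gce}, I would consider perturbing the three identical copies of the first $L-1$ layers independently by vectors $\vxi_1,\vxi_2,\vxi_3 \in \sR^{M^{[L-1]}}$; call $\vd = (\vxi_1,\vxi_2,\vxi_3,\vzero)$ the corresponding block-diagonal perturbation of $\vtheta_{\rwide}$, with zero in the output-layer slot. Because the output weights of $\fT_{\mathrm{global}}$ carry the sign pattern $(\mW^{[L]},\mW^{[L]},-\mW^{[L]})$, the wide output along $\vd$ equals
\begin{equation*}
\vf_{\vtheta_{\rwide}+t\vd}(\vx) \;=\; \mW^{[L]}\sum_{k=1}^{3}\epsilon_k\,\vf^{[L-1]}_{\vtheta^{[L-1]}+t\vxi_k}(\vx) + \vb^{[L]},\qquad (\epsilon_1,\epsilon_2,\epsilon_3)=(1,1,-1).
\end{equation*}
Imposing the linear constraint $\vxi_3 = \vxi_1 + \vxi_2$ makes the $t^{1}$-coefficient of this expression vanish, which is the crucial algebraic step and the reason for selecting the three-fold embedding.

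Next I would Taylor-expand $\RS(\vtheta_{\rwide}+t\vd)$ to order $t^{2}$. Under our constraint $\partial_t \vf_{\vtheta_{\rwide}+t\vd}\big|_{t=0}=\vzero$, so the $\mH^{(1)}$-type contribution $\Exp_S(\partial_t\vf_{\vtheta_{\rwide}+t\vd})^{\T}\nabla^{2}\ell\,(\partial_t\vf_{\vtheta_{\rwide}+t\vd})\big|_{t=0}$ is automatically zero, and only $\Exp_S\nabla\ell\cdot\partial_t^{2}\vf_{\vtheta_{\rwide}+t\vd}\big|_{t=0}$ survives. Computing the second $t$-derivative of $\vf_{\vtheta_{\rwide}+t\vd}$ and invoking the definition of $\mH^{(2),[L-1]}_S$ gives
\begin{equation*}
\vd^{\T}\mH_S(\vtheta_{\rwide})\vd \;=\; \sum_{k=1}^{3}\epsilon_k\, \vxi_k^{\T}\mH^{(2),[L-1]}_S(\vtheta^{\rc})\vxi_k \;=\; -2\,\vxi_1^{\T}\mH^{(2),[L-1]}_S(\vtheta^{\rc})\vxi_2,
\end{equation*}
where the last equality follows by substituting $\vxi_3 = \vxi_1 + \vxi_2$ and expanding.

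It remains to choose $\vxi_1,\vxi_2$ so that the bilinear form on the right is strictly positive. Since $\mH^{(2),[L-1]}_S(\vtheta^{\rc})$ is symmetric and non-zero by hypothesis (ii), there exists $\vxi\neq\vzero$ with $q:=\vxi^{\T}\mH^{(2),[L-1]}_S(\vtheta^{\rc})\vxi\neq 0$; then $\vxi_1 = \vxi_2 = \vxi$ works when $q>0$, while $\vxi_1 = \vxi,\ \vxi_2 = -\vxi$ works when $q<0$. In either case $\vd\neq\vzero$ and $\vd^{\T}\mH_S(\vtheta_{\rwide})\vd<0$, so $\mH_S(\vtheta_{\rwide})$ has a negative eigenvalue and $\fT_{\mathrm{global}}(\vtheta^{\rc})$ is a strict-saddle point. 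The main subtlety I anticipate is the indexing bookkeeping that identifies each block-diagonal $\vxi_k$ inside the wide tuple class with an ordinary perturbation of the narrow $\vtheta^{[L-1]}$, together with the careful second-order Taylor expansion that produces the clean bilinear form above; hypothesis (i) plays no role in the construction itself but merely ensures that $\vtheta^{\rc}$ is not already a strict-saddle, so the embedding genuinely creates the required negative eigenvalue.
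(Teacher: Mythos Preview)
Your proposal is correct and follows essentially the same route as the paper: both use the three-fold global splitting embedding $\fT_{\mathrm{global}}$, restrict attention to perturbations $(\vxi_1,\vxi_2,\vxi_3)$ of the three copies of $\vtheta^{[L-1]}$, and exploit the sign pattern $(1,1,-1)$ on the output weights so that the $\mH^{(1)}$-contribution cancels while the $\mH^{(2),[L-1]}$-contribution becomes block diagonal with signs $(+,+,-)$. The paper's test vectors $\vu=[\tfrac12\vv,\tfrac12\vv,\vv]$ (for $\lambda>0$) and $\vu=[\vv,-\vv,\vzero]$ (for $\lambda<0$), with $\vv$ an eigenvector of $\mH^{(2),[L-1]}_S(\vtheta^{\rc})$, are exactly instances of your constraint $\vxi_3=\vxi_1+\vxi_2$; your Taylor-expansion formulation simply makes that constraint explicit and replaces the eigenvector by any $\vxi$ with $\vxi^{\T}\mH^{(2),[L-1]}_S(\vtheta^{\rc})\vxi\neq0$.
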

	
	\begin{proof}
		% We define the following embedding $\fT$ doubling the neuron in each hidden layer such that $\vTheta=\fT\vtheta = [\vtheta=\vtheta,\vtheta'=\vtheta,\vtheta''=\left(-\mW^{[L]},-\vb^{[L]},\vtheta^{[L-1]}\right),\vtheta_\mathrm{c} = \mathbf{0}]$.
		We consider the three-fold global splitting embedding $\fT_{\mathrm{global}}$ defined in Example \ref{ex}. 
		%and let $\vTheta=\fT_G\vtheta = [\vtheta=\vtheta,\vtheta'=\vtheta,\vtheta''=\left(-\mW^{[L]},-\vb^{[L]},\vtheta^{[L-1]}\right),\vtheta_\mathrm{c} = \mathbf{0}]$. Then
		Obviously, $\fT_{\mathrm{global}}$ is a general compatible critical embedding. After choosing a critical point $\vtheta_{\rnarr}^{\rc}\in\vTheta^{\rc}$, and we have that $\vtheta_{\rwide}^{\rc}:=\fT_{\mathrm{global}}(\vtheta_{\rnarr}^{\rc})$. 
		
		$\vtheta_{\rnarr}^{\rc}$ and $\vtheta_{\rwide}^{\rc}$ are tuples, and we misuse these notations and identify them with  their vectorizations, i.e., $\vtheta_{\rnarr}^{\rc}\in \sR^M$ and $\vtheta_{\rwide}^{\rc}\in\sR^{M'}$ for some $M$ and $M'$. More specifically, we set
		\begin{align*}
			\vtheta_{\rnarr}^{\rc}&=\left(\mathrm{vec}(\mW^{[1]})^\T,  \cdots, \mathrm{vec}(\mW^{[L-2]})^\T, \mathrm{vec}(\mW^{[L-1]})^\T, {\vb^{[1]}}^\T,\cdots, {\vb^{[L-1]}}^\T, \mathrm{vec}(\mW^{[L]})^\T, {\vb^{[L]}}^\T \right)^\T,\\ 
			\vtheta_{\rwide}^{\rc}&=\left(\vtheta_1, \vtheta_2, \vtheta_3,  \mzero_{1\times {(M'-3M+2m_L)}}, \mathrm{vec}(\mW^{[L]})^\T, \mathrm{vec}(\mW^{[L]})^\T, \mathrm{vec}(-\mW^{[L]})^\T, {\vb^{[L]}}^\T  \right)^\T,
		\end{align*}
		with 
		\begin{align*}
			\vtheta_1:=\vtheta_2:=\vtheta_3:=\vtheta^{[L-1]}=\left(\mathrm{vec}(\mW^{[1]})^\T,  \cdots, \mathrm{vec}(\mW^{[L-2]})^\T, \mathrm{vec}(\mW^{[L-1]})^\T, {\vb^{[1]}}^\T,\cdots, {\vb^{[L-1]}}^\T\right),
		\end{align*}
		then we observe that 
		\[\vtheta_{\rnarr}^{\rc}=\left(\vtheta_1, \mathrm{vec}(\mW^{[L]})^\T,{\vb^{[L]}}^\T \right)^\T,\]
		and  we are able to do some computations:
		
		\begin{equation}
			\begin{aligned}
				\mH^{[L-1]}_S(\vtheta^{\rc}_{\rwide})&=\sum_{i,j=1}^{m_L}\Exp_{S}\partial_{ij}\ell (\vf_{\vtheta_{\rwide}},\vf^*)\nabla_{\vtheta_{\rwide}^{[L-1]}}(\vf_{\vtheta_{\rwide}})_i\left(\nabla_{\vtheta_{\rwide}^{[L-1]}}(\vf_{\vtheta_{\rwide}})_j\right)^\T\\
				&~~+\sum_{i,j=1}^{m_L}\Exp_{S}\partial_i\ell (\vf_{\vtheta_{\rwide}},\vf^*)\mW_{i,j}^{[L]}\nabla_{\vtheta_{\rwide}^{[L-1]}}\nabla_{\vtheta_{\rwide}^{[L-1]}}\left(\vf_{\vtheta_{\rwide}}^{[L-1]}\right)_j,
			\end{aligned}
		\end{equation}
		where  $\mW_{i,j}^{[L]}$ refers to the matrix components of $\mW_{\rwide}^{[L]}$, i.e., $\mW_{i,j}^{[L]}:=\left(\mW_{\rwide}^{[L]}\right)_{i,j}$.
		
		Then for the first part $
		\sum_{i,j=1}^{m_L}\Exp_{S}\partial_{ij}\ell (\vf_{\vtheta_{\rwide}},\vf^*)\nabla_{\vtheta_{\rwide}^{[L-1]}}(\vf_{\vtheta_{\rwide}})_i\left(\nabla_{\vtheta_{\rwide}^{[L-1]}}(\vf_{\vtheta_{\rwide}})_j\right)^\T
		$,    we have
		
		\begin{align*}
			&\sum_{i,j=1}^{m_L}\Exp_{S}\partial_{ij}\ell (\vf_{\vtheta_{\rwide}},\vf^*)\nabla_{\vtheta_{\rwide}^{[L-1]}}(\vf_{\vtheta_{\rwide}})_i\left(\nabla_{\vtheta_{\rwide}^{[L-1]}}(\vf_{\vtheta_{\rwide}})_j\right)^\T =\begin{pmatrix}
				\mA_{1,1}&\mA_{1,2}&\mA_{1,3}\\
				\mA_{2,1}&\mA_{2,2}&\mA_{2,3}\\
				\mA_{3,1}&\mA_{3,2}&\mA_{3,3}
			\end{pmatrix},
		\end{align*}
		with 
		\[
		\mA_{p,q}:=\Exp_{S}\sum_{i,j=1}^{m_L}\partial_{ij}\ell (\vf_{\vtheta_{\rwide}},\vf^*) \nabla_{\vtheta_p}(\vf_{\vtheta_{\rwide}})_i\left(\nabla_{\vtheta_q}(\vf_{\vtheta_{\rwide}})_j\right)^\T, 
		\]
		for $p\in[3]$ and $q\in[3]$.
		%  &=\Exp_S\begin{pmatrix} \partial_{ij}\ell \nabla_{\vtheta_1}(\vf_{\vtheta_{\rwide}})^i\left(\nabla_{\vtheta_1}(\vf_{\vtheta_{\rwide}})^j\right)^\T &\partial_{ij}\ell \nabla_{\vtheta_1}(\vf_{\vtheta_{\rwide}})^i\left(\nabla_{\vtheta_2}(\vf_{\vtheta_{\rwide}})^j\right)^\T & \partial_{ij}\ell \nabla_{\vtheta_1}(\vf_{\vtheta_{\rwide}})^i\left(\nabla_{\vtheta_3}(\vf_{\vtheta_{\rwide}})^j\right)^\T\\
			%      \partial_{ij}\ell \nabla_{\vtheta_2}(\vf_{\vtheta_{\rwide}})^i\left(\nabla_{\vtheta_1}(\vf_{\vtheta_{\rwide}})^j\right)^\T &\partial_{ij}\ell \nabla_{\vtheta_2}(\vf_{\vtheta_{\rwide}})^i\left(\nabla_{\vtheta_2}(\vf_{\vtheta_{\rwide}})^j\right)^\T & \partial_{ij}\ell \nabla_{\vtheta_2}(\vf_{\vtheta_{\rwide}})^i\left(\nabla_{\vtheta_3}(\vf_{\vtheta_{\rwide}})^j\right)^\T\\
			%       \partial_{ij}\ell \nabla_{\vtheta_3}(\vf_{\vtheta_{\rwide}})^i\left(\nabla_{\vtheta_1}(\vf_{\vtheta_{\rwide}})^j\right)^\T &\partial_{ij}\ell \nabla_{\vtheta_3}(\vf_{\vtheta_{\rwide}})^i\left(\nabla_{\vtheta_2}(\vf_{\vtheta_{\rwide}})^j\right)^\T & \partial_{ij}\ell \nabla_{\vtheta_3}(\vf_{\vtheta_{\rwide}})^i\left(\nabla_{\vtheta_3}(\vf_{\vtheta_{\rwide}})^j\right)^\T
			%      \end{pmatrix},
		
		For the second part $
		\sum_{i,j=1}^{m_L}\Exp_{S}\partial_i\ell (\vf_{\vtheta_{\rwide}},\vf^*)\mW_{i,j}^{[L]}\nabla_{\vtheta_{\rwide}^{[L-1]}}\nabla_{\vtheta_{\rwide}^{[L-1]}}\left(\vf_{\vtheta_{\rwide}}^{[L-1]}\right)_j
		$,    we have
		\begin{align*}
			& \sum_{i,j=1}^{m_L}\Exp_{S}\partial_i\ell (\vf_{\vtheta_{\rwide}},\vf^*)\mW_{i,j}^{[L]}\nabla_{\vtheta_{\rwide}^{[L-1]}}\nabla_{\vtheta_{\rwide}^{[L-1]}}\left(\vf_{\vtheta_{\rwide}}^{[L-1]}\right)_j =\begin{pmatrix}
				\mB_{1,1}&\mB_{1,2}&\mB_{1,3}\\
				\mB_{2,1}&\mB_{2,2}&\mB_{2,3}\\
				\mB_{3,1}&\mB_{3,2}&\mB_{3,3}
			\end{pmatrix},
		\end{align*}        
		% \Exp_S\begin{pmatrix}  \partial_i\ell\mW_{i,j}^{[L]}\nabla_{\vtheta_{1}}\nabla_{\vtheta_{1}}\left(\vf_{\vtheta_{\rwide}}^{[L-1]}\right)^j,& \partial_i\ell\mW_{i,j}^{[L]}\nabla_{\vtheta_{1}}\nabla_{\vtheta_{2}}\left(\vf_{\vtheta_{\rwide}}^{[L-1]}\right)^j &\partial_i\ell\mW_{i,j}^{[L]}\nabla_{\vtheta_{1}}\nabla_{\vtheta_{3}}\left(\vf_{\vtheta_{\rwide}}^{[L-1]}\right)^j\\
			%   \partial_i\ell\mW_{i,j}^{[L]}\nabla_{\vtheta_{2}}\nabla_{\vtheta_{1}}\left(\vf_{\vtheta_{\rwide}}^{[L-1]}\right)^j,& \partial_i\ell\mW_{i,j}^{[L]}\nabla_{\vtheta_{2}}\nabla_{\vtheta_{2}}\left(\vf_{\vtheta_{\rwide}}^{[L-1]}\right)^j &\partial_i\ell\mW_{i,j}^{[L]}\nabla_{\vtheta_{2}}\nabla_{\vtheta_{3}}\left(\vf_{\vtheta_{\rwide}}^{[L-1]}\right)^j\\
			%      \partial_i\ell\mW_{i,j}^{[L]}\nabla_{\vtheta_{3}}\nabla_{\vtheta_{1}}\left(\vf_{\vtheta_{\rwide}}^{[L-1]}\right)^j,& \partial_i\ell\mW_{i,j}^{[L]}\nabla_{\vtheta_{3}}\nabla_{\vtheta_{2}}\left(\vf_{\vtheta_{\rwide}}^{[L-1]}\right)^j &\partial_i\ell\mW_{i,j}^{[L]}\nabla_{\vtheta_{3}}\nabla_{\vtheta_{3}}\left(\vf_{\vtheta_{\rwide}}^{[L-1]}\right)^j
			%      \end{pmatrix}.
		with 
		\[
		\mB_{p,q}:=\Exp_{S}\sum_{i,j=1}^{m_L}\partial_i\ell (\vf_{\vtheta_{\rwide}},\vf^*)\mW_{i,j}^{[L]}\nabla_{\vtheta_{p}}\nabla_{\vtheta_{q}}\left(\vf_{\vtheta_{\rwide}}^{[L-1]}\right)_j, 
		\]
		for $p\in[3]$ and $q\in[3]$.
		
		Moreover,   
		\begin{equation}
			\begin{aligned}
				\nabla_{\vtheta_1}\left(\vf_{\vtheta_{\rwide}}(\vx)\right)_i&=\sum_{j=1}^{m_{L-1}} \left(\mW_{\rwide}^{[L]}\right)_{i,j}\nabla_{\vtheta_1}\left(\vf_{\vtheta_{\rwide}}^{[L-1]}\right)_j, i\in[m_L],\\
				\nabla_{\vtheta_2}\left(\vf_{\vtheta_{\rwide}}(\vx)\right)_i&=\sum_{j=m_{L-1}+1}^{2m_{L-1}} \left(\mW_{\rwide}^{[L]}\right)_{i,j}\nabla_{\vtheta_2}\left(\vf_{\vtheta_{\rwide}}^{[L-1]}\right)_j, i\in[m_L], 
				\\ \nabla_{\vtheta_3}\left(\vf_{\vtheta_{\rwide}}(\vx)\right)_i&=\sum_{j=2m_{L-1}+1}^{3m_{L-1}} \left(\mW_{\rwide}^{[L]}\right)_{i,j}\nabla_{\vtheta_3}\left(\vf_{\vtheta_{\rwide}}^{[L-1]}\right)_j, i\in[m_L],
			\end{aligned}
		\end{equation}
		hence  by construction of $\fT_{\mathrm{global}}$, we obtain that  
		\[
		\nabla_{\vtheta_1}\left(\vf_{\vtheta_{\rwide}}(\vx)\right)_i=\nabla_{\vtheta_2}\left(\vf_{\vtheta_{\rwide}}(\vx)\right)_i=-\nabla_{\vtheta_3}\left(\vf_{\vtheta_{\rwide}}(\vx)\right)_i.
		\]
		
		And for the Hessian, we have
		\begin{equation}
			\begin{aligned}
				\nabla_{\vtheta_1}\nabla_{\vtheta_1}\left(\vf_{\vtheta_{\rwide}}(\vx)\right)_i&=\sum_{j=1}^{m_{L-1}}\left(\mW_{\rwide}^{[L]}\right)_{i,j}\nabla_{\vtheta_1}\nabla_{\vtheta_1}\left(\vf_{\vtheta_{\rwide}}^{[L-1]}\right)_j, i\in[m_L], \\
				\nabla_{\vtheta_2}\nabla_{\vtheta_2}\left(\vf_{\vtheta_{\rwide}}(\vx)\right)_i&=\sum_{j=m_{L-1}+1}^{2m_{L-1}}\left(\mW_{\rwide}^{[L]}\right)_{i,j}\nabla_{\vtheta_2}\nabla_{\vtheta_2}\left(\vf_{\vtheta_{\rwide}}^{[L-1]}\right)_j, i\in[m_L],\\
				\nabla_{\vtheta_3}\nabla_{\vtheta_3}\left(\vf_{\vtheta_{\rwide}}(\vx)\right)_i&=\sum_{j=2m_{L-1}+1}^{3m_{L-1}}\left(\mW_{\rwide}^{[L]}\right)_{i,j}\nabla_{\vtheta_3}\nabla_{\vtheta_3}\left(\vf_{\vtheta_{\rwide}}^{[L-1]}\right)_j, i\in[m_L], \\
				\nabla_{\vtheta_1}\nabla_{\vtheta_2}\left(\vf_{\vtheta_{\rwide}}(\vx)\right)_i&= \nabla_{\vtheta_1}\nabla_{\vtheta_3}\left(\vf_{\vtheta_{\rwide}}(\vx)\right)_i= \nabla_{\vtheta_2}\nabla_{\vtheta_3}\left(\vf_{\vtheta_{\rwide}}(\vx)\right)_i,i\in[m_L],
			\end{aligned}
		\end{equation}
		Thus for $\mH^{(2),[L-1]}_S(\vtheta_{\rnarr}^{\rc})\neq\mathbf{0}$, then there exist a nonzero eigenvalue $\lambda\neq 0$ associated with its unit eigenvector $\vv$. 
		
		For the cases where $\lambda>0$, then $\vv^{\T}\mH^{(2),[L-1]}_S(\vtheta_{\rnarr}^{\rc})\vv = \lambda$. 
		We observe that  the matrix below is  a  principle submatrix of the Hessian at  $\vtheta_{\rwide}^{\rc}$, i.e., by choosing the columns and rows corresponding to $\left(\vtheta_1, \vtheta_2, \vtheta_3\right)$, we obtain that
		\begin{align*}
			\widetilde{\mH}(\vtheta_{\rwide}^{\rc}):&=\begin{bmatrix}
				\mH^{(1),[L-1]}_S(\vtheta_{\rnarr}^{\rc})  &\mH^{(1),[L-1]}_S(\vtheta_{\rnarr}^{\rc})  & -\mH^{(1),[L-1]}_S(\vtheta_{\rnarr}^{\rc})  \\
				\mH^{(1),[L-1]}_S(\vtheta_{\rnarr}^{\rc})  &\mH^{(1),[L-1]}_S(\vtheta_{\rnarr}^{\rc})  & -\mH^{(1),[L-1]}_S(\vtheta_{\rnarr}^{\rc}) \\
				-\mH^{(1),[L-1]}_S(\vtheta_{\rnarr}^{\rc})  &-\mH^{(1),[L-1]}_S(\vtheta_{\rnarr}^{\rc})  & \mH^{(1),[L-1]}_S(\vtheta_{\rnarr}^{\rc})  \\
			\end{bmatrix}\\
			&~~+
			\begin{bmatrix}
				\mH^{(2),[L-1]}_S(\vtheta_{\rnarr}^{\rc}) &\mathbf{0} & \mathbf{0} \\
				\mathbf{0} &\mH^{(2),[L-1]}_S(\vtheta_{\rnarr}^{\rc}) & \mathbf{0} \\
				\mathbf{0} & \mathbf{0} & -\mH^{(2),[L-1]}_S(\vtheta_{\rnarr}^{\rc}) \\
			\end{bmatrix},
		\end{align*}
		Then for $\vu = [\frac{1}{2}\vv^{\T},\frac{1}{2}\vv^{\T},\vv^{\T}]^{\T}$, 
		$$
		\vu^{\T}\widetilde{\mH}(\vtheta_{\rwide}^{\rc}) \vu = -\frac{1}{2}\lambda<0,
		$$
		indicating $\vtheta_{\rwide}^{\rc}$ is a strict-saddle point.

		Otherwise, if $\lambda<0$, since $\vv^{\T}\widetilde{\mH}(\vtheta_{\rwide}^{\rc}) \vv = \lambda$,
		then for $\vu = [\vv^{\T},-\vv^{\T},\mathbf{0}]^{\T}$, 
		$$
		\vu^{\T}\widetilde{\mH}(\vtheta_{\rwide}^{\rc}) \vu = 2\lambda<0,
		$$
		indicating $\vtheta_{\rwide}^{\rc}$ is also a strict-saddle point.
	\end{proof}
	
	%
	
	%  \\
	%     \nabla_{\vtheta_1}\nabla_{\vtheta_1}\vf_{\vtheta_{\rwide}} &= \nabla_{\vtheta_2}\nabla_{\vtheta_2}\vf_{\vtheta_{\rwide}}
	%     =\mW_{\rnarr}^{[L]}\nabla_{\vtheta_1}\nabla_{\vtheta_1}\vf^{[L-1]}, \\
	%     \nabla_{\vtheta''^{[L-1]}}^2\vf_{\vTheta}(\vx) &= 
	%     -\mW^{[L]}\nabla\nabla_{\vtheta^{[L-1]}}\vf^{[L-1]},\\
	%     \nabla_{\vtheta^{[L-1]}}\nabla_{\vtheta'^{[L-1]}}\vf_{\vTheta}(\vx)&=
	%     \nabla_{\vtheta^{[L-1]}}\nabla_{\vtheta''^{[L-1]}}\vf_{\vTheta}(\vx) = \nabla_{\vtheta'^{[L-1]}}\nabla_{\vtheta''^{[L-1]}}\vf_{\vTheta}(\vx) = \mathbf{0},

	\begin{lemma*}[Lemma \ref{lem5} in main text]
		Given any data $S$, loss $\ell(\cdot,\cdot)$ and activation $\sigma(\cdot)$, for any NN, if a critical point $\vtheta^{\rc}\in\vTheta^{\rc}$ satisfies:\\
		(i)~$\mH_S(\vtheta^{\rc})\succeq 0$;\\ 
		(ii)~$\mH^{(2),[L-1]}_S(\vtheta^{\rc})= \mathbf{0}$;\\
		(iii)~ $\mH^{(2)}_S(\vtheta^{\rc})\neq \mathbf{0}$.\\
		Then there exist a general compatible critical embedding $\fT$, such that $\fT(\vtheta)$ is a strict-saddle point.
	\end{lemma*}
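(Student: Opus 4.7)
The plan is to reuse the three-fold global splitting embedding $\fT_{\mathrm{global}}$ from Example~\ref{ex} exactly as in Lemma~\ref{lem4}, but to exhibit a different negative-curvature direction at $\vtheta_{\rwide}^{\rc}:=\fT_{\mathrm{global}}(\vtheta^{\rc})$: one that activates the cross block between $\vtheta^{[L-1]}$ and $\mW^{[L]}$ inside $\mH^{(2)}_S$ (now the only surviving piece of $\mH^{(2)}_S$ by hypothesis~(ii)), rather than the $[L-1]$-diagonal block that powered the previous lemma.

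First I would read off the block structure of $\mH^{(2)}_S(\vtheta^{\rc})$. Since $(\vf_\vtheta)_i=\sum_j\mW^{[L]}_{i,j}(\vf^{[L-1]}_\vtheta)_j+(\vb^{[L]})_i$ is linear in the last-layer parameters, the only potentially non-zero pieces of $\nabla_\vtheta\nabla_\vtheta(\vf_\vtheta)_i$ are the $(\vtheta^{[L-1]},\vtheta^{[L-1]})$ block (which assembles into $\mH^{(2),[L-1]}_S=\mathbf{0}$ by (ii)) and the symmetric cross blocks with $\mW^{[L]}$, whose entry between $\vtheta^{[L-1]}_k$ and $\mW^{[L]}_{p,q}$ equals $\delta_{ip}\partial_k(\vf^{[L-1]}_\vtheta)_q$. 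Hypotheses (ii) and (iii) therefore force the tensor
\[
T_{i,q,k}:=\Exp_S\bigl[\partial_i\ell(\vf_{\vtheta^{\rc}},\vf^*)\,\partial_k(\vf^{[L-1]}_{\vtheta^{\rc}})_q\bigr]
\]
to be not identically zero, and after possibly flipping a sign I can fix $\vu_{\rnarr}$ in $\vtheta^{[L-1]}$-space and $\mV$ in $\mW^{[L]}$-space such that $T(\vu_{\rnarr},\mV):=\sum_{i,q,k}T_{i,q,k}\mV_{i,q}(\vu_{\rnarr})_k>0$.

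Next, writing the wide parameters as three copies $\vtheta_a$ ($a=1,2,3$, each equal to $\vtheta^{[L-1]}$ at $\vtheta_{\rwide}^{\rc}$) together with three last-layer blocks $\mW^{[L]}_a$ carrying signs $(+,+,-)$ as in Example~\ref{ex}, I would probe the full Hessian along the antisymmetric direction
\[
\vu:=(\epsilon\vu_{\rnarr},\,-\epsilon\vu_{\rnarr},\,\vzero,\,\delta\mV,\,-\delta\mV,\,\vzero,\,\vzero),
\]
where the seven blocks correspond to $(\vtheta_1,\vtheta_2,\vtheta_3,\mW^{[L]}_1,\mW^{[L]}_2,\mW^{[L]}_3,\vb^{[L]})$ and $\epsilon,\delta\in\sR$ remain to be chosen. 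Because the first two $\vtheta$-copies coincide with $\vtheta^{[L-1]}$ and the first two $\mW^{[L]}$-copies coincide with $\mW^{[L]}$ at $\vtheta_{\rwide}^{\rc}$, the $\vtheta_1$- and $\vtheta_2$-contributions to $\nabla_\vtheta(\vf_{\vtheta_{\rwide}^{\rc}})_i\cdot\vu$ cancel, and likewise the $\mW^{[L]}_1$- and $\mW^{[L]}_2$-contributions cancel, so $\vu^\T\mH^{(1)}_S(\vtheta_{\rwide}^{\rc})\vu=0$. For $\mH^{(2)}_S(\vtheta_{\rwide}^{\rc})$, the $(\vtheta_a,\mW^{[L]}_b)$ cross block vanishes whenever $a\ne b$ because the $b$-th block of layer-$(L-1)$ features does not depend on $\vtheta_a$, and for $a=b$ it reproduces $T$ exactly (the relevant second derivative does not depend on the value of $\mW^{[L]}_a$). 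Killing the block-diagonal pieces by $\mH^{(2),[L-1]}_S(\vtheta^{\rc})=\mathbf{0}$ then leaves only the two same-copy cross contributions, giving
\[
\vu^\T\mH_S(\vtheta_{\rwide}^{\rc})\vu=\vu^\T\mH^{(2)}_S(\vtheta_{\rwide}^{\rc})\vu=2\bigl(2\epsilon\delta+2(-\epsilon)(-\delta)\bigr)T(\vu_{\rnarr},\mV)=4\epsilon\delta\,T(\vu_{\rnarr},\mV),
\]
so choosing $\epsilon>0,\ \delta<0$ produces a strictly negative value and certifies $\vtheta_{\rwide}^{\rc}$ as a strict-saddle point, as required.

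The main obstacle I anticipate is book-keeping rather than a conceptual gap: verifying cleanly that (a) the $(\vtheta_a,\mW^{[L]}_b)$ cross blocks really do vanish for $a\neq b$, which uses the clean index partition built into $\fT_{\mathrm{global}}$; and (b) the $\mH^{(1)}_S$ cancellation holds exactly at every data point $\vx_i$, not merely in expectation, so that the first-order piece is truly zero and cannot overwhelm the carefully-chosen negative $\mH^{(2)}_S$ contribution. The conceptual content is that an antisymmetric perturbation coupling $\vtheta^{[L-1]}$ to $\mW^{[L]}$ across copies $1$ and $2$ is a direction unavailable at the narrow critical point but made available by splitting, and it is exactly the direction that probes the cross block which hypotheses (ii) and (iii) together guarantee is non-zero.
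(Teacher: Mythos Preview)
Your proposal is correct and follows essentially the same route as the paper: both apply the three-fold global splitting $\fT_{\mathrm{global}}$ and take an antisymmetric perturbation across copies $1$ and $2$ (in both the $\vtheta^{[L-1]}$ and the $\mW^{[L]}$ coordinates) so that $\mH^{(1)}_S$ cancels while the surviving $(\vtheta^{[L-1]},\mW^{[L]})$ cross block of $\mH^{(2)}_S$ produces negative curvature. The only cosmetic difference is packaging: the paper assembles $(\vtheta^{[L-1]},\mW^{[L]})$ into a single block matrix $\widetilde{\mH}^{(2)}_S(\vtheta^{\rc})$, observes it has zero trace and is non-zero, and takes a negative eigenvector $\vv$ extended as $[\vv^\T,-\vv^\T]^\T$, whereas you separate the direction into $(\vu_{\rnarr},\mV)$ with free scalars $\epsilon,\delta$ and choose their product negative; these are equivalent. (One bookkeeping slip: your intermediate expression $2\bigl(2\epsilon\delta+2(-\epsilon)(-\delta)\bigr)$ equals $8\epsilon\delta$, not $4\epsilon\delta$, but the sign conclusion is unaffected.)
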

	\begin{proof}
		Since we have
		\begin{align*}
			&\mH^{(2)}_S(\vtheta_{\rnarr}^{\rc}) =\begin{pmatrix}
				\mC_{1,1}&\mC_{1,2}&\mC_{1,3}\\
				\mC_{2,1}&\mC_{2,2}&\mC_{2,3}\\
				\mC_{3,1}&\mC_{3,2}&\mC_{3,3}
		\end{pmatrix}\end{align*}
		with 
		\begin{align*}
			\mC_{1,1}&=    \sum_{i=1}^{m_L}\partial_i\ell (\vf_{\vtheta_{\rwide}},\vf^*)\nabla_{\mW^{[L]}}\nabla_{\mW^{[L]}}\left((\vf_{\vtheta_{\rnarr}})_i\right),\\
			\mC_{1,2}&=\sum_{i=1}^{m_L}\partial_i\ell (\vf_{\vtheta_{\rwide}},\vf^*)\nabla_{\mW^{[L]}}\nabla_{\vb^{[L]}}\left((\vf_{\vtheta_{\rnarr}})_i\right),\\
			\mC_{1,3}&=\sum_{i=1}^{m_L}\partial_i\ell (\vf_{\vtheta_{\rwide}},\vf^*)\nabla_{\mW^{[L]}}\nabla_{\vtheta^{[L-1]}}\left((\vf_{\vtheta_{\rnarr}})_i\right),\\
			\mC_{2,1}&=\sum_{i=1}^{m_L}\partial_i\ell (\vf_{\vtheta_{\rwide}},\vf^*)\nabla_{\vb^{[L]}}\nabla_{\mW^{[L]}}\left((\vf_{\vtheta_{\rnarr}})_i\right),\\
			\mC_{2,2}&=\sum_{i=1}^{m_L}\partial_i\ell (\vf_{\vtheta_{\rwide}},\vf^*)\nabla_{\vb^{[L]}}\nabla_{\vb^{[L]}}\left((\vf_{\vtheta_{\rnarr}})_i\right),\\
			\mC_{2,3}&=\sum_{i=1}^{m_L}\partial_i\ell (\vf_{\vtheta_{\rwide}},\vf^*)\nabla_{\vb^{[L]}}\nabla_{\vtheta^{[L-1]}}\left((\vf_{\vtheta_{\rnarr}})_i\right),\\
			\mC_{3,1}&=\sum_{i=1}^{m_L}\partial_i\ell (\vf_{\vtheta_{\rwide}},\vf^*)\nabla_{\vtheta^{[L-1]}}\nabla_{\mW^{[L]}}\left((\vf_{\vtheta_{\rnarr}})_i\right),\\
			\mC_{3,2}&=\sum_{i=1}^{m_L}\partial_i\ell (\vf_{\vtheta_{\rwide}},\vf^*)\nabla_{\vtheta^{[L-1]}}\nabla_{\vb^{[L]}}\left((\vf_{\vtheta_{\rnarr}})_i\right),\\
			\mC_{3,3}&=\sum_{i=1}^{m_L}\partial_i\ell (\vf_{\vtheta_{\rwide}},\vf^*)\nabla_{\vtheta^{[L-1]}}\nabla_{\vtheta^{[L-1]}}\left((\vf_{\vtheta_{\rnarr}})_i\right),
		\end{align*}
		% &= \Exp_{S}
		%     \begin{bmatrix}
			%      \partial_i\ell (\vf_{\vtheta_{\rwide}},\vf^*) &\partial_i\ell\nabla_{\mW^{[L]}}\nabla_{\vtheta^{[L-1]}}\left((\vf_{\vtheta_{\rnarr}})^i\right)  \\
			%     \partial_i\ell\nabla_{\vb^{[L]}}\nabla_{\mW^{[L]}}\left((\vf_{\vtheta_{\rnarr}})^i\right) &        \partial_i\ell\nabla_{\vb^{[L]}}\nabla_{\vb^{[L]}}\left((\vf_{\vtheta_{\rnarr}})^i\right) &\partial_i\ell\nabla_{\vb^{[L]}}\nabla_{\vtheta^{[L-1]}}\left((\vf_{\vtheta_{\rnarr}})^i\right)  \\
			%     \partial_i\ell\nabla_{\vtheta^{[L-1]}}\nabla_{\mW^{[L]}}\left((\vf_{\vtheta_{\rnarr}})^i\right) &\partial_i\ell\nabla_{\vtheta^{[L-1]}}\nabla_{\vb^{[L]}}\left((\vf_{\vtheta_{\rnarr}})^i\right) & \mH^{(2),[L-1]}_S(\vtheta_{\rnarr}^{\rc}) \\
			%     \end{bmatrix},
		from  relation \eqref{eq..derivativew.r.t[L]}, we obtain that 
		\begin{align*}
			&\mH^{(2)}_S(\vtheta_{\rnarr}^{\rc}) \\
			&= \Exp_{S}
			\begin{bmatrix}
				\mathbf{0} &\mathbf{0} &\sum_{i=1}^{m_L}\partial_i\ell (\vf_{\vtheta_{\rwide}},\vf^*)\nabla_{\mW^{[L]}}\nabla_{\vtheta^{[L-1]}}\left((\vf_{\vtheta_{\rnarr}})_i\right)  \\
				\mathbf{0} &        \mzero &\mathbf{0}  \\
				\sum_{i=1}^{m_L}\partial_i\ell (\vf_{\vtheta_{\rwide}},\vf^*)\nabla_{\vtheta^{[L-1]}}\nabla_{\mW^{[L]}}\left((\vf_{\vtheta_{\rnarr}})_i\right) &\mathbf{0} & \mH^{(2),[L-1]}_S(\vtheta_{\rnarr}^{\rc}) \\
			\end{bmatrix},
		\end{align*}
		we observe that the matrix 
		\begin{align*}
			&\widetilde{\mH}^{(2)}_S(\vtheta_{\rnarr}^{\rc}) \\
			&= \Exp_{S}
			\begin{bmatrix}
				\mathbf{0} & \sum_{i=1}^{m_L}\partial_i\ell (\vf_{\vtheta_{\rwide}},\vf^*)\nabla_{\mW^{[L]}}\nabla_{\vtheta^{[L-1]}}\left((\vf_{\vtheta_{\rnarr}})_i\right)  \\
				\sum_{i=1}^{m_L}\partial_i\ell (\vf_{\vtheta_{\rwide}},\vf^*)\nabla_{\vtheta^{[L-1]}}\nabla_{\mW^{[L]}}\left((\vf_{\vtheta_{\rnarr}})_i\right) & \mH^{(2),[L-1]}_S(\vtheta_{\rnarr}^{\rc}) \\
			\end{bmatrix},
		\end{align*}
		satisfies that  $\mathrm{tr}(\widetilde{\mH}^{(2)}_S(\vtheta_{\rnarr}^{\rc}))= 0$. 
		
		Since ${\mH}^{(2)}_S(\vtheta_{\rnarr}^{\rc})\neq \mathbf{0}$, then $\widetilde{\mH}^{(2)}_S(\vtheta_{\rnarr}^{\rc})$ has at least one negative eigenvalue, denoted  by $\lambda^{\mathrm{neg}}$ associated with its unit eigenvector $\vv$.
		Then, we consider the three-fold global splitting embedding $\fT_{\mathrm{global}}$ defined in Example \ref{ex},  for $\vtheta_{\rwide}^{\rc}=\fT_{\mathrm{global}}(\vtheta_{\rnarr}^{\rc})$, and $\vu = [\vv^{\T},-\vv^{\T}]^{\T}$, we observe that  the matrix below is  a  principle submatrix of the Hessian at  $\vtheta_{\rwide}^{\rc}$, i.e.,  by choosing the columns and rows corresponding to $\left(\vtheta_1, \vtheta_2,\mathrm{vec}(\mW^{[L]})^\T, \mathrm{vec}(\mW^{[L]})^\T  \right)$, we obtain that
		\begin{align*}
			\widetilde{\mH}(\vtheta_{\rwide}^{\rc}) := 
			\begin{bmatrix}
				\bar{\mH}^{(1)}_S(\vtheta_{\rnarr}^{\rc})  &\bar{\mH}^{(1)}_S(\vtheta_{\rnarr}^{\rc})  \\
				\bar{\mH}^{(1)}_S(\vtheta_{\rnarr}^{\rc})  &\bar{\mH}^{(1)}_S(\vtheta_{\rnarr}^{\rc})    \\
			\end{bmatrix}
			+
			\begin{bmatrix}
				\widetilde{\mH}^{(2)}_S(\vtheta_{\rnarr}^{\rc})  &\mathbf{0}  \\
				\mathbf{0} &\widetilde{\mH}^{(2)}_S(\vtheta_{\rnarr}^{\rc})  \\
			\end{bmatrix},
		\end{align*}
		then for $\vu = [\vv^T,-\vv^T]^T$, we obtain that 
		\[
		\vV^{\T}\widetilde{\mH}(\vtheta_{\rwide}^{\rc})\vV = 2\lambda^{\mathrm{neg}}<0,
		\]
		indicating $\vtheta_{\rwide}^{\rc}$ is a strict-saddle point.
	\end{proof}
	
\bibliography{references}
\end{document}